%% file: imcl_full.tex
\documentclass{article}

\usepackage{microtype}
\usepackage{graphicx}
\usepackage{subcaption}
\usepackage{booktabs} 
\usepackage{placeins}

\usepackage{hyperref}

\usepackage[preprint]{icml2026}

\usepackage{amsmath}
\usepackage{amssymb}
\usepackage{mathtools}
\usepackage{amsthm}
\usepackage{mathrsfs}

\usepackage[capitalize,noabbrev]{cleveref}

\theoremstyle{plain}
\newtheorem{theorem}{Theorem}[section]
\newtheorem{proposition}[theorem]{Proposition}
\newtheorem{lemma}[theorem]{Lemma}
\newtheorem{corollary}[theorem]{Corollary}
\theoremstyle{definition}

\newtheorem{assumption}[theorem]{Assumption}
\theoremstyle{remark}
\newtheorem{remark}[theorem]{Remark}

\newcommand{\R}{\mathbb{R}}
\newcommand{\cX}{\mathcal{X}}

\newcommand{\cW}{\mathcal{W}}
\newcommand{\cU}{\mathcal{U}}
\newcommand{\cD}{\mathcal{D}}
\newcommand{\cL}{\mathcal{L}}

\DeclareMathOperator{\E}{\mathbb{E}}

\DeclareMathOperator{\Cov}{\mathrm{Cov}}

\DeclareMathOperator{\Prb}{\mathbb{P}} 

\newcommand{\cN}{\mathcal{N}}

\usepackage[textsize=tiny]{todonotes}

\icmltitlerunning{Pruning at Initialisation through the lens of Graphon Limit}

\begin{document}

\twocolumn[
  \icmltitle{Pruning at Initialisation through the lens of Graphon Limit: \\
  Convergence, Expressivity, and Generalisation}



  \icmlsetsymbol{equal}{*}

  \begin{icmlauthorlist}
    \icmlauthor{Hoang Pham}{warwick}
    \icmlauthor{The-Anh Ta}{data61}
    \icmlauthor{Long Tran-Thanh}{warwick}
  \end{icmlauthorlist}

  \icmlaffiliation{warwick}{Department of Computer Science, University of Warwick}
  \icmlaffiliation{data61}{CSIRO’s Data61}

  \icmlcorrespondingauthor{Hoang Pham}{hoang.pham@warwick.ac.uk}

  \icmlkeywords{Machine Learning, ICML}

  \vskip 0.3in
]



\printAffiliationsAndNotice{}  

\begin{abstract}
Pruning at Initialisation methods discover sparse, trainable subnetworks before training, but their theoretical mechanisms remain elusive.
Existing analyses are often limited to finite-width statistics, lacking a rigorous characterisation of the global sparsity patterns that emerge as networks grow large.
In this work, we connect discrete pruning heuristics to graph limit theory via graphons, establishing the \emph{graphon limit of PaI masks}. 
We introduce a \emph{Factorised Saliency Model} that encompasses popular pruning criteria and prove that, under regularity conditions, the discrete masks generated by these algorithms converge to deterministic bipartite graphons.
This limit framework establishes a novel topological taxonomy for sparse networks: while unstructured methods (e.g., Random, Magnitude) converge to homogeneous graphons representing uniform connectivity, data-driven methods (e.g., SNIP, GraSP) converge to heterogeneous graphons that encode implicit feature selection. Leveraging this continuous characterisation, we derive two fundamental theoretical results: (i) a Universal Approximation Theorem for sparse networks that depends only on the intrinsic dimension of active coordinate subspaces; and (ii) a Graphon-NTK generalisation bound demonstrating how the limit graphon modulates the kernel geometry to align with informative features. Our results transform the study of sparse neural networks from combinatorial graph problems into a rigorous framework of continuous operators, offering a new mechanism for analysing expressivity and generalisation in sparse neural networks.

\end{abstract}

\section{Introduction}

Pruning at Initialisation (PaI) has established itself as a powerful paradigm for efficient deep learning, enabling the discovery of sparse, trainable subnetworks without the prohibitive cost of dense pre-training \cite{lee2018snip, Wang2020Picking, tanaka2020pruning, liu2022the, pham2023towards, xiang2025dpai}. 
Inspired by the Lottery Ticket Hypothesis (LTH) \cite{frankle2018lottery, frankle2020linear, chen2020lottery}, PaI methods such as SNIP \cite{lee2018snip}, GraSP \cite{Wang2020Picking}, and SynFlow \cite{tanaka2020pruning} utilise initialisation data to identify pruning masks for producing sparse subnetworks whose performance often matches dense networks. 
Despite their empirical success, the theoretical nature of the resulting sparse architectures remains opaque \cite{pham2025the}.
A fundamental question is whether the discrete, algorithmic masks produced by these methods converge to stable mathematical objects as network width grows, or whether they remain fundamentally chaotic.

Understanding this asymptotic behaviour is necessary for developing a rigorous theory of sparse deep learning. While classical results rely on fixed random structures (e.g., Erdős-Rényi graphs) \cite{pmlr-v206-yang23b}, modern PaI methods induce data-dependent correlations between input features and network weights. Recently, \cite{pham2025the} proposed the Graphon Limit Hypothesis, postulating that pruning masks converge to continuous graphon operators, which are limit objects of convergent graph sequences \citep{lovasz2006limits, borgs2008convergent}. In particular, graphon provides a natural framework for describing large-scale network topology. Then, \citet{pham2025the} leveraged this graphon existence assumption to analyse optimisation dynamics via a Graphon-NTK - a sparse network counterpart of the fundamental Neural Tangent Kernel (NTK) of dense models \cite{jacot2018neural, arora2019exact, lee2019wide}. 
While this is already a significant conceptual leap, this hypothesis remains an assumption rather than a derived consequence of the algorithms themselves. 
A rigorous characterisation of which specific graphons arise from pruning criteria, and how these limit objects govern the network's expressivity and generalisation capabilities beyond optimisation, is still missing.

In this paper, we fill this gap by providing the first rigorous derivation of the graphon limits for a broad class of popular PaI methods. 
By modelling saliency scores via a newly proposed Factorised Saliency Model, we prove that under standard infinite-width conditions, the discrete masks generated by these PaI algorithms converge in probability to deterministic bipartite graphons. This result validates the Graphon Limit Hypothesis \cite{pham2025the} and provides a mathematical foundation for treating sparse networks as discretisations of continuous operators.

Our analysis reveals that the limit graphon acts as a distinct asymptotic signature (in cut metric) for pruning algorithms, distinguishing methods whose masks appear similar at finite widths.
We show that methods like Random and Magnitude converge to \emph{homogeneous graphons} (constant functions). This implies that in the infinite-width limit, these methods essentially perform uniform downscaling, 
which does not induce data-dependent structure beyond uniform sparsity.
In contrast, gradient-based PaI methods like SNIP, or GraSP converge to \emph{heterogeneous graphons}. Specifically, 
the connectivity probability between layers is governed by input features $\varphi(u)$ and output neuron sensitivity $\psi(v)$.
This mathematically formalises how data-dependent pruning performs implicit feature selection, concentrating connectivity density on "active" coordinates while filtering out noise.

By lifting the analysis from discrete graphs to continuous graphons, we derive properties that are difficult to see in the finite regime.
To summarise, our main contributions are:
\vspace{-0.3cm}
\begin{itemize}
    \item \textbf{Rigorous convergence analysis of PaI masks towards graphon limit}: We prove that masks generated by factorised saliency scores converge in the cut-norm metric to a deterministic limit graphon (Theorem \ref{thm:graphon-convergence}). We further show that a mild approximation of the SNIP method strictly satisfies this factorisation model, which justifies the Graphon Limit Hypothesis for the data-dependent PaI methods.


    \item \textbf{Universal approximation theorem for sparse networks}: We address the expressivity of sparse networks in the high-dimensional limit. We prove that networks sparsified by graphons retain the universal approximation capabilities of dense networks, provided the target function lies within the "active" coordinate subspace discovered by the saliency scores. 

    \item \textbf{Graphon-modulated generalisation bounds}: We apply existing generalisation bounds \cite{cao2019generalization, arora2019fine} to the Graphon-NTK \cite{pham2025the} to quantify the impact of asymptotic sparsity. By decomposing the Graphon-NTK, we isolate the input-output path density term determined by the limit graphons. We demonstrate that while Random pruning yields uniform path density, structured methods concentrate path density on informative features, thereby improving kernel alignment and generalisation. 
\end{itemize}

\vspace{-0.3cm}
\section{Related works}

\paragraph{Sparse neural networks.} The observation that dense networks contain sparse, trainable subnetworks was popularised by the Lottery Ticket Hypothesis \cite{frankle2018lottery, chen2020lottery, frankle2020linear}. To identify these subnetworks without expensive pre-training, various "single-shot" criteria have been proposed. Magnitude Pruning \cite{han2015learning} selects weights based on absolute value, while data-driven methods like SNIP \cite{lee2018snip}, GraSP \cite{Wang2020Picking}, and SynFlow \cite{tanaka2020pruning} utilise gradient information to estimate connection sensitivity.
While these methods are empirically effective, theoretical understanding has lagged. Recent analyses have largely focused on whether PaI methods learn meaningful topology or merely robust layer-wise sparsity ratios \cite{frankle2021pruning, su2020sanity}. Our work resolves this ambiguity mathematically. By characterising the asymptotic graphon limit of these masks, we provide a theoretical taxonomy that distinguishes between ``unstructured" methods (like Magnitude, which converges to a constant graphon) and ``structured" methods (like SNIP, GraSP, which converge to heterogeneous graphons).

\vspace{-0.3cm}
\paragraph{Neural tangent kernels and generalisation error bounds.} 
The behaviour of neural networks as widths $n \to \infty$ is typically analysed through the lens of the Neural Tangent Kernel (NTK) \cite{jacot2018neural, arora2019exact, lee2019wide, lee2019wide}. In this regime, under specific parameterisations, the network evolves as a linear model over a fixed kernel $\Theta_\infty$. 
Extensive work has established generalisation bounds for this regime. \citet{arora2019fine, cao2019generalization} derived rigorous bounds scaling as $O (\sqrt{y^\top \Theta^{-1}_\infty y  / m} ) $ where $m$ is the number of training samples. Crucially, these bounds depend on the Reproducing Kernel Hilbert Space (RKHS) norm of the target function, implying that generalisation is governed by the alignment between the fixed kernel and the target task. 
While powerful, standard NTK analysis typically assumes dense connectivity. 

\vspace{-0.3cm}
\paragraph{Graphon limit of sparse neural networks.}
Recently, \citet{pham2025the} extended this to the Graphon Limit Hypothesis, defining a "Graphon-NTK" to analyse the optimisation dynamics of pruned networks. In particular, \citet{pham2025the}
assumed that pruning masks converge to graphons when the network's widths tend to infinity. 

The concept of graphons, limit objects describing the limits of convergent sequences of graphs, is used to analyse the behaviour of large graphs \cite{lovasz2006limits, borgs2008convergent}. 
In the machine learning domain, graphons have primarily been applied to Graph Neural Networks (GNNs) \cite{ruiz2020graphon, keriven2020convergence, krishnagopal2023graph}, which utilised graphons to define convolution operations that are stable across graphs of varying sizes, effectively enabling "transfer learning" for GNNs on large-scale topological data. 
Our application differs fundamentally. Rather than using graphons to model input data or statistical networks, we employ them to model the internal architecture of the neural network itself. 
While \citet{pham2025the} recently introduced this perspective to analyse training dynamics, we deepen this connection by proving that the specific bipartite structures induced by PaI algorithms correspond to known classes of graphons.

\vspace{-0.3cm}
\paragraph{Approximation theory for neural networks.} 
Classical universal approximation theorems, which show that sufficiently large dense networks can well approximate any continuous function on a bounded domain, are well-established \cite{cybenko1989approximation, hornik1991approximation, barron1994approximation, eldan2016power, nakada2020adaptive}. In contrast, the theoretical capability of sparse networks to approximate functions has been studied primarily through two lenses: constructive existence and random sparsity. 
\citet{bolcskei2019optimal} established fundamental lower bounds on the connectivity required to approximate specific function classes, often relating optimal sparse architectures to affine systems like wavelets and shearlets. While these results prove that optimal sparse networks exist, they do not guarantee that standard pruning algorithms can find them.
Conversely, the "Strong Lottery Ticket Hypothesis" literature led by \citet{malach2020proving, orseau2020logarithmic}, focuses on the probability that a sufficiently wide random network contains a good sparse subnetwork that, without training, still approximates the target function with high accuracy. 
Instead of assuming a generic random mask or a handcrafted affine system, we derive the approximation specifically for the sparse networks induced by limit graphons.

\section{Preliminaries and settings}

\subsection{Preliminaries}
Graphons provide a continuous representation of large discrete graphs \citep{lovasz2006limits, borgs2008convergent}. For bipartite networks connecting input coordinates to hidden neurons, we work with measurable kernels $\cW : [0,1]^2 \to [0,1]$ that encode the connectivity pattern between an infinite set of nodes \cite{pham2025the}.

\vspace{-0.3cm}
\paragraph{Bipartite graphons and step kernels.}
Given a matrix $A \in [0,1]^{d \times n}$, we define its associated \emph{step kernel} $W_A:[0,1]^2 \to [0,1]$ by
$
W_A(u,v) := A_{ij}$ for $u \in I_i^{(d)} := \left(\frac{i-1}{d}, \frac{i}{d}\right]$, 
$v \in J_j^{(n)} := \left(\frac{j-1}{n}, \frac{j}{n}\right]$.
This construction provides the standard embedding of bipartite adjacency matrices into the space of measurable kernels on $[0,1]^2$.

\paragraph{Cut norm and cut distance.}
The cut norm measures the maximum discrepancy in average mass over all measurable rectangles. For an integrable kernel $\cW:[0,1]^2 \to [0,1]$, the \emph{cut norm} is defined as
\vspace{-0.2cm}
\begin{equation}
\|\cW\|_\square := \sup_{S,T \subseteq [0,1] \text{ measurable}} \left|\int_{S \times T} \cW(u,v) \, du \, dv\right|.
\end{equation}
%
The \emph{bipartite cut distance} between kernels $\cW$ and $\cU$ is
\vspace{-0.1cm}
\begin{equation}
\label{def:bip-cut-distance}
\delta_\square^{\mathrm{bip}}(W,U) := \inf_{\phi,\psi} \|\cW - \cU^{\phi,\psi}\|_\square
\end{equation}
where $\cU^{\phi,\psi}(u,v) := \cU(\phi(u), \psi(v))$
and the infimum is over all measure-preserving maps $\phi, \psi: [0,1] \to [0,1]$. Crucially, in the bipartite setting, row and column relabelings are applied \emph{independently}, unlike the symmetric graphon setting. Hence, we may sort rows/columns by latent features to reveal structure.

\subsection{Setting and notations}
\label{sec:setting-notation}

We study pruning-at-initialisation in a one-hidden-layer network with hidden width $n$ on input $x \in \R^d$
\vspace{-0.2cm}
\[
f_n(x)=\frac{1}{\sqrt n}\sum_{j=1}^n a_j\,\sigma(h_j),
\qquad
h_j:=\frac{1}{\sqrt d}\sum_{i=1}^d \theta_{ij}x_i,
\]
and squared loss on a fixed label $y\in\mathbb{R}$: 
$\mathcal L=\frac12\big(f_n(x)-y\big)^2$,
with i.i.d.\ Gaussian initialization $\theta_{ij}\sim\mathcal{N}(0,1)$, i.i.d.\ output weights $a_j\sim\mathcal{N}(0,1)$,
independent of $\theta_{ij}$.
Here, the first layer weight matrix is $\theta \in \R^{d\times n}$.
We focus on \emph{PaI sparsification of the first layer}: a binary mask $M_n\in\{0,1\}^{d\times n}$ is computed at
initialisation and applied elementwise to $\theta$ as
$
\theta \;\leftarrow\; \theta \odot M_n,
$
After this, training proceeds only on the active parameters.
Our theoretical results concern the asymptotic structure of the mask $M_n$ as $d,n\to\infty$.


\paragraph{Masks as bipartite graphs and step-kernel embedding.}
The mask $M_n$ can be viewed as the adjacency matrix of a bipartite graph between input coordinates and hidden units.
To study limits as $d,n\to\infty$, we associate $M_n$ with a step kernel $W_{M_n}:[0,1]^2\to\{0,1\}$ defined by
$
W_{M_n}(u,v) \;=\; (M_n)_{ij}
$
for $u\in\Big(\tfrac{i-1}{d},\tfrac{i}{d}\Big],\;\;
v\in\Big(\tfrac{j-1}{n},\tfrac{j}{n}\Big]$.
We measure convergence of these step kernels using the \emph{bipartite cut distance}
From now on, we use $W_n$ interchangeably with $W_{M_n}$ for brevity.

\vspace{-0.2cm}
\paragraph{Other notations.}
We write $[k]:=\{1,\dots,k\}$ for any $k \in \mathbb{N}^{+}$.
$A_{n,ij}$ or $(A_n)_{ij}$ denotes the $(i,j)$ entry of size $n$ matrix, and $\odot$ denotes Hadamard multiplication.


\section{Graphon convergence of PaI methods}

\label{sec:graphon-convergence}

Pruning-at-Initialisation methods output \emph{binary mask matrices} $M_n \in \mathbb{R}^{d \times n}$ indicating which connections survive. However, analysing $M_n$ directly is non-trivial due to (i) the mask is a high-dimensional combinatorial object, and (ii) reordering neurons produces different masks with identical functionality. Graphon theory provides a principled way to talk about what such masks look like at large width limit. 
Concretely, we embed $M_n$ into a step kernel on $[0,1]^2$ and measure convergence in the (bi)partite cut distance, which compares two kernels \emph{after optimising over all measure-preserving transformations}. In this section, we ask whether the mask has a stable \emph{large-scale connectivity pattern} up to relabeling, i.e.\ whether $W_{M_n}$ converges in cut distance to a deterministic graphon $\cW$.


\subsection{The factorised saliency model}
\label{sec:factorised-model}

Our analysis is driven by a simple structural observation shared by many PaI criteria where edge saliency often decomposes into a \emph{input feature} (forward signal), a \emph{neuron feature} (backward/gradient signal), and an \emph{edge-level} randomness from initialisation. We formalise this via the factorised saliency model:
\begin{equation}
\label{eq:factorisation}
(S_n)_{ij} = \varphi_{n,i} \cdot \psi_{n,j} \cdot |\xi_{n,ij}|,
\end{equation}
\begin{equation}
\label{eq:mask-definition}
(M_n)_{ij} = \mathbf{1}\{(S_n)_{ij}>\tau_n\}, \quad i\in[d],\; j\in[n],
\end{equation}
where $\tau_n>0$ is a pruning threshold; $\varphi_{n,i}\ge 0$ is \emph{input feature}, capturing forward-pass saliency of input coordinate $i$;
$\psi_{n,j}\ge 0$ is \emph{neuron feature}, capturing backward-pass (gradient) saliency of hidden unit $j$;
$\xi_{n,ij}$ is \emph{edge noise}, capturing initialisation randomness at edge $(i,j)$.
%
This model helps us state and prove the graphon convergence theorem, which can apply to various PaI methods.

\paragraph{PaI instantiations.}
Many popular PaI methods fit this model either exactly (e.g., Magnitude/Random at i.i.d.\
initialisation), or asymptotically ranking-equivalent (e.g., SNIP/GraSP in the infinite width regimes) in the one-hidden-layer network setting.
Table~\ref{tab:pai_factorisation} summarises the corresponding $(\varphi,\psi,\xi)$ choices. Please refer to Appendix \ref{app:snip-provable}, \ref{app:grasp-provable} for detailed derivations.

\smallskip
\noindent\emph{Magnitude/Random.}
Magnitude pruning sets $(S_n)_{ij}=|\theta_{ij}|$, thus $\varphi_i\equiv\psi_j\equiv 1$.
Random pruning (Bernoulli($\rho$)) corresponds to $\varphi_i\equiv\psi_j\equiv 1$ with
$\xi_{ij}\sim\mathrm{Unif}[0,1]$.

\smallskip
\noindent\emph{SNIP \cite{lee2018snip}.}
With $f_n(x)=\frac1{\sqrt n}\sum_{j=1}^n a_j\sigma(h_j)$ and $h_j=\frac1{\sqrt d}\sum_{i=1}^d \theta_{ij}x_i$,
SNIP introduces masks $m_{ij}$ via $\theta_{ij}\mapsto m_{ij}\theta_{ij}$ and evaluates at $m\equiv \mathbf 1$.
By the chain rule,
\begin{equation}
\label{eq:snip_factorisation}
S_{ij}^{\text{SNIP}} = \Big|\frac{\partial\mathcal L}{\partial m_{ij}}\Big|
=
\frac{|\delta|}{\sqrt{nd}}\;|x_i|\;\big(|a_j|\,|\sigma'(h_j)|\big)\;|\theta_{ij}|,
\end{equation}
where $\delta$ denotes the scalar loss derivative w.r.t.\ the network output (e.g., $\delta=f_n(x)-y$
for squared loss). Absorbing the global factor $|\delta|/\sqrt{nd}$ into $\tau_n$ yields the scale-free
factorisation in Table~\ref{tab:pai_factorisation}.

\smallskip
\noindent\emph{GraSP \cite{Wang2020Picking} (magnitude variant).}
GraSP assigns a score proportional to $\theta\odot(Hg)$ where $g=\nabla_\theta\mathcal L(\theta)$ and
$H=\nabla^2_\theta\mathcal L(\theta)$. To align with our non-negative thresholding Eqn.\ref{eq:mask-definition}, we use the magnitude variant $S^{\mathrm{GraSP}}_{ij}:=\big|\theta_{ij}(Hg)_{ij}\big|$\footnote{We provide analysis for original version in Appendix \ref{app:grasp-provable}}.
In the one-hidden-layer setting, we have
$g_{ij}=\partial \mathcal{L}/\partial\theta_{ij}
=\delta\, \frac{1}{\sqrt n}a_j\sigma'(h_j)\, \frac{1}{\sqrt d}x_i$.
For squared loss, $H=J^\top J+\delta\nabla_\theta^2 f_n$ with $J=\nabla_\theta f_n$.
A direct computation yields $(J^\top J\,g)_{ij}=c_n\,g_{ij}$ where $c_n:=\frac{\|x\|_2^2}{d}\,\frac1n\sum_{k=1}^n a_k^2\sigma'(h_k)^2$.
Under infinite width setting, the term $\delta(\nabla_\theta^2 f_n)g$ contributes only
lower-order corrections, hence $(Hg)_{ij}=c_n g_{ij}+o(1)$ entrywise. Therefore, up to a global positive factor (irrelevant for top-$\rho$ ranking),
\vspace{-0.2cm}
\begin{equation}
\label{eq:grasp_factorisation}
S^{\mathrm{GraSP}}_{ij}\approx \varphi_i^{(n)}\,\psi_j^{(n)}\,|\xi_{ij}^{(n)}|
= |x_i|\,(|a_j|\,|\sigma'(h_j)|)\,|\theta_{ij}|.
\end{equation}
%
\smallskip
\noindent\emph{SynFlow \cite{tanaka2020pruning} (one iteration).}
SynFlow uses the data-agnostic synaptic saliency $S(\theta)=(\partial R/\partial \theta)\odot\theta$ with
$R=\mathbf 1^\top(\prod_{\ell=1}^L|\theta^{(\ell)}|)\mathbf 1$. In a single-hidden-layer network, one
iteration yields $S_{ij}\propto |a_j|\,|\theta_{ij}|$, corresponding to $\varphi_i\equiv 1$ and $\psi_j=|a_j|$.

\vspace{-0.2cm}
\subsection{Regularity assumptions}
\label{sec:assumptions}

We establish convergence as $d\to\infty$ and $n\to\infty$ under the following conditions:

\begin{assumption}[Mild growth]
\label{assum:growth}
$\frac{\log(d+n)}{\min\{d,n\}}\to 0 .$

\end{assumption}

\begin{assumption}[Deterministic input features]
\label{assum:input-profile}
There exists a bounded continuous function $\varphi:[0,1]\to[0,\infty)$ such that, after a permissible
relabeling of input coordinates,
$\max_{1\le i\le d}\big|\varphi_{n,i}-\varphi(i/d)\big|\xrightarrow{\Prb}0.$
\end{assumption}

\begin{assumption}[Neuron features empirical CDF convergence]
\label{assum:neuron-iid}
Let $\widehat F_{\psi,n}(t):=\frac1n\sum_{j=1}^n\mathbf 1\{\psi_{n,j}\le t\}$.
There exists a deterministic CDF $F_\psi$ supported on $[0,\infty)$ such that
\vspace{-0.3cm}
\[
\sup_{t\in\mathbb R}\big|\widehat F_{\psi,n}(t)-F_\psi(t)\big|\xrightarrow{\Prb}0.
\]
\end{assumption}

\begin{assumption}[Edge noise and regularity]
\label{assum:noise}
Let $\mathcal G_n:=\sigma\!\big(\{\varphi_{n,i}\}_{i\le d},\{\psi_{n,j}\}_{j\le n},\tau_n\big).$
Conditional on $\mathcal G_n$, the noises $\{\xi_{n,ij}\}_{i\le d,\,j\le n}$ are independent with common law $P_\xi$, and are independent of $\mathcal G_n$. Moreover, the CDF of $|\xi|$ on $[0,\infty)$ is continuous.
\end{assumption}

\begin{assumption}[Deterministic threshold]
\label{assum:threshold}
There exists $\tau\in(0,\infty)$ such that
$\tau_n\xrightarrow{P}\tau$,
and $\tau_n$ is $\mathcal G_n$-measurable.
\end{assumption}

\vspace{0.2cm}
\begin{remark}
Our aim is to show that the masked weight matrix $M_n$ behaves, at large $(d,n)$, like a bipartite random graph generated from a \emph{deterministic} graphon $\cW(u,v)$: the row index $i$ becomes a continuum coordinate $u=i/d$, the column index $j$ becomes $v=j/n$, and edges are kept with a probability that depends smoothly on the product of a row feature and a column feature. Assumptions A\ref{assum:input-profile} - A\ref{assum:neuron-iid} provide exactly these stable continuum coordinates by requiring that (after relabelling) the input (row) features $\{\varphi_{n,i}\}$ track a deterministic profile $\varphi(u)$ and the neuron (column) features $\{\psi_{n,j}\}$ have a deterministic limiting distribution (hence order statistics $\psi_{n,(j)}\approx Q_\psi(v)$, where $Q_\psi(v):=\inf\{y\ge 0:\ F_\psi(y)\ge v\}$ is the generalised quantile). 
Assumption A\ref{assum:noise} isolates the remaining randomness to i.i.d.\ edge noise $\{\xi_{n,ij}\}$ conditional on $(\varphi_n,\psi_n,\tau_n)$, which turns the mask into a conditionally independent Bernoulli matrix and enables cut-norm concentration around its conditional mean. 
Assumption A\ref{assum:threshold} says that the top-$\rho$ threshold stabilises to a constant $\tau$, so the limiting edge-probability kernel is well-defined and does not inherit extra global randomness. 
Finally, A\ref{assum:growth} is a mild growth condition ensuring that these concentration bounds hold \emph{uniformly}, so the full bipartite cut distance converges.
\end{remark}

\vspace{-0.2cm}
\paragraph{SNIP is asymptotically equivalent to a factorised model.}
In the 1-hidden layer SNIP setting in Eqn.\ref{eq:snip_factorisation}, the factorisation Eqn.\ref{eq:factorisation} holds after absorbing the global factor into $\tau_n$, with $\varphi_{n,i}=|x_i|$, $\psi_{n,j}=|a_j|\,|\sigma'(h_j)|$, and $\xi_{n,ij}=\theta_{ij}$.
If the input coordinates are i.i.d.\ and bounded, then after sorting
$\{|x_i|\}_{i\le d}$ one has $\max_i\big||x|_{(i)}-Q_{|X|}(i/d)\big|\to 0$ in probability, verifying A\ref{assum:input-profile} with $\varphi=Q_{|X|}$. Under infinite width regime, conditional on $x$ the pre-activations $h_j$ are approximately Gaussian with variance $\|x\|_2^2/d$, yielding a deterministic limit law for $\psi_{n,j}$ and hence A\ref{assum:neuron-iid}. 
Moreover, although $\theta_{ij}$ appears inside $\psi_{n,j}$ through $h_j$, the influence of a single entry $\theta_{ij}$ on $h_j$ vanishes asymptotically as $d \to \infty$, satisfying A\ref{assum:noise}.
Finally, taking $\tau_n$ as the empirical $(1-\rho)$-quantile of $\{(S_n)_{ij}\}_{i,j}$ yields A\ref{assum:threshold} under mild continuity conditions. We provide full verifications in the Appendix \ref{app:snip-provable}.

\begin{table}[t]
\centering
\caption{Factorisation of PaI methods. SNIP \cite{lee2018snip} and GraSP \cite{Wang2020Picking} are different in global scaling.}
\label{tab:pai_factorisation}
\small
\begin{tabular}{@{}llll@{}}
\toprule
\textbf{Method} & \textbf{Input $\varphi_i$} & \textbf{Neuron $\psi_j$} & \textbf{Noise $|\xi_{ij}|$} \\
\midrule
SNIP            & $|x_i|$                    & $|a_j|\,|\sigma'(h_j)|$  & $|\theta_{ij}|$                  \\
GraSP            & $|x_i|$                    & $|a_j| \, |\sigma'(h_j)|$  & $|\theta_{ij}|$                  \\
Synflow           & $1$                    & $|a_j|$ & $|\theta_{ij}|$                \\
Magnitude       & $1$                        & $1$                      & $|\theta_{ij}|$                  \\
Random          & $1$                        & $1$                      & $\mathrm{Unif}[0,1]$        \\
\bottomrule
\end{tabular}
\vspace{-0.3cm}
\end{table}

\vspace{-0.2cm}
\subsection{Graphon convergence of factorised masks}
\label{sec:main-graphon-convergence}

\begin{theorem}[Bipartite graphon convergence]
\label{thm:graphon-convergence}
Let $d\to\infty$ and $n\to\infty$. Consider the factorised saliency model
\vspace{-0.2cm}
\[
(S_n)_{ij}=\varphi_{n,i}\cdot \psi_{n,j}\cdot |\xi_{n,ij}|,\quad
(M_n)_{ij}=\mathbf{1}\{(S_n)_{ij}>\tau_n\},
\vspace{-0.2cm}
\]
where $i\in[d],\; j\in[n]$, and let $W_n:=W_{M_n}$ be the associated bipartite step-kernel on $[0,1]^2$.
Under assumptions A\ref{assum:growth} - A\ref{assum:threshold}, we define the deterministic limit graphon
\vspace{-0.2cm}
\[
\cW(u,v):=\mathbb{P}\!\left(\varphi(u)\cdot Q_\psi(v)\cdot |\xi|>\tau\right),
\quad (u,v)\in[0,1]^2.
\vspace{-0.5cm}
\]

Then 
$
\delta^{\mathrm{bip}}_\square(W_n,\cW)\xrightarrow{\Prb}0
$.
\end{theorem}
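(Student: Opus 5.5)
We split the cut distance with the triangle inequality into a random part and a deterministic part. Relabel rows by the permissible relabeling of A\ref{assum:input-profile} and sort columns so that $\psi_{n,(1)}\le\dots\le\psi_{n,(n)}$; both are measure-preserving permutations of the blocks, and $\delta^{\mathrm{bip}}_\square$ allows independent row and column relabelings, so we may assume this ordering. Let $P_n(p):=\Prb(p\,|\xi|>t)$ and define the conditional mean matrix
\[
(\bar M_n)_{ij}:=\E[(M_n)_{ij}\mid\mathcal G_n]=1-F_{|\xi|}\!\big(\tau_n/(\varphi_{n,i}\psi_{n,j})\big),
\]
with the convention that the value is $0$ when $\varphi_{n,i}\psi_{n,j}=0$. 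This follows from A\ref{assum:noise}. We then bound
\[
\delta^{\mathrm{bip}}_\square(W_n,\cW)\le \|W_{M_n}-W_{\bar M_n}\|_\square+\|W_{\bar M_n}-\cW\|_\square ,
\]
and show that each term tends to $0$ in probability.

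\textbf{Step 1 (concentration).} Conditionally on $\mathcal G_n$, the entries $(M_n)_{ij}-(\bar M_n)_{ij}$ are independent, centred and bounded by $1$. For a step kernel, the cut norm is attained, up to a factor of $4$, on unions of blocks. Hence $\|W_{M_n}-W_{\bar M_n}\|_\square\le \frac{4}{dn}\max_{S\subseteq[d],T\subseteq[n]}|\sum_{i\in S,j\in T}(M-\bar M)_{ij}|$. Hoeffding's inequality for a fixed pair $(S,T)$ gives tail $2\exp(-2\epsilon^2 d^2n^2/(|S||T|))\le 2\exp(-2\epsilon^2 dn)$. A union bound over $2^{d+n}$ pairs yields the bound $2^{d+n+1}e^{-2\epsilon^2dn}$. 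Since $dn\ge \max(d,n)\min(d,n)$ and $d+n\le 2\max(d,n)$, this tends to $0$ once $\min(d,n)\to\infty$. A\ref{assum:growth} certainly suffices; in fact only $\min(d,n)\to\infty$ is needed here. Because the bound is uniform in $\mathcal G_n$, it also holds unconditionally.

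\textbf{Step 2 (deterministic approximation).} The cut norm is bounded by the $L^1$ norm, so it suffices to show $\|W_{\bar M_n}-\cW\|_{L^1}\to0$ in probability. Write $G(p,t):=\Prb(p|\xi|>t)$. Since the CDF of $|\xi|$ is continuous, $G$ is continuous on $\{p>0\}\times(0,\infty)$ and bounded by $1$. On block $(i,j)$, with $u\in I_i$ and $v\in J_j$, we compare $\varphi_{n,i}\psi_{n,(j)}$ with $\varphi(u)Q_\psi(v)$, and $\tau_n$ with $\tau$.

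For the rows, A\ref{assum:input-profile} together with the uniform continuity of $\varphi$ gives $\sup_u|\varphi_{n,\lceil ud\rceil}-\varphi(u)|\to0$.

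For the columns, A\ref{assum:neuron-iid} (Glivenko--Cantelli type) implies that the empirical quantile $\psi_{n,(\lceil vn\rceil)}$ converges to $Q_\psi(v)$ at every continuity point $v$ of $Q_\psi$. Such points are all but countably many, so the convergence holds for Lebesgue-a.e.\ $v$.

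Together with $\tau_n\to\tau$ from A\ref{assum:threshold}, continuity of $G$ gives pointwise a.e.\ convergence of the integrand. The point mass $\{p=0\}$ needs separate handling, together with unbounded $\psi$. We truncate $\psi$ at a level $K$; the truncation error is at most $1-F_\psi(K)$ plus the empirical tail, which is small. At $p=0$, $G$ is continuous from the right as $p\downarrow0$ because $\tau>0$. Dominated convergence then applies along subsequences. We convert convergence in probability to a.s.\ convergence on subsubsequences, which gives $L^1$ convergence in probability.

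\textbf{Main obstacle.} I expect the delicate point to be Step 2's column comparison. The empirical quantiles are random and only converge a.e.\ where $Q_\psi$ is continuous, and $Q_\psi$ may have jumps. Ties and atoms of $F_\psi$ also interact with the threshold. I would handle this by working with the monotone rearrangement directly. The map $v\mapsto G(\varphi(u)Q_\psi(v),\tau)$ is monotone in $v$, so $L^1$ convergence of monotone functions follows from pointwise convergence off a countable set. The remaining uniformity in $u$ follows from the boundedness and continuity of $\varphi$.
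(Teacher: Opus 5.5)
Your proposal is correct and has the same skeleton as the paper's proof. You relabel rows and sort columns by $\psi$ using the invariance of $\delta^{\mathrm{bip}}_\square$, split through the conditional-mean kernel $\bar M_n$, prove concentration, and then show $\|W_{\bar M_n}-\cW\|_1\to 0$.

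The genuine difference is Step~1. The paper bounds the cut norm by the operator norm, $\|X_n\|_{\square,\mathrm{mat}}\le (dn)^{-1/2}\|X_n\|_{\mathrm{op}}$, and controls $\|X_n\|_{\mathrm{op}}$ with the rectangular matrix Bernstein inequality. This gives $\|X_n\|_{\square,\mathrm{mat}}\lesssim\sqrt{\log(d+n)/n}+\sqrt{\log(d+n)/d}$, and that is exactly where the growth condition A\ref{assum:growth} enters. Your Hoeffding-plus-union-bound over the $2^{d+n}$ block rectangles is the classical graph-limit argument. It gives the sharper rate $O_{\Prb}(\sqrt{1/d+1/n})$ and, as you correctly note, needs only $\min(d,n)\to\infty$, so it shows A\ref{assum:growth} can be weakened. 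Your factor $4$ is harmless but unnecessary: for step kernels the supremum is attained exactly on unions of blocks by multilinearity, as in the paper's Lemma~\ref{lem:cut-norm-equiv}.

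In Step~2 the paper works pointwise, and there is no need for your subsequence extraction. It proves $D_n(u,v)\to 0$ in probability at each $(u,v)$ with $v$ a continuity point of $Q_\psi$. It then takes expectations by bounded convergence, integrates via Fubini to get $\E\|\bar W_n^\pi-\cW\|_1\to 0$, and finishes with Markov. Your subsubsequence route also works. The reason is that A\ref{assum:input-profile}, the uniform (Glivenko--Cantelli) form of A\ref{assum:neuron-iid}, and A\ref{assum:threshold} reduce to three scalar random variables. On the event where these converge almost surely along a subsubsequence, the a.e.\ pointwise convergence is deterministic.

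The truncation of $\psi$ and the monotonicity and atom discussion in your ``main obstacle'' are not needed:
\begin{itemize}
\item the integrand is bounded by $1$;
\item $Q_\psi(v)<\infty$ for $v<1$;
\item $G$ is jointly continuous, including at $p=0$ because $\tau>0$;
\item atoms of $F_\psi$ only create flat pieces of $Q_\psi$, which do not affect convergence at its continuity points.
\end{itemize}
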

\vspace{0.2cm}


\begin{remark}
The limit graphon $\cW$ can be viewed as a topological signature of a PaI method. Unstructured pruning rules, such as Random or Magnitude pruning, converge to constant graphons $\cW(u,v)\equiv \rho$, reflecting uniform connectivity across inputs and neurons. 
In contrast, data-dependent PaI methods induce heterogeneous graphons, where the probability of retaining a connection varies smoothly with latent input features and neuron sensitivities. 
From this perspective, different PaI rules are distinguished not by individual edge realisations, but by the large-scale connectivity patterns they impose in the infinite-width limit.
\end{remark}

\begin{remark}
This convergence result extends naturally to deep architectures. For an $L$-hidden-layer network with widths $(n_0, n_1, \ldots, n_L)$, each layer $\ell \in \{1, \ldots, L\}$ has mask $M^{(\ell)} \in \{0,1\}^{n_{\ell-1} \times n_\ell}$ generated by factorised scores in Eqn. (\ref{eq:factorisation}). The corresponding bipartite step-kernel is $W_{M_n^{(\ell)}}$ or $W_{n}^{(\ell)}$ for short.
Within each hidden layer $\ell$, the input factor $\varphi_i^{(\ell)}$ typically depends on layer $(\ell-1)$ activations, while $\psi_j^{(\ell)}$ is a backward-pass term independent of layer-$\ell$ edge noise. As the width tends to infinite, as shown in NTK literature \cite{jacot2018neural, arora2019fine}, conditional on previous layer activations, the pre-activations are i.i.d., hence, the post-activations are i.i.d. Then, we assume the empirical CDF of input factors $\varphi^{(\ell)}$ converges to a deterministic CDF, so the sorted input features converge to the deterministic quantile $Q_{\varphi^{(\ell-1)}}$. Applying Theorem \ref{thm:graphon-convergence} to each layer $\ell$ gives the graphon convergence for each layer $W_n^{(\ell)} \xrightarrow{\Prb} \cW^{(\ell)}$. Please refer to Appendix \ref{app:depth_extension} for details.

\end{remark}


\vspace{-0.3cm}
\subsection{Empirical verification}


\begin{figure*}
    \centering
    \includegraphics[width=0.9\linewidth]{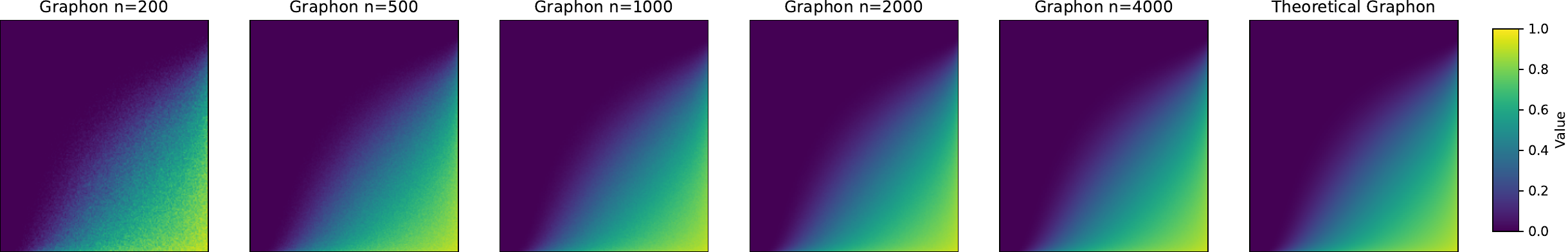}
    \caption{Visual convergence to the graphon limit. We compare averaged empirical masks (over 100 seeds) at increasing widths ($n=200, 500, 1000, 2000, 4000$) against the analytically computed Theoretical Graphon. The density is fixed at $\rho=0.2$.}
    \label{fig:snip_graphon_comparison}
    \vspace{-0.4cm}
\end{figure*}

We empirically illustrate Theorem~\ref{thm:graphon-convergence} by comparing finite-width masks, produced by the original SNIP, to the predicted limit graphon.
For visualisation, we generate masks $M_n\in\{0,1\}^{n\times n}$ at fixed density $\rho=0.2$ for
$n\in\{200,500,1000,2000,4000\}$. Following \citet{pham2025the}, we sort rows/columns by their latent factors ($\varphi_i$ for rows and $\psi_j$ for columns) and average the sorted masks over 100 seeds to obtain an empirical edge-probability matrix.



Besides, we compute the limit theoretical graphon, $\cW(u, v)$, as a continuous function on the unit square $[0, 1]^2$.
To make the limit object computable, we consider i.i.d. standard normal inputs and weights initialisation.
Since inputs $x \sim \mathcal{N}(0, 1)$, the magnitude $|x|$ follows a standard Half-Normal (HN) distribution. We define $\varphi(u) = F_{\text{HN}}^{-1}(u) = \sqrt{2} \text{erf}^{-1}(u)$. 
The column factor is defined as $\psi = |a| |\sigma'(h)|$, where $a \sim \mathcal{N}(0, 1)$. 
We determine the quantile function $Q_\psi(v)$ for this product distribution via Monte Carlo simulation ($N=10^6$ samples).
The edge noise $|\xi_{ij}|$ is Half-Normal distributed with scale $\sigma_\theta = 1$ where $\theta \sim \mathcal{N}(0,1)$. Given a global threshold $\tau$ (solved numerically to match density $\rho$), the theoretical edge probability is 
$\mathcal{W}_{(u, v)} = \mathbb{P}\left( \varphi(u) \psi(v) |\xi| > \tau \right) = 1 - \text{erf}( \frac{\tau}{\varphi(u) Q_\psi(v) \sqrt{2}} ).
$


Figure~\ref{fig:snip_graphon_comparison} shows that the averaged empirical masks become progressively smoother and rapidly approach the theoretical $\cW$ as $n$ increases; by $n\ge 2000$ the empirical and theoretical patterns are visually indistinguishable. This supports the interpretation of PaI masks as finite-sample
discretisations of a deterministic graphon limit.

\vspace{-0.3cm}
\section{Expressivity}
\label{sec:uat}

This section studies the approximation power of 1-hidden-layer networks whose sparse connectivity is given by PaI masks.
Because the ambient input dimension satisfies $d\to\infty$, it is not meaningful to claim that a model class can approximate \emph{every} continuous function of all $d$ coordinates: in high dimensions, such a statement is both too strong and misaligned with how learning problems are typically structured, where only a few coordinates (or a low-dimensional subspace) carry signal and the rest behave as nuisance/noise. 
We therefore adopt an \emph{intrinsic-dimension} notion of universality \cite{nakada2020adaptive} and ask whether the pruned class can approximate any continuous target that depends only on a fixed (or slowly growing) subset of $k$ input coordinates.

A second challenge is that PaI masks are random, method-dependent \emph{combinatorial} objects, so it is non-trivial to make precise connectivity claims directly at the level of the binary matrix $M_n$. 
Our graphon limit theory provides a deterministic continuum proxy: when the step-kernels $W_{M_n}$ converge to a limit graphon $\cW$, large-scale connectivity properties of $M_n$ can be stated as simple analytic conditions on $\mathcal{W}$.
In particular, if $\cW$ contains a positive-measure rectangle where edge probabilities are uniformly bounded below, then as $n \to \infty$, with high probability the finite mask $M_n$ contains a fully connected $k\times \tilde n$ subnetwork (with $\tilde n\to\infty$), and classical UAT \cite{cybenko1989approximation} apply on that core.

\vspace{-0.2cm}
\paragraph{Masked 1-hidden-layer networks and deterministic threshold masks.}

Fix $d,n\in\mathbb N$. Let $M\in\{0,1\}^{d\times n}$ be a binary mask sampled from a given graphon $\cW$ of the factorised saliency model studied in Sect.~\ref{sec:graphon-convergence}. 
We consider the class of 
masked two-layer neural networks
\vspace{-0.3cm}
\begin{equation}
\label{eq:uat:class}
\mathcal F_n(M_n)
:=\bigl\{\,x\mapsto \sum_{j=1}^n a_j\,\sigma\,(b_j+ \sum_{i=1}^d M_{ij}\,\theta_{ij}\,x_i)
\bigr\},
\end{equation}
where $(a_j), (b_j) \in\mathbb R^n$,$ (\theta_{ij}) \in\mathbb R^{d\times n}$.
Any conventional normalisation (e.g.\ $1/\sqrt n$) can be absorbed into $a_j$'s and does not affect expressivity.


\subsection{Graphon coordinates and active rectangle}
\label{subsec:uat:coords}

By the same measure-preserving relabelling used in the graphon convergence Theorem~\ref{thm:graphon-convergence},
we work in coordinates $u_i:=i/d$ and $v_j:=j/n$ such that
$\varphi_{n,i}\approx \varphi(u_i)$ and the sorted neuron features satisfy $\psi_{n,(j)}\approx Q_\psi(v_j)$.
In these coordinates, the limiting edge-probability kernel takes the explicit form
$\cW(u,v)=\mathbb P\!\left(\varphi(u)\,Q_\psi(v)\,|\xi|>\tau\right)$,
matching the limit object from Sect.~\ref{sec:graphon-convergence}.
Equivalently, one may view $u_i,v_j$ as i.i.d.\ uniform latents upto a random relabelling, since the cut distance is invariant under index permutations.

\vspace{-0.3cm}
\paragraph{Class of target functions depending on $k$ coordinates.}
Fix $k\in\mathbb N$ and a compact set $K\subset\mathbb R^k$. For any set $I\subset[d]$ with $|I|=k$,
we write $x_I\in\mathbb R^k$ for the subvector of $x\in\mathbb R^d$ restricted to coordinates in $I$.
Given $f\in C(K)$, we define the lifted target $\tilde f:\mathbb R^d\to\mathbb R$ by
$\tilde f(x):=f(x_I),$ where $x\in\mathbb R^d$,
and the lifted compact domain $\tilde K:=\{x\in\mathbb R^d:\ x_I\in K,\ \|x_{I^c}\|_\infty\le B\}$ for some fixed $B<\infty$, where $I^c : = [d] \backslash I$.
Note that approximation on $\tilde K$ is equivalent to approximation of $f$ on $K$, as $\tilde f$ ignores $x_{I^c}$.



\begin{theorem}[UAT for dense 1-hidden-layer neural networks on $\mathbb{R}^k$, \cite{cybenko1989approximation}]
\label{thm:dense-uat}
Fix $k\in\mathbb{N}$ and a compact $K\subset\mathbb{R}^k$. Suppose $\sigma$ is a universal activation. Then the class $\{u\mapsto \sum_{r=1}^{n} a_r\sigma(\theta_r^\top u+b_r): n\in\mathbb{N}\}$ is dense in $C(K)$.
In particular, for every $f\in C(K)$ and every $\varepsilon>0$, there exist an integer $\tilde n_\varepsilon$ and parameters $\{(a_r,\theta_r,b_r)\}_{r=1}^{\tilde n_\varepsilon}$ such that
$
\sup_{u\in K}\Bigl|\sum_{r=1}^{\tilde n_\varepsilon} a_r\sigma(\theta_r^\top u+b_r)-f(u)\Bigr|<\varepsilon.
$
\end{theorem}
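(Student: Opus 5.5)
This is the classical Cybenko--Hornik universal approximation theorem, restated for the reader's convenience. I would prove it by the standard functional-analytic route, taking the definition of a ``universal activation'' to be that $\sigma$ is continuous and \emph{discriminatory}. Discriminatory means that if a finite signed regular Borel measure $\mu$ on $K$ satisfies $\int_K \sigma(\theta^\top u+b)\,d\mu(u)=0$ for all $(\theta,b)\in\R^k\times\R$, then $\mu=0$.

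\textbf{Step 1: the span is a linear subspace of $C(K)$.} Let $\mathcal S:=\operatorname{span}\{u\mapsto\sigma(\theta^\top u+b)\}$. Since $\sigma$ is continuous and $K$ is compact, every element of $\mathcal S$ lies in $C(K)$. Moreover $\mathcal S$ is exactly the union over $n$ of the stated class, because a finite linear combination with coefficients $a_r$ is precisely a width-$n$ network.

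\textbf{Step 2: Hahn--Banach.} Suppose, for contradiction, that the closure $\overline{\mathcal S}\neq C(K)$. Then Hahn--Banach gives a nonzero bounded linear functional $L$ on $C(K)$ that vanishes on $\overline{\mathcal S}$.

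\textbf{Step 3: Riesz representation.} By the Riesz representation theorem, $L(g)=\int_K g\,d\mu$ for some nonzero finite signed regular Borel measure $\mu$. Vanishing on $\mathcal S$ says exactly that $\int_K\sigma(\theta^\top u+b)\,d\mu(u)=0$ for all $(\theta,b)$. The discriminatory property then forces $\mu=0$, a contradiction. Hence $\mathcal S$ is dense, and the $\varepsilon$-statement with some finite width $\tilde n_\varepsilon$ follows immediately from the definition of density in the sup norm.

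\textbf{Main obstacle: the discriminatory property for concrete activations.} The only nontrivial point is checking that a given $\sigma$ is discriminatory.

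For sigmoidal $\sigma$, I would follow Cybenko. Consider $\sigma(\lambda(\theta^\top u+b)+\phi)$ as $\lambda\to\infty$. These functions converge boundedly and pointwise to the indicator of the half-space $\{\theta^\top u+b>0\}$, plus $\sigma(\phi)$ times the indicator of the hyperplane. Dominated convergence then shows that $\mu$ annihilates all half-space indicators. Integrating against such indicators shows that $\mu$ annihilates $e^{i t\,\theta^\top u}$ for every $t$ and $\theta$, so the Fourier transform of $\mu$ vanishes and $\mu=0$.

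For general non-polynomial continuous $\sigma$, I would instead cite Leshno--Lin--Pinkus--Schocken. Their argument reduces to the case $k=1$ via ridge functions and uses mollification to show that all monomials lie in the closure; Weierstrass then gives density.

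Since the statement is presented as a cited classical result, in the paper I would give just this sketch and refer to \cite{cybenko1989approximation, hornik1991approximation} for the details.
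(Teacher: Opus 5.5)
Your proposal is correct and matches the paper's treatment. The paper gives no proof of its own and simply invokes \cite{cybenko1989approximation}, whose argument is exactly what you sketch: a Hahn--Banach plus Riesz representation reduction to the discriminatory property, with the sigmoidal case settled by half-space limits and Fourier uniqueness. Since the paper never defines ``universal activation'', your choice to read it as ``continuous and discriminatory'' (or continuous and non-polynomial, via Leshno--Lin--Pinkus--Schocken) is a sensible way to make the hypothesis precise.
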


\begin{assumption}[Active rectangle with lower-bounded edge-probability]
\label{ass:uat:active-rect}
There exist measurable sets $U_\star,V_\star\subset(0,1)$ with Lebesgue measures
$\beta:=\lambda(U_\star) >0$ and $\alpha:=\lambda(V_\star)>0$, and a constant $p_\star>0$ such that
\begin{equation}
\label{eq:uat:floor}
\cW(u,v)\ge p_\star,\quad\text{for almost every }(u,v)\in U_\star\times V_\star.
\end{equation}
In the factorised model in Eqn.~\eqref{eq:factorisation}, a sufficient condition is that $\varphi(u)\ge \varphi_0>0$ on $U_\star$ and $Q_\psi(v)\ge \psi_0>0$ on $V_\star$, in which case one may take $p_\star=\mathbb P(\varphi_0\psi_0|\xi|>\tau)>0$.
\end{assumption}

\begin{assumption}[Availability of $k$ active inputs]
\label{ass:uat:k-inputs}
Given a fixed $k$, with probability $1 - o(1)$ as $d\to\infty$, there exists an index set $I\subset[d]$ such that $|I|=k$ and  $u_i\in U_\star, \forall i\in I$.
\end{assumption}

\begin{assumption}[Core width growth]
\label{ass:uat:growth}
Let $k$ be fixed. As $n\to\infty$,
we have $(\alpha n)\,p_\star^{\,k}\ \longrightarrow\ \infty$.
\end{assumption}

\begin{remark}
Assumption~\ref{ass:uat:active-rect} is a \emph{continuum} statement about the limiting connectivity pattern. It says that there is a positive-measure block of input latents $U^\star$ and hidden-unit latents $V^\star$ such that edges in that block survive with probability uniformly bounded below.
For Random pruning (constant graphon $W\equiv \rho$), one may take $U^\star\times V^\star=[0,1]^2$ and $p^\star=\rho$.
For data-dependent PaI (e.g. SNIP-type), $U^\star\times V^\star$ can be interpreted as the region where the signal concentrates: $U^\star$ corresponds to input coordinates with non-negligible forward saliency $\varphi$, and $V^\star$ to hidden units with non-negligible backward/gradient feature $Q_\psi$.
\end{remark}



\vspace{-0.3cm}
\subsection{Universal approximation on active coordinates}
\label{subsec:uat:main}
\vspace{-0.2cm}

We show that the class $\mathcal F_n(M)$ is universal for approximating target functions depending on $k$ active coordinates.
The following lemma ensures the existence of a dense subnetwork of size $k \times \tilde{n}$ in the graphon-sparsified network.
\begin{lemma}[Dense core in a graphon-sparsified mask]
\label{lem:uat:dense-core}
Assume assumptions A\ref{ass:uat:active-rect} -A\ref{ass:uat:growth} and let $I$ be as in Assumption~\ref{ass:uat:k-inputs}. Fix $\tilde{n} \in\mathbb N$. 
Then, with overwhelming probability, there exists $J\subset[n]$ with $|J|=\tilde{n}$ such that $M_{ij}=1$, $\forall i\in I, \forall j\in J$.
\end{lemma}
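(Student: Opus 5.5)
The plan is to count the columns that are fully connected to the fixed row set $I$, and to show this count is binomial with a mean that tends to infinity. Work in the graphon coordinates of Section~\ref{subsec:uat:coords}, where the mask is sampled from $\cW$. Conditional on the latents $(u_i)_{i\le d}$ and $(v_j)_{j\le n}$, the entries $M_{ij}$ are independent Bernoulli variables with parameter $\cW(u_i,v_j)$. In the factorised model this is exactly the conditional independence granted by Assumption~A\ref{assum:noise}, given $\mathcal G_n$. Fix the set $I$ supplied by Assumption~A\ref{ass:uat:k-inputs}, so $u_i\in U_\star$ for all $i\in I$; this holds on an event of probability $1-o(1)$, and we condition on it. For each column $j$, define the indicator
\[
Z_j:=\mathbf 1\{v_j\in V_\star\}\prod_{i\in I}M_{ij}.
\]
The goal is to show $N:=\sum_{j=1}^n Z_j\ge\tilde n$ with overwhelming probability. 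Once that holds, any $\tilde n$ columns with $Z_j=1$ form the required set $J$.

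First, bound the success probability of each column. Take the latents $v_j$ to be i.i.d.\ uniform on $[0,1]$, using the relabelling remark in Section~\ref{subsec:uat:coords}, and independent of the $u_i$. For any $j$, conditional on the $u_i$,
\[
\Prb(Z_j=1)=\int_{V_\star}\prod_{i\in I}\cW(u_i,v)\,dv\;\ge\;\alpha\,p_\star^{\,k}.
\]
This uses the floor \eqref{eq:uat:floor} on $U_\star\times V_\star$ together with the independence of the $k$ entries in column $j$. The $Z_j$ are independent across $j$, because they involve disjoint edge variables and independent column latents. Hence $N$ stochastically dominates a $\mathrm{Bin}(n,q)$ random variable with $q:=\alpha p_\star^k$. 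By Assumption~A\ref{ass:uat:growth}, $\mu:=nq\to\infty$.

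Second, apply concentration. A Chernoff lower-tail bound gives $\Prb(N\le \mu/2)\le \exp(-\mu/8)$. Since $\tilde n$ is fixed and $\mu\to\infty$, we have $\mu/2\ge\tilde n$ for all large $n$. Therefore $\Prb(N<\tilde n)\le e^{-\mu/8}\to 0$, which is superpolynomially small in $\mu$. Combining this with the $1-o(1)$ event from Assumption~A\ref{ass:uat:k-inputs} by a union bound gives the claim. The same argument also lets $\tilde n$ grow, provided $\tilde n\le \mu/2$.

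The main obstacle is justifying the idealisation that $M$ is sampled from $\cW$ with exact latents. If one instead works with the actual factorised mask, the probability that an edge survives is $\Prb(\varphi_{n,i}\psi_{n,j}|\xi|>\tau_n)$, not $\cW(u_i,v_j)$. To handle this, I would use Assumptions~A\ref{assum:input-profile}, A\ref{assum:neuron-iid} and A\ref{assum:threshold} together with continuity of the law of $|\xi|$. These imply that, with high probability, the conditional edge probabilities on $I\times\{j:v_j\in V_\star\}$ are at least $p_\star/2$, using the sufficient condition $\varphi\ge\varphi_0$ and $Q_\psi\ge\psi_0$. They also imply that at least $\alpha n/2$ columns have $\psi_{n,j}\ge\psi_0/2$. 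Running the same binomial argument with $q=(\alpha/2)(p_\star/2)^k$ still gives $nq\to\infty$ by Assumption~A\ref{ass:uat:growth}, since $k$ is fixed.
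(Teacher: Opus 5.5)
Your proposal is correct and follows essentially the same route as the paper: count the active columns that are fully connected to $I$, lower-bound each column's success probability by a product of $k$ edge floors via within-column conditional independence, use independence across columns, and close with the multiplicative Chernoff bound at $\delta=1/2$ plus Assumption~A\ref{ass:uat:growth}. Your fallback for the actual factorised mask even recovers the paper's constant $q=(\alpha/2)(p_\star/2)^k$; the only real difference is how you identify the active columns (i.i.d.\ uniform latents, or counting columns with large $\psi_{n,j}$ via Assumption~A\ref{assum:neuron-iid}, rather than the paper's deterministic grid set $\{j: j/n\in V_\star\}$), which spares you the paper's Riemann-measurability caveat on $V_\star$.
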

%


\begin{theorem}[Universality on active coordinates]
\label{thm:uat:active}
Assume Theorem~\ref{thm:dense-uat} and assumptions A\ref{ass:uat:active-rect}- A\ref{ass:uat:growth}.
Fix $k\in\mathbb{N}$, a compact $K\subset\mathbb{R}^k$, a target $f\in C(K)$.
For every $\varepsilon > 0$, there exist parameters $B < \infty$ and an index set $I$ as in A\ref{ass:uat:k-inputs}.
Define $\tilde f$ and $\tilde K$ as above.
Then, with high probability, there exists $F_n\in\mathcal{F}_n(M)$ such that
$
\sup_{x\in \tilde K}|F_n(x)-\tilde f(x)|<\varepsilon
$
\end{theorem}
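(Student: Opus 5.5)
The plan is to reduce Theorem~\ref{thm:uat:active} to the dense result, Theorem~\ref{thm:dense-uat}, applied on the dense core supplied by Lemma~\ref{lem:uat:dense-core}. Fix $f\in C(K)$ and $\varepsilon>0$. By Theorem~\ref{thm:dense-uat} we obtain an integer $\tilde n:=\tilde n_\varepsilon$ and parameters $\{(a_r,\theta_r,b_r)\}_{r=1}^{\tilde n}$ with $\theta_r\in\mathbb R^k$ such that
\[
\sup_{u\in K}\Bigl|\sum_{r=1}^{\tilde n}a_r\sigma(\theta_r^\top u+b_r)-f(u)\Bigr|<\varepsilon .
\]
Crucially, $\tilde n$ depends only on $(f,K,\varepsilon,\sigma)$ and not on $n$ or $d$. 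We then choose any $B<\infty$ (say $B=1$); it plays no role because the construction will ignore $x_{I^c}$. Next we use Assumption~\ref{ass:uat:k-inputs} to fix, with probability $1-o(1)$, an index set $I\subset[d]$ with $|I|=k$ and $u_i\in U_\star$ for all $i\in I$. This $I$ defines $\tilde f$ and $\tilde K$.

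Second, we invoke Lemma~\ref{lem:uat:dense-core} with this fixed $\tilde n$. With overwhelming probability there is a set $J=\{j_1,\dots,j_{\tilde n}\}\subset[n]$ such that $M_{ij}=1$ for all $i\in I$ and $j\in J$. The heuristic behind the lemma, in case it needs re-justifying, is as follows. Each column $j$ with $v_j\in V_\star$ is fully connected to $I$ with probability at least $p_\star^k$, by conditional independence of the Bernoulli edges given the latents (Assumption~\ref{assum:noise}). There are about $\alpha n$ such columns, so the number of fully connected columns dominates a $\mathrm{Bin}(\alpha n,p_\star^k)$ variable, whose mean tends to infinity by Assumption~\ref{ass:uat:growth}. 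A Chernoff bound then makes it at least $\tilde n$ with probability tending to one. Intersecting this event with the event from Assumption~\ref{ass:uat:k-inputs} keeps the overall probability at $1-o(1)$.

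Third, we construct $F_n$ explicitly. Write $I=\{i_1,\dots,i_k\}$. For $r\le\tilde n$ we set
\[
a_{j_r}=a_r,\qquad b_{j_r}=b_r,\qquad \theta_{i_s j_r}=(\theta_r)_s \text{ for } s\le k,
\]
and $\theta_{ij_r}=0$ for $i\notin I$. For $j\notin J$ we set $a_j=0$, with arbitrary $b_j$ and $\theta_{\cdot j}$. Since $M_{i j_r}=1$ for $i\in I$, each unit satisfies $\sum_i M_{ij_r}\theta_{ij_r}x_i=\theta_r^\top x_I$. Hence
\[
F_n(x)=\sum_{r=1}^{\tilde n}a_r\sigma(\theta_r^\top x_I+b_r)\in\mathcal F_n(M),
\]
and for $x\in\tilde K$ we have $x_I\in K$, so $|F_n(x)-\tilde f(x)|<\varepsilon$ uniformly.

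The main obstacle I expect is the probabilistic step, namely making Lemma~\ref{lem:uat:dense-core} rigorous for masks produced by the factorised model rather than by idealised i.i.d.\ graphon sampling. The edges are conditionally independent only given $\mathcal G_n$. Moreover, the sorted latents $v_j$ approximate $Q_\psi$ only in the limit, so the lower bound $p_\star$ should be applied with a slack $p_\star/2$ for large $n$ using Assumptions~\ref{assum:input-profile}--\ref{assum:threshold}. Once the core exists, the remainder is the deterministic embedding above.
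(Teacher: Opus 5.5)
Your proposal is correct and follows the paper's proof essentially step by step. Classical UAT gives an $n$-independent width $\tilde n_\varepsilon$, Lemma~\ref{lem:uat:dense-core} supplies a fully connected $k\times\tilde n_\varepsilon$ core on the active inputs $I$, and the dense approximant is embedded by zeroing all other output weights and off-core input weights; your sketch of the lemma (active columns, conditional independence given the latents, slack $p_\star/2$, Chernoff bound) also mirrors the paper's own proof of that lemma.
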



\begin{remark}
Theorem~\ref{thm:uat:active} states that universality on any fixed $k$-coordinate subspace holds whenever the limit graphon $\cW$ contains an \emph{active block} $U^\star\times V^\star$ with a uniform edge floor $\cW\ge p^\star>0$; equivalently, one needs $\alpha n (p^\star)^k\to\infty$ to ensure enough hidden units connect simultaneously to $k$ active inputs.
For Random pruning ($\cW\equiv\rho$), this reduces to $n\rho^k\to\infty$, which can fail at high sparsity since edges are spread uniformly. 
In contrast, data-dependent PaI can concentrate the same sparsity budget on task-relevant coordinates, yielding a larger effective floor $p^\star$ on a non-vanishing active rectangle (with $\alpha$, $\beta$ bounded below), thereby preserving dense cores and approximation power in the high-sparsity regime.
\end{remark}

\input{ntrf_bound}

\vspace{-0.3cm}
\section{Conclusion}
In this work, we provide the first rigorous derivation of the infinite-width limits for PaI methods. 
By introducing the Factorised Saliency Model, we prove that discrete masks generated by popular PaI methods converge to limit bipartite graphons. 
This limit captures the asymptotic sparsity geometry and yields a principled way to compare different pruning rules through their induced graphons.
In particular, Random pruning converges to homogeneous graphons, and data-driven methods induce structured heterogeneity.
Leveraging this graphon limit perspective, we establish two fundamental guarantees for sparse learning. First, we derive a Universal Approximation Theorem for sparse networks restricted to "active" coordinate subspaces, demonstrating that sparse networks retain the expressivity of dense models on the manifold of relevant features. Second, through the extension of the Graphon-NTK, we link topology to generalisation via the concept of path density. Our analysis confirms that heterogeneous graphons improve kernel alignment by concentrating path density on informative signals, whereas homogeneous ones dilute path density uniformly. 
Future work could leverage these topological insights to design principled pruning criteria that focus on maximising path density on important features. Additionally, investigating the finite-width corrections to this asymptotic theory would help further bridge the gap to practical implementation.

\section*{Impact Statement}
This work establishes a rigorous continuous-limit framework for Pruning-at-Initialisation (PaI), characterising the asymptotic limits of sparse network topologies via continuous function kernels. We prove that popular PaI methods converge to deterministic graphons, providing the mathematical tools guarantee the expressivity and generalisation of sparse networks. These insights pave the way for designing principled, highly efficient sparse training algorithms, directly contributing to the effort to reduce the computational and environmental costs of modern deep learning.

\bibliography{reference}
\bibliographystyle{icml2026}

\newpage
\appendix
\onecolumn

\input{appendix}


\end{document}

%% file: ntrf_bound.tex
\vspace{-0.4cm}
\section{Generalisation error bound}
\label{sec:ntrf_bound}

Generalisation error bounds for overparameterised networks depend crucially on \emph{architecture and training regime} \cite{novak2018sensitivity, arora2019fine}. For PaI, the architecture itself is random or data-dependent through masks $M_n$, making it unclear what the right limiting object is. The graphon limit framework makes this dependence precise: if $W_{M_n} \to \cW$, then the associated Graphon-NTK admits a deterministic limit that depends only on $\cW$ and the data distribution \cite{pham2025the}.
This lifts generalisation analysis from a random finite mask to a deterministic operator defined by $\cW$.
%
Here, we derive the generalisation error bound for overparameterised sparse neural networks in the Graphon-NTK regime, in parallel with the bounds for dense models in \cite{cao2019generalization, arora2019fine}. 
Then, we show that this bound depends on the input-output path density of layer graphons, which shows how the connectivity pattern of pruning masks affects generalisation capacity.

\subsection{From graphon limits of PaI to Graphon-NTK}

Let $\mathcal D$ be a data distribution and let
$S=\{(x_i,y_i)\}_{i=1}^m$ be i.i.d. samples from $\mathcal D$.
As discussed in Section~\ref{sec:graphon-convergence}, masks generated by PaI methods converge to specific graphons, which describes the infinite-width limit of the induced sparsity structure. 
Given such limiting graphon $\mathcal{W}$, the authors of \cite{pham2025the} defines the associated \emph{Graphon-NTK kernel} $\Theta_{\mathcal{W}}:\mathcal X\times\mathcal X\to\mathbb R$
as the deterministic infinite-width limit of the NTK for graphon-induced sparse networks.
On the dataset $S$, we define the \emph{Graphon-NTK Gram matrix} $(K_{\mathcal{W}})_{ij} := \Theta_{\mathcal{W}}(x_i,x_j)\in\mathbb R^{m\times m}$.


\subsection{Graphon-NTK generalisation bound}


We analyse the infinite-width \emph{Graphon-NTK (lazy) regime}, in which training dynamics of the masked network
on a fixed sample are governed by the limiting 
$K_{\cW}$.
This viewpoint parallels the classical NTK linearisation results \citep{lee2019wide, chizat2019lazy}.
Under this regime, we can directly apply the generalisation bound of \citet{cao2019generalization}
with $K_{\cW}$ in place of the finite-width NTK Gram matrix.


\begin{theorem}[Infinite-width Graphon-NTK generalisation bound]
\label{thm:graphon-cor310}
Fix $\delta\in(0,e^{-1})$ and assume $\lambda_{\min}(K_{\mathcal W})\ge \lambda_0>0$. As widths $\to \infty$, with probability at least $1-\delta$ over $S$, the expected classification error of an associated \emph{output predictor} of the neural network $\hat f$ (see Appendix \ref{app:proof_thm61:main})
%
%
satisfies
\begin{equation}
\label{eq:graphon-cor310}
\mathbb E\!\left[\mathcal{L}^{\mathcal D}_{0\text{-}1}(\hat f)\right]
\;\le\;
C\,L\cdot \sqrt{\frac{y^\top K_{\mathcal{W}}^{-1}y}{m}}
\;+\;
C\sqrt{\frac{\log(1/\delta)}{m}},
\end{equation}
where $y=(y_1,\dots,y_m)^\top$, $C>0$ is a universal constant,
and $L$ is the depth factor.
\end{theorem}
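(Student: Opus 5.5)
The plan is to reduce Theorem~\ref{thm:graphon-cor310} to the dense-network generalisation bound of \citet{cao2019generalization} (their Corollary~3.10), applied to the masked architecture. That bound needs two ingredients: (i) an NTK Gram matrix whose minimum eigenvalue stays bounded away from zero, and (ii) a neural tangent random feature (NTRF) argument, where gradient descent started from a random initialisation remains in a small ball in which the network is nearly linear in its parameters.

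The argument is carried out in three steps.

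\textbf{Step 1: linearisation of the masked network.} Fix a finite width $n$ and the mask $M_n$. The masked network $x\mapsto f(\theta\odot M_n;x)$ is an ordinary network whose parameters are the active entries. The linearisation and semi-smoothness lemmas of \citet{cao2019generalization} therefore apply verbatim, with the fan-in of each unit replaced by its number of active inputs. Only two properties of these lemmas need to be checked under masking: forward and backward signals keep $\Theta(1)$ norms at initialisation, and the Jacobian changes by $O(\omega^{1/3}\sqrt{\log n})$ in a ball of radius $\omega$. Both follow from Gaussian concentration, because the normalisation used by \citet{pham2025the} rescales by the graphon degree, and that degree is bounded below whenever $\lambda_{\min}(K_{\cW})>0$.

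\textbf{Step 2: the NTRF comparison.} Following Cao--Gu, the expected $0$-$1$ error of the output predictor $\hat f$ (a uniformly random SGD iterate) is bounded by
\begin{itemize}
\item the NTRF empirical risk, plus
\item a Rademacher term of order $L\,R/\sqrt m$, where $R$ is the distance travelled from initialisation, plus
\item a confidence term $\sqrt{\log(1/\delta)/m}$.
\end{itemize}
Choosing the NTRF reference function $\nabla f(\theta_0)^\top(\theta-\theta_0)$ that interpolates $\tilde y$ gives the finite-width Gram matrix $K_n=\nabla f(\theta_0)^\top\nabla f(\theta_0)$. The minimal-norm interpolant then has $R^2 = y^\top K_n^{-1} y$, up to scaling of $y$ chosen to make the logistic loss small.

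\textbf{Step 3: passing to the limit.} Invoke the result of \citet{pham2025the} together with Theorem~\ref{thm:graphon-convergence}: since $W_{M_n}\to\cW$ in cut distance, $K_n\to K_{\cW}$ in probability entrywise, and hence in operator norm because $m$ is fixed. By Weyl's inequality, $\lambda_{\min}(K_n)\ge\lambda_0/2$ with high probability for large widths. Continuity of matrix inversion on that set gives $y^\top K_n^{-1}y\to y^\top K_{\cW}^{-1}y$. The widths-dependent error terms of Cao--Gu vanish as $n\to\infty$. A union bound combines the sample-level failure probability $\delta$ with the vanishing width-level failure probability, which yields \eqref{eq:graphon-cor310}.

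\textbf{Main obstacle.} I expect Step~3, specifically the operator-level convergence $K_n\to K_{\cW}$, to be the hardest part. It requires the kernel functional to be continuous in the cut metric. For one hidden layer this continuity comes from writing $\Theta$ as an integral of $\cW(u,v)$ against bounded test functions. For depth $L$ one must iterate the layerwise Graphon-NTK recursion, and that recursion is where the depth factor $L$ enters. I would cite \citet{pham2025the} for this convergence rather than reprove it. The masked semi-smoothness lemma from Step~1 is secondary but needs care where degrees are small. I would handle it by restricting to the positive-measure region where $\cW$ is bounded below, as in Assumption~\ref{ass:uat:active-rect}.
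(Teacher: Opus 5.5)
There is a genuine gap. Your Steps~1 and~3 claim to derive exactly what the paper takes as hypotheses, and those derivations fail for PaI masks.

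The paper never runs the NTRF argument for the masked network. It imposes Assumption~\ref{assump:graphon_ntk_lazy_app}: the finite-width NTK at initialisation converges to $K_{\mathcal W}$, it stays constant during training, and the network's output predictor converges in law to that of kernel gradient descent with Gram matrix $K_{\mathcal W}$. It then substitutes $K_{\mathcal W}$ for $\Theta^{(L)}$ in Cao--Gu's Corollary~3.10.

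\textbf{Step~1.} The masked network is not an ordinary Gaussian-initialised network to which Cao--Gu's lemmas apply verbatim. The mask is computed from the very weights that training starts from. So conditional on $M_n$, the active weights are not i.i.d.\ $\mathcal N(0,1)$: magnitude pruning keeps only $|\theta_{ij}|>\tau$, and SNIP keeps edges with large $|a_j|\,|\sigma'(h_j)|\,|\theta_{ij}|$. The initialisation, semi-smoothness and kernel-concentration lemmas behind Cao--Gu need Gaussian weights independent of the architecture.

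Your degree justification is also false: $\lambda_{\min}(K_{\mathcal W})>0$ does not bound the graphon degree below. For SNIP with ReLU, $\psi_{n,j}=|a_j|\,\mathbf 1\{h_j>0\}$ vanishes for about half the neurons. Hence $Q_\psi(v)=0$ and $\mathcal W(\cdot,v)\equiv 0$ for $v\le 1/2$, yet the remaining neurons can still make $K_{\mathcal W}$ positive definite.

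\textbf{Step~3.} Cut-distance convergence of the masks (Theorem~\ref{thm:graphon-convergence}) does not yield $K_n\to K_{\mathcal W}$. The NTK depends on the joint law of $(M_n,\theta,a)$, not on $W_{M_n}$ alone.
\begin{itemize}
\item By the same selection effect, each hidden unit's pre-activation covariance after masking is governed by $\mathbb E[\theta^2\mathbf 1\{\text{kept}\}]$ rather than by $\mathcal W$. For magnitude pruning this equals $\rho+2\tau\phi(\tau)>\rho$, with $\phi$ the standard normal density.
\item For SNIP, $a_j^2$ is correlated with the degree of column $j$, so the first-layer NTK term does not factor through $\mathcal W$.
\item For depth $L$, the layerwise convergence relabels each layer independently: columns of layer $\ell$ are sorted by $\psi^{(\ell)}$, rows of layer $\ell+1$ by $\varphi^{(\ell+1)}$. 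The Graphon-NTK recursion, however, integrates over a \emph{common} coordinate for each hidden layer.
\end{itemize}
Citing \citet{pham2025the} does not close this. As the paper describes it, that kernel is the limit for networks whose weights are independent at initialisation with graphon-modulated variances.

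\textbf{Repair.} State these ingredients as hypotheses, as the paper does. For Random pruning, where the mask is independent of the weights, your Step~1 is plausibly provable along your lines. Granting the hypotheses, your Step~3 mechanics are sound:
\begin{itemize}
\item operator-norm convergence for fixed $m$;
\item Weyl's inequality to keep $\lambda_{\min}(K_n)\ge\lambda_0/2$;
\item continuity of $y^\top K^{-1}y$;
\item a union bound over width-level and sample-level failures.
\end{itemize}
These give a more careful transfer of Cao--Gu's finite-width statement to $K_{\mathcal W}$ than the paper's direct substitution. That substitution plugs $K_{\mathcal W}$ into a corollary stated for SGD-trained networks, not for kernel gradient descent.
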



\begin{figure*}[t!]
    \centering
    \includegraphics[width=0.9\textwidth]{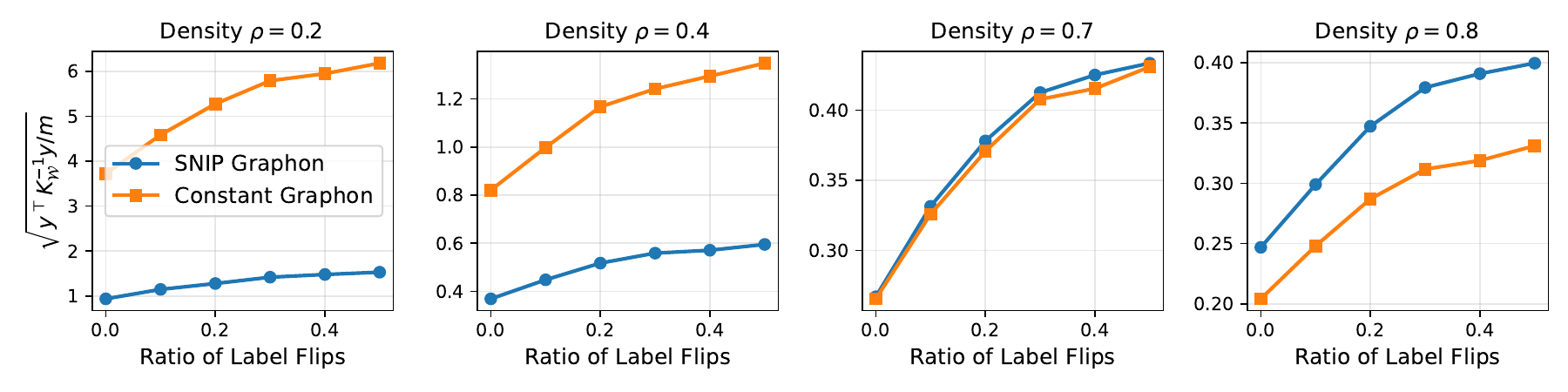}
    \vspace{-0.3cm}
    \caption{Sensitivity of Graphon-NTK Complexity to Label Noise and Sparsity.
    Each panel plots the theoretical complexity $y^\top K_{\mathcal{W}}^{-1}y$ (y-axis) against the ratio of randomised labels (x-axis) for a specific density $\rho$.
    }
    \label{fig:graphon_ntk_results}
    \vspace{-0.5cm}
\end{figure*}

\begin{remark}
The complexity term $y^\top K_{\mathcal{W}}^{-1}y$ measures the RKHS norm of the label vector under kernel $\Theta_{\mathcal{W}}$. Smaller values indicate the target labels are easier to separate in the induced feature space, directly linking pruning topology to generalisation capability. We provide proof in Appendix \ref{app:gen_err_bound}.
\end{remark}

\vspace{-0.3cm}
\subsection{Path density proxy for the Graphon-NTK in 2-hidden-layer networks ($L=2$)} 
To make the bound concrete, we relate $K_\mathcal{W}$ to the graphon structure. 
For simplicity, we consider $L=2$ hidden layers with graphon limits $\mathcal{W}^{(1)}$, $\mathcal{W}^{(2)}$, and $\mathcal{W}^{(3)}$ of layerwise pruning masks, where $u_0,u_1,u_2,u_3\in[0,1]$ index input coordinates, hidden neurons, and output positions.

\begin{proposition}
\label{prop:path_density_proxy}
Assume the activation $\sigma$ is twice differentiable, and then the absolute value Graphon-NTK Gram matrix satisfies that
\vspace{-0.3cm}
\begin{align}
|K_{\mathcal{W}}(x,x')| &\lesssim 
\bigg(\sum _{\ell=0}^1 C_l \Big | \int_0^1 \Sigma^{(\ell)}(u_l,x,x')du_l \Big |\bigg)\cdot \nonumber \\ 
& \quad \bigg( \int_0^1 |\Sigma^{(0)}(u_0,x,x')| \, P_\mathcal{W}(u_0) \,du_0 \bigg)
\end{align}
where $C_l$'s are constants, $\Sigma^{l}(u_l,x,x')$ is activation covariance, $P_\mathcal{W}(u_0) = \int_{[0,1]^3} \mathcal{W}^{(1)}(u_1, u_0)\mathcal{W}^{(2)}(u_2, u_1) $ $\mathcal{W}^{(3)}(u_3, u_2) du_3du_2 du_1$ is the path density from input $u_0$ to output determined by layer graphons, and
$\Sigma^{0}(u_0,x,x')$ is the input data correlation.
\end{proposition}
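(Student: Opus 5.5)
The plan is to start from the explicit recursive form of the Graphon-NTK of \citet{pham2025the} for an $L=2$ network. I will bound each factor of that recursion by elementary means (bounded derivatives, $0\le\mathcal W\le 1$, Fubini). Then I will regroup so that the graphon-dependent quantities collapse into the path density $P_{\mathcal W}$.

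\textbf{Step 1 (recursive form).} For a graphon-sparsified network, the Graphon-NTK has the following structure, which I take from \citet{pham2025the}.
\begin{itemize}
\item The input layer has covariance $\Sigma^{(0)}(u_0,x,x') = x_{u_0}x'_{u_0}$, the coordinatewise data correlation.
\item Pre-activation covariances propagate as $\Lambda^{(\ell)}(u_\ell,x,x')=\int_0^1 \mathcal W^{(\ell)}(u_\ell,u_{\ell-1})\,\Sigma^{(\ell-1)}(u_{\ell-1},x,x')\,du_{\ell-1}$.
\item Activation covariances are $\Sigma^{(\ell)}(u_\ell)=\mathbb E[\sigma(g)\sigma(g')]$, and derivative covariances are $\dot\Sigma^{(\ell)}(u_\ell)=\mathbb E[\sigma'(g)\sigma'(g')]$, with $(g,g')\sim\mathcal N(0,\Lambda^{(\ell)}(u_\ell))$.
\end{itemize}
The kernel is then a sum over layers $\ell\in\{0,1\}$ (the weight layers whose gradients contribute). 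The $\ell$-th term is an integral over $u_\ell,\dots,u_3$ of $\Sigma^{(\ell)}(u_\ell)$ times the backward chain $\prod_{\ell'>\ell}\mathcal W^{(\ell'+1)}(u_{\ell'+1},u_{\ell'})\,\dot\Sigma^{(\ell')}(u_{\ell'})$.

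\textbf{Step 2 (kill the derivative factors).} Twice differentiability of $\sigma$, together with bounded inputs and hence bounded covariances, gives $\|\sigma'\|_\infty<\infty$ on the relevant range. Therefore $|\dot\Sigma^{(\ell')}|\le \|\sigma'\|_\infty^2$. Each $\ell$-th term is then bounded by a constant times
\[
\int \Big|\Sigma^{(\ell)}(u_\ell)\Big| \prod_{\ell'>\ell}\mathcal W^{(\ell'+1)}(u_{\ell'+1},u_{\ell'})\,du_\ell\cdots du_3 .
\]

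\textbf{Step 3 (factorise).} This is where the two factors of the claimed bound are produced. I would write $\Sigma^{(\ell)}(u_\ell)$ as its average $\int\Sigma^{(\ell)}du_\ell$ plus a fluctuation, and use $\mathcal W\le 1$ to strip the backward-chain graphons down to constants. This yields the first factor $\sum_\ell C_\ell|\int\Sigma^{(\ell)}du_\ell|$, where the constants $C_\ell$ absorb $\|\sigma'\|_\infty$ and the multiplicity of layers.

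For the second factor I would use the mean-value/Taylor bound $|\sigma(a)-\sigma(0)|\le\|\sigma'\|_\infty|a|$, available since $\sigma\in C^2$. This controls the layer-$\ell$ activation covariance through the pre-activation covariance $\Lambda^{(\ell)}$, and hence linearly through $\int\mathcal W^{(1)}|\Sigma^{(0)}|$. Substituting this into the forward chain and applying Fubini, the integrals over $u_1,u_2,u_3$ of $\mathcal W^{(1)}\mathcal W^{(2)}\mathcal W^{(3)}$ collapse to exactly $P_{\mathcal W}(u_0)$. This gives the factor $\int|\Sigma^{(0)}(u_0)|P_{\mathcal W}(u_0)\,du_0$. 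The product form then follows by bounding the kernel by the product of the "global scale" factor and the "path-weighted input" factor. Since the statement is only up to $\lesssim$, generous constants are acceptable.

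\textbf{Main obstacle.} The main obstacle is Step 3, namely legitimately splitting a single nested integral into a product of two integrals rather than a sum. My first attempt will be to treat the NTK term as an inner product in $L^2$ of the latent variables. One side carries the activation covariance and the other carries the graphon chain applied to $|\Sigma^{(0)}|$. I would then use a Hölder $L^1$--$L^\infty$ split, bounding the $L^\infty$ part by the averaged covariance via the Lipschitz/Taylor control of $\sigma$.

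If a clean product does not come out, the fallback is to argue under a mean-field proxy. In that proxy, hidden-layer covariances are replaced by their $u$-averages, which is justified when $\Lambda^{(\ell)}$ varies little in $u_\ell$. Under that replacement the factorisation becomes exact, and the inequality holds with constants depending on $\|\sigma'\|_\infty$, $\|\sigma''\|_\infty$ and the input bound.
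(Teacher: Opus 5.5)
There is a genuine gap, and it sits in Steps~2--3.

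\textbf{The first factor is free; the second is lost.} Splitting into a product is not the obstacle you think it is. In the formula you wrote in Step~1, which is also the one the paper uses, $u_\ell$ enters the $\ell$-th term only through $\Sigma^{(\ell)}(u_\ell)$. So $B_\ell:=\int_0^1\Sigma^{(\ell)}(u_\ell;x,x')\,du_\ell$ factors out exactly, and
$K_{\mathcal W}=B_0\int_0^1\dot\Sigma^{(1)}P_1\,du_1+B_1\int_0^1\dot\Sigma^{(2)}P_2\,du_2$.
Here $P_1(u_1)$ and $P_2(u_2)$ are the downstream chains built from $\mathcal W^{(2)}\dot\Sigma^{(2)}\mathcal W^{(3)}\dot\Sigma^{(3)}$ and $\mathcal W^{(3)}\dot\Sigma^{(3)}$. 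The first factor $\sum_\ell C_\ell|B_\ell|$ therefore costs nothing, and no H\"older split or mean-field proxy is needed.

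The real work is the second factor, and Step~2 erases it. Once $B_\ell$ is pulled out, all remaining dependence of the $\ell$-th term on $\Sigma^{(0)}$ and $\mathcal W^{(1)}$ is carried by the derivative covariances, through their cross-covariance arguments. You replace every one of them by $\|\sigma'\|_\infty^2$. For $\ell=0$ you are then left with at best $C|B_0|\int\mathcal W^{(2)}\mathcal W^{(3)}$, which contains neither $\mathcal W^{(1)}$ nor $|\Sigma^{(0)}|$.

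Step~3 cannot repair this.
\begin{itemize}
\item $\Sigma^{(\ell)}$ occurs only once in the $\ell$-th term and is already spent on $B_\ell$, so it cannot also supply the second factor.
\item For $\ell=0$, $\Sigma^{(0)}$ is the raw input correlation, so there is nothing to Lipschitz-control.
\item For $\ell=1$, bounding $|\Sigma^{(1)}(u_1)|$ through $\int\mathcal W^{(1)}(u_1,u_0)|\Sigma^{(0)}(u_0)|\,du_0$ leaves $u_1$ unlinked to the $\mathcal W^{(3)}$ factor, because your chain has no $\mathcal W^{(2)}(u_2,u_1)$ there. Fubini then gives a product of disconnected integrals, not $P_{\mathcal W}(u_0)$.
\end{itemize}

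\textbf{The missing idea.} This is what the twice-differentiability hypothesis is for: keep the junction factor $\dot\Sigma^{(\ell+1)}$ and make it linear in the pre-activation cross-covariance.
\begin{itemize}
\item Run the Gaussian interpolation argument with $\sigma'$ in place of $\sigma$. It shows that $s\mapsto\E[\sigma'(Z_s)\sigma'(Z'_s)]$ is $H^2$-Lipschitz when $\|\sigma''\|_\infty\le H$.
\item Hence $|\dot\Sigma^{(\ell+1)}(u)|\le H^2|\tilde\Sigma^{(\ell+1)}(u)|+|\E\sigma'(z)\,\E\sigma'(z')|$, with $\tilde\Sigma^{(\ell+1)}(u_{\ell+1})=\int_0^1\mathcal W^{(\ell+1)}(u_{\ell+1},u_\ell)\,\Sigma^{(\ell)}(u_\ell)\,du_\ell$.
\item This requires global bounds $\|\sigma'\|_\infty\le M$ and $\|\sigma''\|_\infty\le H$. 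Bounded inputs do not give them, because the Gaussian arguments are unbounded.
\item For $\ell=0$ this gives directly $|\tilde\Sigma^{(1)}(u_1)|\le\int\mathcal W^{(1)}(u_1,u_0)|\Sigma^{(0)}(u_0)|\,du_0$.
\item For $\ell=1$, also apply the contraction $|\Sigma^{(1)}|\le M^2|\tilde\Sigma^{(1)}|+c'$. Apply it to the copy of $\Sigma^{(1)}$ inside $\tilde\Sigma^{(2)}$, not the one forming $B_1$.
\item Bound by constants only the derivative covariances strictly downstream, inside $P_1$ and $P_2$. This gives $|P_1(u_1)|\lesssim\int\mathcal W^{(2)}(u_2,u_1)\mathcal W^{(3)}(u_3,u_2)\,du_2\,du_3$ and $|P_2(u_2)|\lesssim\int\mathcal W^{(3)}(u_3,u_2)\,du_3$, evaluated at the same junction variable.
\item Fubini then chains $\mathcal W^{(1)}\mathcal W^{(2)}\mathcal W^{(3)}$ into $P_{\mathcal W}(u_0)$.
\end{itemize}

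\textbf{One caution.} The affine bound leaves an additive term such as $|\E\sigma'(z)\,\E\sigma'(z')|\int P_1\,du_1$ with no $|\Sigma^{(0)}|$ in it, which is the same defect as your Step~2. The paper folds this term into $C_1'$. That is legitimate only if the implicit constants may depend on $(x,x')$. With $\tanh$ and $x=x'=\epsilon\bar x$, $\epsilon\to0$, the left side is of order $\epsilon^2$ while the right side is of order $\epsilon^4$. So state that dependence explicitly.
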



\begin{remark}
The path density $P_{\mathcal{W}}(u_0)$ acts as a coordinate-dependent weighting of input correlations $\Sigma^{(0)}(u_0, x, x')$. 
While standard dense kernels weight all input dimensions equally, the Graphon-NTK modulates the contribution of each coordinate $u_0$ by the mass of valid paths connecting it to the output.
Consequently, graphons that concentrate mass on informative coordinates increase $P_{\mathcal{W}}$ where it matters, thereby improving kernel alignment with the target. Please refer to Appendix \ref{app:path-density-proxy} for the full proof.
\end{remark}

\vspace{-0.4cm}
\paragraph{Random vs. data-dependent pruning.}


Random pruning (constant graphon $\mathcal{W}^{(\ell)} \equiv \rho$) scales all paths uniformly: $P_{\mathcal{W}} \equiv \rho^3$. This preserves the dense kernel's eigenstructure but uniformly reduces its magnitude (which is also shown in \cite{pham2025the}), offering no selectivity over input features.
In contrast, data-dependent PaI methods (e.g., SNIP, GraSP) converge to heterogeneous graphons $\mathcal{W}^{(\ell)}(u, v) \approx \phi^{(\ell)}(u)\psi^{(\ell)}(v)$, concentrating path mass on active coordinates where the signal is strong. This can improve both:
$(i)$ \emph{spectral conditioning:} better eigenvalue spread in $K_{\mathcal{W}}$, avoiding spectral collapse; and
$(ii)$ \emph{label alignment:} larger inner products $\langle K_{\mathcal{W}} y, y \rangle$ when paths align with informative features.

\textbf{Caveat.} Overly peaked (concentrated) graphons risk spectral collapse if connectivity becomes too sparse in critical regions. Effective designs should maintain a balance between the core active region and the lower bounded probability structure: high path density on signal coordinates with nonzero background connectivity for stability.

\vspace{-0.3cm}
\subsection{Empirical results}

We validate these theoretical insights by comparing the complexity measure $y^\top K_{\mathcal{W}}^{-1}y$ of data-driven pruning (SNIP Graphon) versus Random Pruning (Constant Graphon). We use wide finite-width proxies ($n=4096$) trained on binary CIFAR-10. We vary the label noise (ratio of flipped labels) and the network densities $\rho \in \{0.2, 0.4, 0.7, 0.8\}$.

Figure~\ref{fig:graphon_ntk_results} shows the complexity bound $y^\top K_{\mathcal{W}}^{-1}y$ increases monotonically with label noise across densities, confirming that the bound correctly captures task difficulty.
The gap between topologies is most pronounced in the sparse regime. At $\rho=0.2$ and $\rho=0.4$, the constant-graphon baseline yields much larger values, consistent with a more ill-conditioned $K_W$, whereas the SNIP graphon remains stable. 
At higher densities ($\rho\ge 0.7$), the two curves become close (and can slightly reverse), suggesting that once the network is sufficiently dense, the bound is less
sensitive to the precise pruning topology.

%


\vspace{-0.5cm}
\paragraph{Takeaway.} 

The graphon limit offers a convenient way to express how the pruning topology enters the generalisation bound through the complexity term. This offer a way to understand the behaviour of PaI methods. In particular, under low density (high sparsity), data-dependent PaI tends to exhibit a lower $y^\top K_\cW^{-1}y$, consistent with more concentrated (heterogeneous) path allocation. This explains why SNIP/GraSP perform better than Random pruning. In contrast, under mild pruning, the constant-graphon (Random pruning) yields a comparable or even smaller complexity term, matching its often competitive performance.

%% file: appendix.tex
\paragraph{Organisation of the Appendix.} This appendix provides detailed mathematical proofs and additional supporting experiments for the results in the main text.

\begin{itemize}
    \item Appendix \ref{app:graphon-convergence} provides the proof of Theorem \ref{thm:graphon-convergence} (Graphon Convergence) using the Factorised Saliency Model.
    
    \item Appendices \ref{app:snip-provable} and \ref{app:grasp-provable} provide formal verification of the fact that SNIP and GraSP satisfy the regularity assumptions required for application of Theorem \ref{thm:graphon-convergence}.
    
    \item Appendix \ref{app:depth_extension} provides details on extension of the Graphon Convergence Theorem to deep neural networks.
    
    \item Appendix \ref{app:uat-proof} provides the proof of Theorem \ref{thm:uat:active} (Universal Approximation on Active Subspaces).
    
    \item Appendices \ref{app:gen_err_bound} and \ref{app:path-density-proxy} provide the proofs for the Theorem \ref{thm:graphon-cor310} (Generalisation Error Bound) and the Proposition \ref{prop:path_density_proxy} (Path Density Decomposition ).
    
\end{itemize}

\input{proof_main_theorem}

\newpage

\input{snip_case}

\newpage

\input{grasp_case}


\newpage

\section{Extension of Graphon Convergence Theorem to Deep Neural Network}
\label{app:depth_extension}

Consider am $L$-hidden layer network at initialisation with widths $n_0=d, n_1,\dots,n_L$ and forward recursion
\begin{equation}
h^{(\ell)}_j
:= \frac{1}{\sqrt{n_{\ell-1}}}\sum_{i=1}^{n_{\ell-1}} \theta^{(\ell)}_{ij}\, a^{(\ell-1)}_i,
\qquad
a^{(\ell)}_j := \sigma\!\big(h^{(\ell)}_j\big),
\qquad
\ell=1,\dots,L,
\end{equation}
where $\{\theta^{(\ell)}_{ij}\}$ are i.i.d.\ across entries and independent across layers, and $\sigma$ is an activation function.
\begin{enumerate}
    \item 
\textbf{Conditional i.i.d.\ of pre-activations.}
Fix a layer $\ell$ and condition on the previous-layer post-activations
$a^{(\ell-1)}=(a^{(\ell-1)}_1,\dots,a^{(\ell-1)}_{n_{\ell-1}})$.
Since each $h^{(\ell)}_j$ depends only on the $j$-th column
$\theta^{(\ell)}_{:j}=(\theta^{(\ell)}_{1j},\dots,\theta^{(\ell)}_{n_{\ell-1}j})$
and the columns $\{\theta^{(\ell)}_{:j}\}_{j=1}^{n_\ell}$ are i.i.d.,
it follows that $\{h^{(\ell)}_j\}_{j=1}^{n_\ell}$ are i.i.d.\ conditional on $a^{(\ell-1)}$.
Specifically, for Gaussian initialization, $h^{(\ell)}_j \mid a^{(\ell-1)} \sim \cN(0,q_{\ell-1})$
with $q_{\ell-1}:=\frac{1}{n_{\ell-1}}\sum_{i=1}^{n_{\ell-1}} (a^{(\ell-1)}_i)^2$.

\item 
\textbf{Conditional i.i.d.\ of post-activations.}
Because $a^{(\ell)}_j=\sigma(h^{(\ell)}_j)$ is a coordinatewise measurable transform,
the family $\{a^{(\ell)}_j\}_{j=1}^{n_\ell}$ is also i.i.d.\ conditional on $a^{(\ell-1)}$.

\item 
\textbf{Deterministic limit law and empirical CDF.}
Under standard infinite-width concentration, e.g.\ bounded/sub-Gaussian moments ensuring
$q_{\ell-1}\to \bar q_{\ell-1}$ in probability and $\bar q_{\ell-1}$ deterministic,
the conditional law of $a^{(\ell)}_j$ becomes asymptotically deterministic. In particular,
the empirical CDF
\begin{equation}
\widehat F_{a^{(\ell)}}(t)
:=\frac{1}{n_\ell}\sum_{j=1}^{n_\ell}\mathbf 1\{a^{(\ell)}_j\le t\}
\end{equation}
converges uniformly in $t$ to a deterministic limit CDF $F_{a^{(\ell)}}$,
and hence empirical quantiles converge to the corresponding deterministic quantiles
whenever $F_{a^{(\ell)}}$ is continuous at the target quantile level.
\end{enumerate}

In the deep factorised saliency model at layer $\ell$,
\begin{equation}
S^{(\ell)}_{n,ij}
=\varphi^{(\ell)}_{n,i}\,\psi^{(\ell)}_{n,j}\,|\xi^{(\ell)}_{n,ij}|,
\qquad
M^{(\ell)}_{n,ij}=\mathbf 1\{S^{(\ell)}_{n,ij}>\tau^{(\ell)}_n\},
\qquad i\in[n_{\ell-1}],\ j\in[n_\ell],
\end{equation}
a common choice is to take $\varphi^{(\ell)}_{n,i}=g\!\big(a^{(\ell-1)}_i\big)$
for some measurable $g$ (e.g.\ $g(x)=|x|$, $g(x)=x^2$, etc.).
By 2.--3., the coordinates of $a^{(\ell-1)}$ are asymptotically i.i.d.\ and their empirical distribution converges to a deterministic limit. Therefore, $\{\varphi^{(\ell)}_{n,i}\}_{i=1}^{n_{\ell-1}}$ satisfies the same input-feature regularity assumptions as in the one hidden layer analysis.

\begin{corollary}[Layerwise graphon convergence in deep neural networks]
\label{cor:layerwise-graphon-deep}
Fix a depth $L\in\mathbb N$. For each layer $\ell=1,\dots,L$, let
$M^{(\ell)}_n\in\{0,1\}^{n_{\ell-1}\times n_\ell}$ be the PaI mask at layer $\ell$
generated by a factorised saliency model
\begin{equation}
S^{(\ell)}_{n,ij}
=\varphi^{(\ell)}_{n,i}\,\psi^{(\ell)}_{n,j}\,|\xi^{(\ell)}_{n,ij}|,
\qquad
M^{(\ell)}_{n,ij}=\mathbf 1\{S^{(\ell)}_{n,ij}>\tau^{(\ell)}_n\}.
\end{equation}
Let $W^{(\ell)}_n:=W_{M^{(\ell)}_n}$ be the associated bipartite step-kernel.
Assume that for each fixed $\ell$, the layerwise analogues of
Assumptions~\textup{A1--A5} hold, with deterministic limiting graphon
$\cW^{(\ell)}\in L^\infty([0,1]^2)$.
Then, as $\min_{\ell} n_\ell\to\infty$,
\begin{equation}
\max_{\ell\in[L]}\ \delta^{\mathrm{bip}}_{\square}\!\big(W^{(\ell)}_n,\cW^{(\ell)}\big)
\;\xrightarrow{\ \mathbb P\ }\; 0.
\end{equation}
\end{corollary}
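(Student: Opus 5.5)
The plan is to reduce the corollary to Theorem~\ref{thm:graphon-convergence}, applied layer by layer, and then to pass from layerwise convergence in probability to convergence of the maximum using the fact that $L$ is fixed.

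\textbf{Step 1: layerwise application.} Fix $\ell\in[L]$. By hypothesis, the layerwise analogues of A\ref{assum:growth}--A\ref{assum:threshold} hold for the triple $(\varphi^{(\ell)}_n,\psi^{(\ell)}_n,\xi^{(\ell)}_n)$ with widths $(n_{\ell-1},n_\ell)$, and the growth condition holds because $\min_\ell n_\ell\to\infty$.

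The only point needing care is that Theorem~\ref{thm:graphon-convergence} is stated with a conditioning $\sigma$-algebra $\mathcal G_n$ built from that layer's features and threshold. Here $\varphi^{(\ell)}_{n,i}=g(a^{(\ell-1)}_i)$ depends on earlier-layer weights $\theta^{(1)},\dots,\theta^{(\ell-1)}$. Since the $\theta^{(\ell)}$ are independent across layers, the edge noise $\xi^{(\ell)}_{n,ij}$ (typically $\theta^{(\ell)}_{ij}$) remains conditionally i.i.d.\ given $\mathcal G^{(\ell)}_n$. This is exactly the layerwise A\ref{assum:noise}, which we assume.

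The input profile condition A\ref{assum:input-profile} is supplied by points 2--3 above: conditional on $a^{(\ell-1)}$, the post-activations are i.i.d. Their empirical CDF converges uniformly to a deterministic $F$, so after sorting the order statistics converge uniformly to the quantile $Q$, at least when $Q$ is continuous. One takes $\varphi^{(\ell)}=g\circ Q$ on the sorted coordinates. Taking this as part of the assumed layerwise A\ref{assum:input-profile}, the theorem yields
$\varepsilon_\ell(n):=\delta^{\mathrm{bip}}_\square(W^{(\ell)}_n,\cW^{(\ell)})\xrightarrow{\Prb}0$,
with $\cW^{(\ell)}(u,v)=\Prb(\varphi^{(\ell)}(u)Q_{\psi^{(\ell)}}(v)|\xi|>\tau^{(\ell)})$.

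\textbf{Step 2: the maximum.} For any $\eta>0$, a union bound gives
\[
\Prb\Big(\max_{\ell\in[L]}\varepsilon_\ell(n)>\eta\Big)\le\sum_{\ell=1}^L\Prb\big(\varepsilon_\ell(n)>\eta\big).
\]
Each term tends to $0$, and $L$ is fixed, so the sum tends to $0$. This gives the claim.

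\textbf{Main obstacle.} The substantive difficulty is Step 1's verification that the deep setting really satisfies the layerwise hypotheses. In particular, the random relabelling of coordinates at layer $\ell-1$ used to sort $\varphi^{(\ell)}$ must be permissible. This is fine, because the bipartite cut distance allows independent measure-preserving maps on rows and columns, so each layer may be relabelled separately. The corollary concerns only marginal layerwise convergence, not a joint alignment of labellings across layers.

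Since the statement assumes A1--A5 hold layerwise, I would rely on that assumption and keep the remark about sorting as justification rather than reprove it.
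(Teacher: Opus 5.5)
Your proposal is correct and follows the paper's own proof sketch essentially verbatim. Like the paper, you apply Theorem~\ref{thm:graphon-convergence} to each layer under the assumed layerwise A1--A5, with $\varphi^{(\ell)}$ built from the previous layer's post-activations, and then conclude with a union bound over the fixed number $L$ of layers.

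One small slip: $\min_\ell n_\ell\to\infty$ alone does not imply the layerwise growth condition $\log(n_{\ell-1}+n_\ell)/\min\{n_{\ell-1},n_\ell\}\to 0$, since consecutive widths could differ exponentially. This is harmless, because layerwise A1 is already part of the hypothesis.
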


\begin{proof}[Proof sketch]
Fix $\ell\in[L]$.
By the layerwise Assumptions A\ref{assum:growth} - A\ref{assum:threshold},  Theorem~\ref{thm:graphon-convergence} applies to the bipartite mask matrix $M_n^{(\ell)}$ and yields
$\delta^{\mathrm{bip}}_{\square}(W^{(\ell)}_n,\cW^{(\ell)})\to 0$ in probability.
Moreover, in the deep setting the \emph{input-feature} sequence at layer $\ell$
can be taken as $\varphi^{(\ell)}_{n,i}=a^{(\ell-1)}_i$, and as shown above the post-activations $\{a^{(\ell-1)}_i\}_{i=1}^{n_{\ell-1}}$ are (asymptotically) i.i.d.\ with a deterministic
limit law. Hence, $\{\varphi^{(\ell)}_{n,i}\}$ satisfies the required input-feature assumption.
Finally, since $L$ is fixed, a union bound gives for any $\varepsilon>0$,
\begin{equation}
\mathbb P\!\left(\max_{\ell\le L}\delta^{\mathrm{bip}}_{\square}(W^{(\ell)}_n,\cW^{(\ell)})>\varepsilon\right)
\le \sum_{\ell=1}^L
\mathbb P\!\left(\delta^{\mathrm{bip}}_{\square}(W^{(\ell)}_n,\cW^{(\ell)})>\varepsilon\right)
\;\xrightarrow{\ }\;0,
\end{equation}
\end{proof}

\subsection{Empirical Verification for 2-Hidden Layer Neural Networks}
\label{app:deep_convergence}

We extend our empirical analysis beyond the single-hidden-layer setting to verify the Graphon Limit Hypothesis in deep architectures. We examine the convergence behavior of a 2-hidden-layer fully connected network ($d \to n \to n \to 1$) under varying sparsity levels $\rho \in \{0.1, 0.2\}$ and activation functions $\sigma \in \{\text{ReLU}, \text{Tanh}, \text{Sigmoid}\}$.

\paragraph{Experimental Setup.}
Inputs are drawn from a standard normal distribution $x \sim \mathcal{N}(0, I_d)$. We generate SNIP masks for networks of increasing width $n \in \{200, 500, 1000, 2000, 4000\}$. To visualise the underlying graphon structure, the discrete adjacency matrices are sorted according to their latent factors:
\begin{itemize}
    \item Layer 1 ($\cW^{(1)}$): Rows are sorted by the input magnitude $|x_i|$.
    \item Layer 2 ($\cW^{(2)}$): Rows are sorted by the magnitude of the incoming activations $|\sigma(h^{(1)}_j)|$, where $h^{(1)}$ are the pre-activations of the first layer.
\end{itemize}
The results are averaged over 20 random seeds to smooth out finite-sample noise.

\paragraph{Theoretical Limit Construction.}
While the limit for layer 1 follows the Half-Normal profile derived in Section~4, the limit for layer 2 is derived recursively. As $n \to \infty$, the pre-activations $h^{(1)}$ converge to a Gaussian process with distribution $\mathcal{N}(0, 1)$ (up to variance scaling). Consequently, the theoretical row factors for layer 2 are modelled by the quantile function of $|\sigma(Z)|$ where $Z \sim \mathcal{N}(0, 1)$.

\paragraph{Results and Discussion.}
Figures~\ref{fig:deep_graphon_convergence_sigmoid_10} - \ref{fig:deep_graphon_convergence_relu_20} illustrate the convergence of empirical masks to the theoretical limits.
Across all activations and densities, the empirical masks converge in the cut-metric topology to the predicted deterministic graphons. At $n=4000$, the empirical masks are visually indistinguishable from the theoretical limit.

The choice of activation function significantly affects the topology of the graphon. For ReLU activations, since weights are initialised follow Normal distribution, the pre-activation $h$ is symmetric around $0$, leading to roughly half of the columns in layer 1 graphon being pruned. Then, in the second layer, both the inputs and the gradients are subject to the ReLU cutoff. This explains why it looks like three-quarters of the graphon are pruned in layer 2.
In contrast, for activations like Tanh and Sigmoid, the high-density region in layer 2 appears more diffuse.

\begin{figure}[h]
    \centering
    \includegraphics[width=0.8\textwidth]{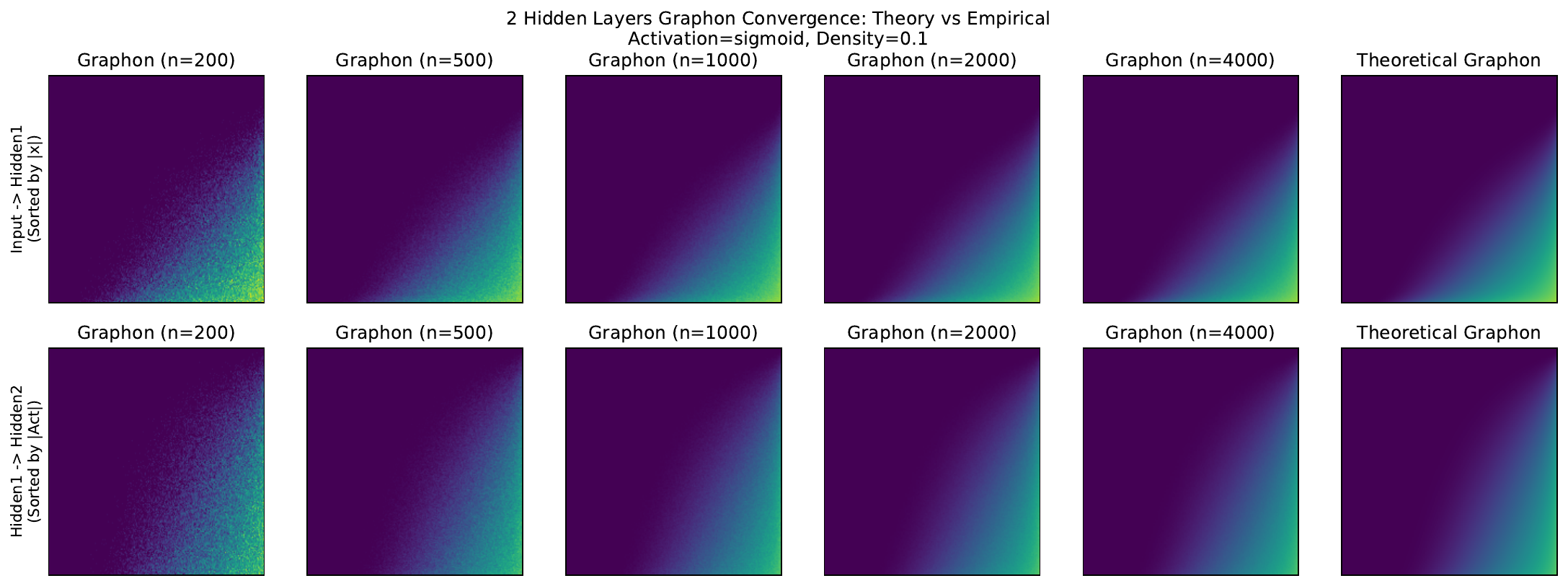} 
    \caption{Graphon convergence in 2-hidden layer networks with sigmoid activation function on two densities $\rho=0.1$.}
    \label{fig:deep_graphon_convergence_sigmoid_10}
\end{figure}

\begin{figure}[h]
    \centering
    \includegraphics[width=0.8\textwidth]{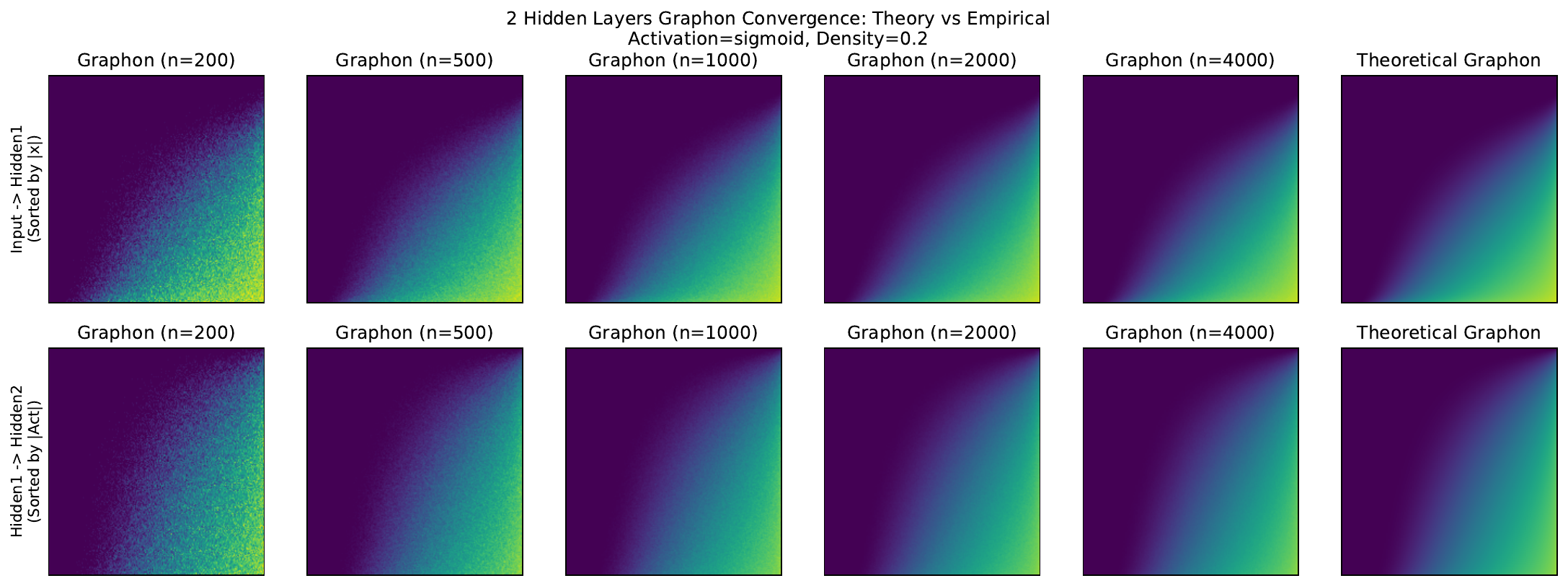} 
    \caption{Graphon convergence in 2-hidden layer networks with sigmoid activation function on two densities $\rho=0.2$.}
    \label{fig:deep_graphon_convergence_sigmoid_20}
\end{figure}

\begin{figure}[h]
    \centering
    \includegraphics[width=0.8\textwidth]{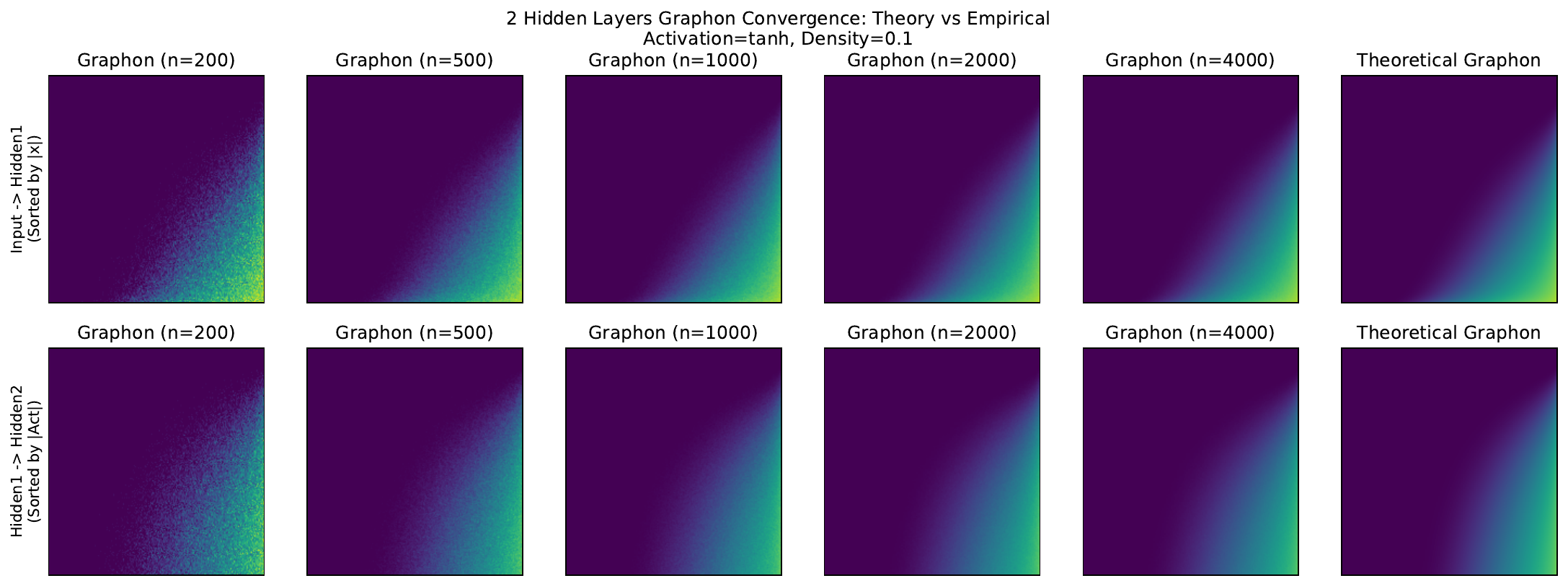} 
    \caption{Graphon convergence in 2-hidden layer networks with tanh activation function on two densities $\rho=0.1$.}
    \label{fig:deep_graphon_convergence_tanh_10}
\end{figure}

\begin{figure}[h]
    \centering
    \includegraphics[width=0.8\textwidth]{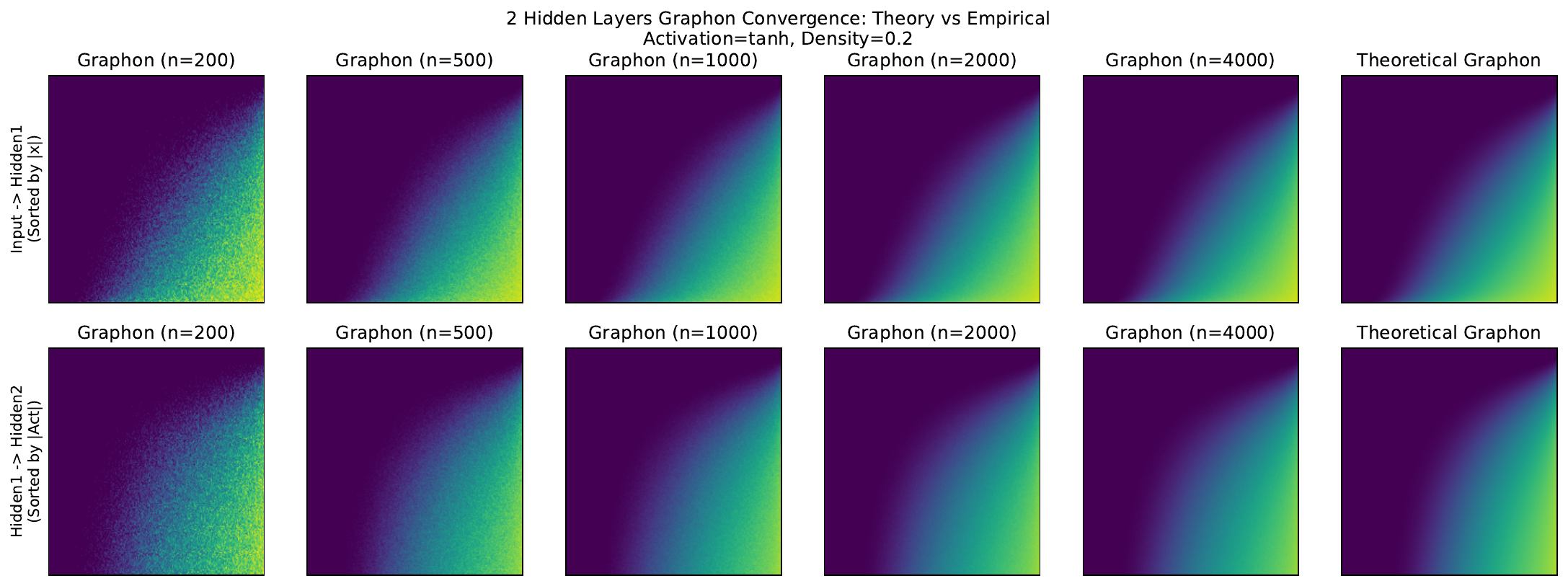} 
    \caption{Graphon convergence in 2-hidden layer networks with tanh activation function on two densities $\rho=0.2$.}
    \label{fig:deep_graphon_convergence_tanh_20}
\end{figure}

\begin{figure}[h]
    \centering
    \includegraphics[width=0.8\textwidth]{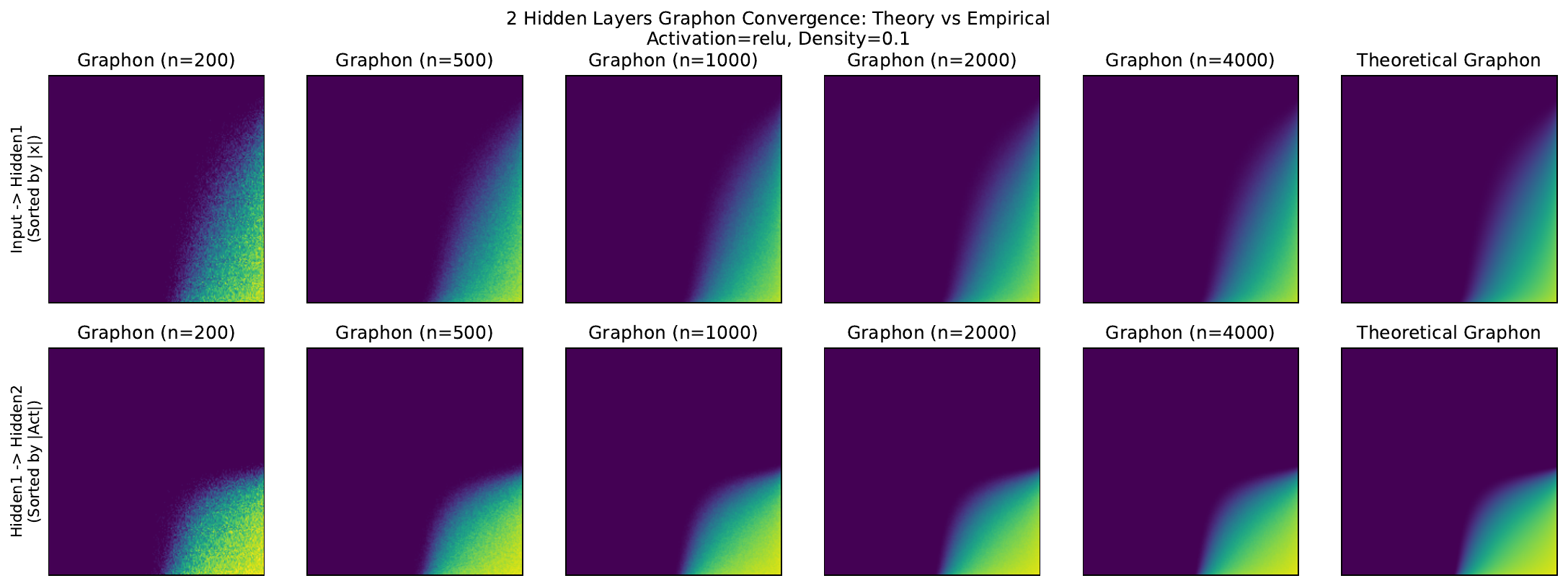} 
    \caption{Graphon convergence in 2-hidden layer networks with ReLU activation function on two densities $\rho=0.1$.}
    \label{fig:deep_graphon_convergence_relu_10}
\end{figure}

\begin{figure}[h]
    \centering
    \includegraphics[width=0.8\textwidth]{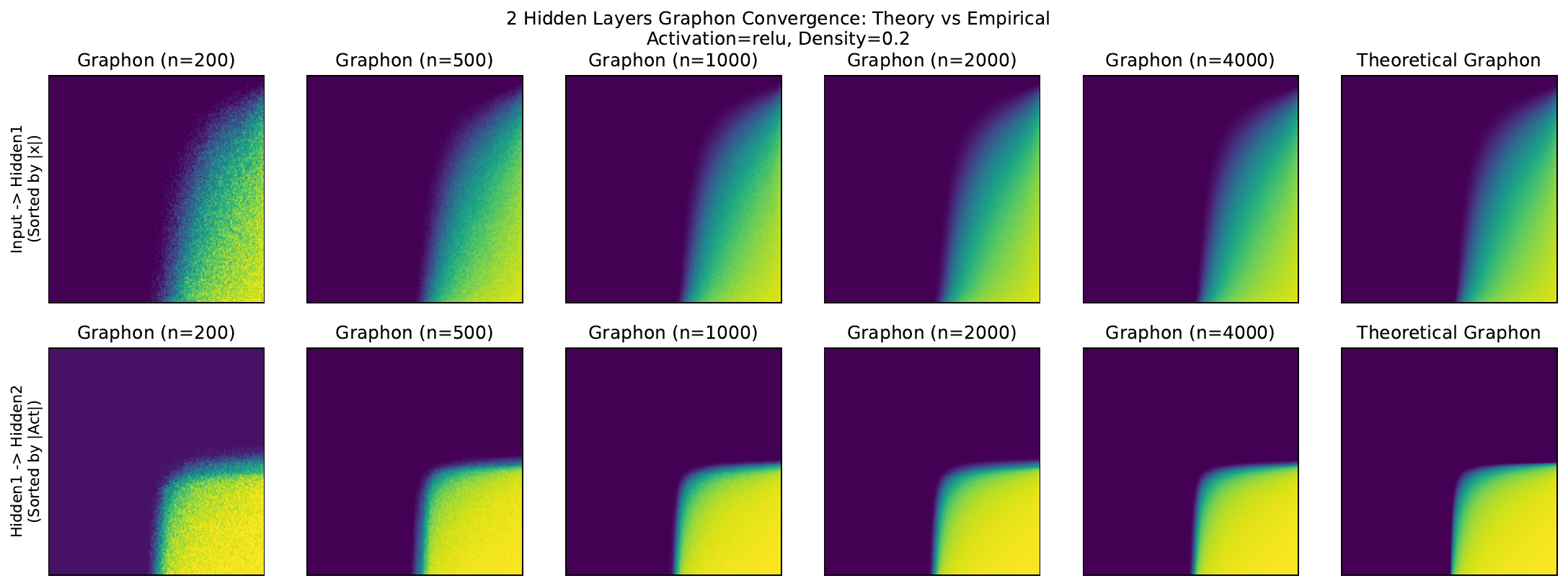} 
    \caption{Graphon convergence in 2-hidden layer networks with ReLU activation function on two densities $\rho=0.2$.}
    \label{fig:deep_graphon_convergence_relu_20}
\end{figure}

\FloatBarrier

\newpage
\input{uat_proof}


\newpage
\input{gen_error_bound_proof}

\newpage
\section{Proof of Proposition \ref{prop:path_density_proxy}}
\label{app:path-density-proxy}

From Section~\ref{sec:graphon-convergence}, masks generated by PaI method converge to a specific graphon, which describes the infinite-width limit of the induced sparsity. 
Given such converged graphon $\mathcal{W}$, \cite{pham2025the} defines the associated \emph{Graphon-NTK kernel} $\Theta_{\mathcal{W}}:\mathcal X\times\mathcal X\to\mathbb R$
as the deterministic infinite-width limit of the neural tangent kernel for the graphon-modulated network.
On the sample $S$, and $L$-hidden-layer network, we define the \emph{Graphon-NTK Gram matrix} $(K_{\mathcal{W}})_{ij} := \Theta_{\mathcal{W}}(x_i,x_j)\in\mathbb R^{m\times m}$:
\begin{align}
&K_\mathcal{W}(x,x')
=\sum_{\ell=1}^L  \int_0^1 \Sigma^{(\ell-1)}(u_{\ell-1};x,x')\,du_{\ell-1} 
\notag
\\
&\int_0^1 
\dot\Sigma^{(\ell)}(u_\ell;x,x') 
 \bigg(\int_{[0,1]^{L-\ell+1}}
\prod_{k=\ell+1}^{L+1} 
\mathcal{W}^{(k)}(u_k,u_{k-1}) \,\dot\Sigma^{(k)}(u_k;x,x') \,du_{\ell+1:L+1}
\bigg) \,du_\ell,
\end{align}

\subsection{Setup and notation}
We consider a 2-hidden layer graphon network, with graphons
$W^{(1)},W^{(2)},W^{(3)}:[0,1]^2\to[0,1]$. For inputs $x,x'$, the Graphon-NTK
decomposes as $K_{\mathcal{W}}=K_{\mathcal{W}}^{(1)}+K_{\mathcal{W}}^{(2)}$ where 
\begin{align}
K_{\mathcal{W}}^{(1)}(x,x')
&=
B_0(x,x')\int_0^1 \dot\Sigma^{(1)}(u_1;x,x')\,P_1(u_1;x,x')\,du_1,
\label{eq:Theta1-def}\\
K_{\mathcal{W}}^{(2)}(x,x')
&=
B_1(x,x')\int_0^1 \dot\Sigma^{(2)}(u_2;x,x')\,P_2(u_2;x,x')\,du_2,
\label{eq:Theta2-def}
\end{align}
with
\begin{equation}
B_0(x,x'):=\int_0^1 \Sigma^{(0)}(u_0;x,x')\,du_0,
\qquad
B_1(x,x'):=\int_0^1 \Sigma^{(1)}(u_1;x,x')\,du_1,
\end{equation}
and the downstream passages
\begin{align}
P_2(u_2;x,x')
&:=\int_0^1 W^{(3)}(u_3,u_2)\,\dot\Sigma^{(3)}(u_3;x,x')\,du_3,
\label{eq:G3-def}\\
P_1(u_1;x,x')
&:=\int_0^1 W^{(2)}(u_2,u_1)\,\dot\Sigma^{(2)}(u_2;x,x')\,P_2(u_2;x,x')\,du_2.
\label{eq:P2-def}
\end{align}
The pre-activation Gaussian cross-covariance at layer $1$ is
\begin{equation}
\widetilde\Sigma^{(1)}(u_1;x,x')
=\int_0^1 W^{(1)}(u_1,u_0)\,\Sigma^{(0)}(u_0;x,x')\,du_0.
\label{eq:tildeSigma1}
\end{equation}
\begin{assumption}[Activation regularity ]
\label{ass:act-variance}
$\sigma$ is a.e.\ twice differentiable with $\|\sigma'\|_\infty\le M$ and $\|\sigma''\|_\infty\le H$.
\end{assumption}

The key steps of the proof are: (i) decompose $K_{\mathcal{W}} = K_{\mathcal{W}}^{(1)} + K_{\mathcal{W}}^{(2)}$
by layer contributions; (ii) bound activation covariances and derivative activation covariances by pre-activation covariances; (iii) recursively substitute graphon-weighted integrals and apply Fubini to isolate the path density term. 


\subsection{Two Gaussian comparison lemmas}

\begin{lemma}[Gaussian covariance contraction (assumed in the main text)]
\label{lem:cov-contraction}
Let $(Z,Z')$ be centered jointly Gaussian and assume $\|\sigma'\|_\infty\le M$. Then
\begin{equation}
\big|\Cov(\sigma(Z),\sigma(Z'))\big|
\le M^2\,\big|\Cov(Z,Z')\big|.
\end{equation}
Equivalently, if $F(s):=\E[\sigma(Z_s)\sigma(Z'_s)]$ where $(Z_s,Z'_s)$ is centered Gaussian
with fixed marginal variances and cross-covariance $s$, then
\begin{equation}
|F(s)-F(0)|\le M^2|s|.
\end{equation}
\end{lemma}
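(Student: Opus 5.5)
The inequality will follow from an interpolation along the cross-covariance, reducing it to the mean value theorem together with Gaussian integration by parts. We may assume $\sigma$ grows at most linearly (it is Lipschitz), so all expectations below are finite.

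\textbf{Step 1 (reduce to bounding $F'$).} Write $v=\E Z^2$, $v'=\E Z'^2$ and $c=\Cov(Z,Z')$, and set $s\in[-\sqrt{vv'},\sqrt{vv'}]$. Let $(Z_s,Z'_s)$ be centered Gaussian with marginal variances $v,v'$ and cross-covariance $s$. At $s=0$ the two coordinates are independent, so
\[
F(0)=\E\sigma(Z)\,\E\sigma(Z').
\]
Since marginal laws do not depend on $s$, the product of means $\E\sigma(Z_s)\E\sigma(Z'_s)$ is the same constant for every $s$. Hence $\Cov(\sigma(Z),\sigma(Z'))=F(c)-F(0)$, and the two formulations in the statement are equivalent. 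It therefore suffices to show $|F'(s)|\le M^2$ for $|s|<\sqrt{vv'}$, and then apply the mean value theorem on $[0,c]$. The boundary case $|c|=\sqrt{vv'}$ follows by continuity of $F$.

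\textbf{Step 2 (Price's theorem).} The key identity is
\[
\frac{d}{ds}\E[g(Z_s)h(Z'_s)]=\E[g'(Z_s)h'(Z'_s)]
\]
for Lipschitz $g,h$. I will prove it first for smooth, compactly supported $g,h$. Differentiate the bivariate Gaussian density $p_s$ in $s$ and use the heat-equation identity $\partial_s p_s=\partial_x\partial_{y}p_s$, which is visible in Fourier space since the characteristic function is $\exp(-\tfrac12(v\xi^2+2s\xi\eta+v'\eta^2))$. Then integrate by parts twice. Alternatively, one can write $Z'_s$ via the Gaussian interpolation $Z_t=\sqrt{t}\,X+\sqrt{1-t}\,Y$ and use Stein's lemma.

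\textbf{Step 3 (approximation and conclusion).} Extend the identity to Lipschitz $\sigma$, which only has $\sigma'$ defined a.e., by mollification $\sigma_\epsilon=\sigma*\eta_\epsilon$. This gives $\|\sigma_\epsilon'\|_\infty\le M$, $\sigma_\epsilon\to\sigma$ locally uniformly with linear growth, and $\sigma_\epsilon'\to\sigma'$ a.e. For $|s|<\sqrt{vv'}$ the joint law is absolutely continuous, so dominated convergence passes to the limit in the integrated form
\[
F(c)-F(0)=\int_0^c \E[\sigma'(Z_s)\sigma'(Z'_s)]\,ds.
\]
Bounding the integrand by $M^2$ gives $|F(c)-F(0)|\le M^2|c|$.

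\textbf{Main obstacle.} The hardest part is rigorously justifying Price's identity under only a.e. differentiability. The issues are the regularity of $\sigma$, the degenerate endpoints, and degenerate variances (if $v=0$ or $v'=0$, then $c=0$ and both sides vanish). I would handle these with the mollification and continuity argument in Step 3, and do the integration by parts only at the smooth level.
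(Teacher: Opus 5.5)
Your proposal is correct and is the paper's argument: interpolate the cross-covariance with the marginals held fixed, apply the Gaussian differentiation (Price) identity $\frac{d}{dt}\E[\sigma(Z_t)\sigma(Z'_t)]=\Cov(Z,Z')\,\E[\sigma'(Z_t)\sigma'(Z'_t)]$, bound the integrand by $M^2$, and integrate; your Steps 2--3 supply the justification the paper simply calls ``standard''. One small tidy-up: the mollified $\sigma_\epsilon$ is not compactly supported, so state Step 2 for smooth functions with bounded derivative (boundary terms vanish by Gaussian decay), and note that you can bound $|F_\epsilon(c)-F_\epsilon(0)|\le M^2|c|$ before letting $\epsilon\to 0$ (where $F_\epsilon$ is $F$ with $\sigma_\epsilon$ in place of $\sigma$), so a.e.\ convergence of $\sigma_\epsilon'$ is not actually needed.
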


\begin{proof}
Fix the marginals and interpolate the cross-covariance: let $(Z_t,Z'_t)$ be centered Gaussian with
$\Cov(Z_t,Z'_t)=t\,\Cov(Z,Z')$ for $t\in[0,1]$, and set $f(t):=\E[\sigma(Z_t)\sigma(Z'_t)]$.
Then $f(1)-f(0)=\Cov(\sigma(Z),\sigma(Z'))$ since $t=0$ yields independence.
A standard Gaussian differentiation identity gives
$f'(t)=\Cov(Z,Z')\,\E[\sigma'(Z_t)\sigma'(Z'_t)]$.
Using $|\sigma'|\le M$ yields $|f'(t)|\le M^2|\Cov(Z,Z')|$, and integrating over $t\in[0,1]$
gives the claim.
\end{proof}

\begin{lemma}[Lipschitz dependence of $\dot\Sigma$ on cross-covariance]
\label{lem:dotSigma-Lipschitz}
Let $(Z_s,Z'_s)$ be centered Gaussian with fixed marginal variances and cross-covariance $s$.
Assume $\|\sigma''\|_\infty\le H$.
Define $\dot F(s):=\E[\sigma'(Z_s)\sigma'(Z'_s)]$.
Then $\dot F$ is Lipschitz and
\begin{equation}
|\dot F(s_1)-\dot F(s_2)|\le H^2\,|s_1-s_2|.
\end{equation}
In particular, we have 
\begin{equation}
\big|\dot F(s)-\dot F(0)\big|\le H^2|s|.
\end{equation}
\end{lemma}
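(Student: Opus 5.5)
The plan is to mirror the interpolation argument used for Lemma~\ref{lem:cov-contraction}, with $\sigma$ replaced by $\sigma'$ and $\sigma'$ replaced by $\sigma''$. Fix the marginal variances $q:=\E[Z_s^2]$ and $q':=\E[Z_s'^2]$. The admissible cross-covariances form the interval $\mathcal S:=[-\sqrt{qq'},\sqrt{qq'}]$, on which the covariance matrix is positive semidefinite. Since $\mathcal S$ is convex, for any $s_1,s_2\in\mathcal S$ the whole segment between them stays admissible. I will therefore show that $\dot F$ is differentiable on the interior of $\mathcal S$ with
\begin{equation}
\dot F'(s)=\E\big[\sigma''(Z_s)\,\sigma''(Z'_s)\big],
\end{equation}
so that $|\dot F'(s)|\le \|\sigma''\|_\infty^2\le H^2$. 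The mean value theorem (or integrating $\dot F'$ along the segment) then gives $|\dot F(s_1)-\dot F(s_2)|\le H^2|s_1-s_2|$. The special case $s_2=0$ is the second display of the lemma.

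The derivative formula is the Gaussian differentiation identity (Price's theorem): for a centred Gaussian pair with fixed marginals and cross-covariance $s$,
\begin{equation}
\frac{d}{ds}\E[g(Z_s)h(Z'_s)]=\E[g'(Z_s)h'(Z'_s)].
\end{equation}
This is exactly the identity used in the proof of Lemma~\ref{lem:cov-contraction}, and here I apply it with $g=h=\sigma'$. One way to verify it is to write the bivariate Gaussian density $p_s$ and use the heat-type equation $\partial_s p_s=\partial_z\partial_{z'}p_s$. Integrating by parts once in each variable then moves the derivatives onto $g$ and $h$. The boundary terms vanish because $\sigma'$ is bounded and the Gaussian tails decay.

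The main obstacle is regularity. The hypothesis says only that $\sigma$ is a.e.\ twice differentiable with $\|\sigma''\|_\infty\le H$, so I will read it as saying that $\sigma'$ is Lipschitz, hence absolutely continuous with a.e.\ derivative $\sigma''$. To justify the integration by parts I will first prove the claim for the mollified function $\sigma'_\epsilon:=\sigma'*\eta_\epsilon$. This function is smooth, satisfies $\|\sigma'_\epsilon\|_\infty\le M$ and $\|(\sigma'_\epsilon)'\|_\infty\le H$, so it obeys the Lipschitz bound with constant $H^2$. As $\epsilon\to0$, $\sigma'_\epsilon\to\sigma'$ pointwise with the uniform bound $M$. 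For nondegenerate $(Z_s,Z'_s)$, dominated convergence then passes the inequality to $\dot F$. At the degenerate endpoints $s=\pm\sqrt{qq'}$, $\dot F$ is continuous in $s$ (again by dominated convergence, since the pair converges in law and $\sigma'$ is bounded and continuous), so the bound extends to the closed interval $\mathcal S$.
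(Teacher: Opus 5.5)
Your proposal is correct and follows essentially the paper's route: the paper applies the Gaussian interpolation (Price) identity from Lemma~\ref{lem:cov-contraction} with $\sigma'$ in place of $\sigma$, which is exactly your formula $\dot F'(s)=\E[\sigma''(Z_s)\sigma''(Z'_s)]$ with $|\dot F'|\le H^2$. Your version is, if anything, more complete: integrating along an arbitrary segment gives the general two-point inequality, whereas the paper's identification $\Cov(\sigma'(Z_s),\sigma'(Z'_s))=\dot F(s)-\dot F(0)$ literally yields only the case $s_2=0$; and your reading of the hypothesis as ``$\sigma'$ Lipschitz'' is genuinely needed, because for ReLU one has $\sigma''=0$ a.e.\ while $\dot F$ is not constant in $s$.
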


\begin{proof}
We apply Lemma~\ref{lem:cov-contraction} with the activation replaced by $\sigma'$.
Indeed, $(\sigma')'$ exists a.e.\ and is bounded by $\|\sigma''\|_\infty\le H$, hence
\begin{equation}
\big|\Cov(\sigma'(Z),\sigma'(Z'))\big|\le H^2|\Cov(Z,Z')|.
\end{equation}
For the Gaussian family indexed by $s$, the centered covariance of $\sigma'(Z_s)$ and $\sigma'(Z'_s)$ equals
$\dot F(s)-\dot F(0)$, which gives the desired inequality.
\end{proof}

\subsection{Path-density bounds for $K_{\mathcal{W}}^{(1)}$ and $K_{\mathcal{W}}^{(2)}$}

\paragraph{Bounding $K_{\mathcal{W}}^{(1)}$ by path density.}

Since the activation derivative is bounded $\|\sigma'\|_\infty \le K$, then the expected corelation between activation derivative $\dot \Sigma^{(l)}(\sigma'(z), \sigma'(z')) \le K^2$
From Eqn. \eqref{eq:G3-def} and Eqn. \eqref{eq:P2-def} and the bounds $\dot\Sigma^{(2)}\le \bar a_2$, $\dot\Sigma^{(3)}\le \bar a_3$,
\begin{equation}
0\le P_1(u_1;x,x')
\le K^4 \int_{[0,1]^2} \mathcal{W}^{(2)}(u_2,u_1)\, \mathcal{W}^{(3)}(u_3, u_2)\,du_2 du_3.
\label{eq:P2-upper}
\end{equation}

Next, let us fix $u_1,x,x'$. The quantity $\dot\Sigma^{(1)}(u_1;x,x')$ is of the form $\dot F(s)$
with $s=\widetilde\Sigma^{(1)}(u_1;x,x')$.
By Lemma~\ref{lem:dotSigma-Lipschitz}, we have
\begin{align}
    \big| \text{Cov} \big(\sigma'(z^{(1)}(u_1, x)), \sigma'(z^{(1)}(u_1, x')) \big) \big|
\le H^2\,\big|\widetilde\Sigma^{(1)}(u_1;x,x')\big|,
\end{align}
\begin{equation}
    \big| \mathbb{E} \big[ \sigma'(z^{(1)}(u_1, x)) \sigma'(z^{(1)}(u_1, x'))\big] - \mathbb{E}\big[\sigma'(z^{(1)}(u_1, x))\big] \,\mathbb{E}\big[\sigma'(z^{(1)}(u_1, x'))\big]   \big|
\le H^2\,\big|\widetilde\Sigma^{(1)}(u_1;x,x')\big|,
\end{equation}
\begin{equation}
    \big| \dot\Sigma^{(1)}(u_1;x,x') - \mathbb{E}\big[\sigma'(z^{(1)}(u_1, x))\big] \,\mathbb{E}\big[\sigma'(z^{(1)}(u_1, x'))\big]   \big|
\le H^2\,\big|\widetilde\Sigma^{(1)}(u_1;x,x')\big|,
\end{equation}
\begin{equation}
    \big| \dot\Sigma^{(1)}(u_1;x,x') \big|
\le H^2\,\big|\widetilde\Sigma^{(1)}(u_1;x,x')\big| + \big |\mathbb{E}\big[\sigma'(z^{(1)}(u_1, x))\big] \,\mathbb{E}\big[\sigma'(z^{(1)}(u_1, x'))\big] \big|,
\end{equation}
Then, we have
\begin{equation}
\big|\dot\Sigma^{(1)}(u_1;x,x')\big|
\le H^2\,\big|\widetilde\Sigma^{(1)}(u_1;x,x')\big| + c_1.
\label{eq:dotSigma1-linear}
\end{equation}

Plugging Eqn. \eqref{eq:dotSigma1-linear} and Eqn. \eqref{eq:P2-upper} into Eqn. \eqref{eq:Theta1-def} gives
\begin{align}
|K_{\mathcal{W}}^{(1)}(x,x')|
&\le |B_0(x,x')|\int_0^1 \Big(
 H^2|\widetilde\Sigma^{(1)}(u_1;x,x')| + c_1
\Big) \, P_1(u_1;x,x')\,du_1 \\
&\le |B_0(x,x')|\, C_1'
\Big(
\int_0^1 |\widetilde\Sigma^{(1)}(u_1;x,x')|\,\int_{[0,1]^2} \mathcal{W}^{(2)}(u_2,u_1)\, \mathcal{W}^{(3)}(u_3, u_2)\,du_2 du_3\,du_1
\Big),
\label{eq:Theta1-split}
\end{align}
where $C_1'$ is a constant depend on the $K, H, c_1$.
Now, we substitute Eqn. \eqref{eq:tildeSigma1} and use Fubini theorem:
\begin{align}
&\int_0^1 |\widetilde\Sigma^{(1)}(u_1;x,x')|\, \int_{[0,1]^2} \mathcal{W}^{(2)}(u_2,u_1)\, \mathcal{W}^{(3)}(u_3, u_2)\,du_3 du_2 \,du_1\\
&\le \int_0^1 \Big(\int_0^1 W^{(1)}(u_1,u_0)\,|\Sigma^{(0)}(u_0;x,x')|\,du_0\Big)\, \int_{[0,1]^2} \mathcal{W}^{(2)}(u_2,u_1)\, \mathcal{W}^{(3)}(u_3, u_2)\,du_3 du_2 \,du_1 \\
&= \int_0^1 |\Sigma^{(0)}(u_0;x,x')|\,
\Big(\int_0^1 W^{(1)}(u_1,u_0)\,\int_{[0,1]^2} \mathcal{W}^{(2)}(u_2,u_1)\, \mathcal{W}^{(3)}(u_3, u_2)\,du_3 du_2\,du_1\Big)\,du_0 \\
&= \int_0^1 |\Sigma^{(0)}(u_0;x,x')|\, \int_{[0,1]^3} \mathcal{W}^{(1)}(u_1, u_0)\mathcal{W}^{(2)}(u_2,u_1)\, \mathcal{W}^{(3)}(u_3, u_2)\,du_3 du_2 du_1 \,du_0.
\label{eq:Theta1-path-density}
\end{align}

Combining Eqn. \eqref{eq:Theta1-split} and Eqn. \eqref{eq:Theta1-path-density} yields 
\begin{align}
|K_{\mathcal{W}}^{(1)}(x,x')|
\lesssim |B_0(x,x')|\,
\Big(
&\int_0^1 |\Sigma^{(0)}(u_0;x,x')|\, \times
\notag 
\\
&
\int_{[0,1]^3} \mathcal{W}^{(1)}(u_1, u_0)\mathcal{W}^{(2)}(u_2,u_1)\, \mathcal{W}^{(3)}(u_3, u_2)\,du_3 du_2 du_1 \,du_0.
\Big)
\end{align}


\paragraph{Bounding $K_{\mathcal{W}}^{(2)}$ by path density.}

Similarly, using the bounded second derivative of activation function, we have
\begin{equation}
    0 \le P_2(u_2;x,x') \le K^2 \int_0^1 W^{(3)}(u_3,u_2)\,du_3,
    \label{eq:P2-bounded}
\end{equation}
Next, we fix $u_2,x,x'$. The quantity $\dot\Sigma^{(2)}(u_2;x,x')$ is of the form $\dot F(s)$
with $s=\widetilde\Sigma^{(2)}(u_2;x,x')$.
By Lemma~\ref{lem:dotSigma-Lipschitz},
\begin{equation}
\big|\dot\Sigma^{(2)}(u_2;x,x')\big|
\le H^2\,\big|\widetilde\Sigma^{(2)}(u_2;x,x')\big| + c_2.
\label{eq:dotSigma2-linear}
\end{equation}

Plugging Eqn. \eqref{eq:P2-bounded} and Eqn. \eqref{eq:dotSigma2-linear} into  Eqn. \eqref{eq:Theta2-def} gives
\begin{align}
|K_{\mathcal{W}}^{(2)}(x,x')|
&\le |B_1(x,x')|\int_0^1 \Big(
 H^2|\widetilde\Sigma^{(2)}(u_2;x,x')| + c_2
\Big) \, P_2(u_2;x,x')\,du_2 \\
&\le |B_1(x,x')|\, C_2'
\Big(
\int_0^1 |\widetilde\Sigma^{(2)}(u_2;x,x')|\,\int_0^1 \mathcal{W}^{(3)}(u_3, u_2)\, du_3\,du_2
\Big),
\label{eq:Theta2-split}
\end{align}
where $C_2'$ is a constant depend on the $K, H, c_2$.
The pre-activation Gaussian cross-covariance at layer $2$ is
\begin{equation}
\widetilde\Sigma^{(2)}(u_2;x,x')
=\int_0^1 W^{(2)}(u_2,u_1)\,\Sigma^{(1)}(u_1;x,x')\,du_1.
\label{eq:tildeSigma2}
\end{equation}
With fixed $u_1, x, x'$, the quantity $\Sigma^{(1)}(u_1;x,x')$ is of the form $F(s)$ with $s = \widetilde{\Sigma}^{(1)}(u_1; x, x')$. By Lemma \ref{lem:cov-contraction}, 
\begin{equation}
\big|\Sigma^{(1)}(u_1;x,x')\big|
\le K^2\,\big|\widetilde\Sigma^{(1)}(u_1;x,x')\big| + c_1'.
\label{eq:bounded-Sigma1}
\end{equation}
where $c_1' = \big | \mathbb{E}[z^{(1)}(u_1; x)] \, \mathbb{E}[z^{(1)}(u_1; x')] \big |$.
Combining Eqn. \eqref{eq:bounded-Sigma1}, Eqn. \eqref{eq:tildeSigma2}, and Eqn. \eqref{eq:tildeSigma1} gives
\begin{equation}
    \big |\widetilde\Sigma^{(2)}(u_2;x,x') \big |
\lesssim \int_{[0,1]^2} W^{(2)}(u_2,u_1)\,\mathcal{W}^{(1)}(u_1, u_0)\ \big | \Sigma^{(0)}(u_0; x,x')  \big |,du_1 du_0.
\label{eq:bounded-tildeSigma2}
\end{equation}
Plugging Eqn. \eqref{eq:bounded-tildeSigma2} into Eqn. \eqref{eq:Theta2-split} gives
\begin{align}
|K_{\mathcal{W}}^{(2)}(x,x')|
\lesssim |B_1(x,x')|\,
\Big(
&\int_0^1 |\Sigma^{(0)}(u_0;x,x')|\, \times 
\notag 
\\
&\int_{[0,1]^3} \mathcal{W}^{(1)}(u_1, u_0)\mathcal{W}^{(2)}(u_2,u_1)\, \mathcal{W}^{(3)}(u_3, u_2)\,du_3 du_2 du_1 \,du_0.
\Big)
\end{align}
Combine the bounds for $K_{\mathcal{W}}^{(1)}$ and $K_{\mathcal{W}}^{(2)}$, and absorb all scaling terms we have
\begin{align}
|K_{\mathcal{W}}(x,x')| &\le |K_{\mathcal{W}}^{(1)}(x,x')| + |K_{\mathcal{W}}^{(2)}(x,x')|    \\
&\lesssim  \bigg( C_1 \Big| \int_0^1  \Sigma^{(1)}(u_1,x,x')du_1 \Big| + C_0 \Big| \int_0^1  \Sigma^{(0)}(u_0,x,x')  du_0 \Big| \bigg)\cdot \\
&\bigg(\int_0^1  |\Sigma^{(0)}(u_0;x,x')| \int_{[0,1]^3} \mathcal{W}^{(1)}(u_1, u_0)\mathcal{W}^{(2)}(u_2,u_1)\, \mathcal{W}^{(3)}(u_3, u_2)\,du_3 du_2 du_1 \,du_0. \bigg)
\end{align}
where $C_0, C_1$ are constant depending on $K, H$ and variance of activations and derivative activations $z^{(l)}$. $\square$

%% file: proof_main_theorem.tex
\section{Graphon Convergence for Factorised Saliency Model}
\label{app:graphon-convergence}

In this appendix, we prove the graphon convergence in bipartite cut-metric for the factorised saliency model
\[
S_{n,ij}=\varphi_{n,i}\,\psi_{n,j}\,|\xi_{n,ij}|,
\qquad
M_{n,ij}=\mathbf 1\{S_{n,ij}>\tau_n\},
\qquad i\in[d],\ j\in[n],
\]
in the joint limit $d=d(n)\to\infty$ and $n\to\infty$.
Let $W_{A}$ denote the bipartite step kernel associated with a matrix $A\in\mathbb R^{d\times n}$, and $\delta_\square^{\mathrm{bip}}$ denote the bipartite cut distance
Eqn. (\ref{def:bip-cut-distance}).
We first restate assumptions and main results, then provide detailed explanations and proofs.

\subsection{Assumptions}
\label{app:assumptions}

We will take the joint limit as $d\to\infty$ and $n\to\infty$. 
We state the assumptions at the necessary level of generality needed for our cut-metric convergence argument. 
We add remarks to indicate typical sufficient conditions.

\begin{assumption}[Mild growth]
\label{app_assum:growth}
\[
\frac{\log(d+n)}{\min\{d,n\}}\to 0 .
\]
\end{assumption}
\noindent
Assumption~\ref{app_assum:growth} ensures that uniform concentration bounds over cut sets dominate
logarithm of the cardinality terms arising from discretisation and union bounds. It is satisfied, for instance, when
$d$ and $n$ grow at any polynomial rate.

\begin{assumption}[Deterministic input features]
\label{app_assum:input-profile}
There exists a bounded continuous function $\varphi:[0,1]\to[0,\infty)$ such that, after a relabeling of input coordinates it holds that
\[
\max_{1\le i\le d}\big|\varphi_{n,i}-\varphi(i/d)\big|\xrightarrow{\Prb}0.
\]
\end{assumption}
\noindent
This assumption identifies a continuum limit for the discrete row features and guarantees that the row part of the edge-probability grid is well-approximated by $\varphi(u)$ uniformly in $i$. Continuity of $\varphi$ controls discretization error when passing from $i/d$ to $u$.

\begin{assumption}[Neuron features empirical CDF convergence]
\label{app_assum:neuron-iid}
Let $\widehat F_{\psi,n}(t):=\frac1n\sum_{j=1}^n\mathbf 1\{\psi_{n,j}\le t\}$.
There exists a deterministic CDF $F_\psi$ supported on $[0,\infty)$ such that
\[
\sup_{t\in\mathbb R}\big|\widehat F_{\psi,n}(t)-F_\psi(t)\big|\xrightarrow{\Prb}0.
\]
\end{assumption}
\noindent
Let $Q_\psi(v):=\inf\{y\ge 0:\ F_\psi(y)\ge v\}$ be the generalised quantile.
After sorting columns by $\psi_{n,j}$, the kernel becomes a deterministic function of the order
statistics $\psi_{n,j}$. This assumption ensures that the sorted column features converge via order
statistics to the quantile curve $v\mapsto Q_\psi(v)$, which defines the limiting column coordinate.

\begin{assumption}[Edge noise and regularity]
\label{app_assum:noise}
Let
\[
\mathcal G_n:=\sigma\!\big(\{\varphi_{n,i}\}_{i\le d},\{\psi_{n,j}\}_{j\le n},\tau_n\big).
\]
Conditioned on $\mathcal G_n$, the noises $\{\xi_{n,ij}\}_{i\le d,\,j\le n}$ are independent with common law $P_\xi$, and are independent of $\mathcal G_n$. Moreover, the CDF of $|\xi|$ on $[0,\infty)$ is
continuous.
\end{assumption}
\noindent
In this assumption, independence of $\xi_{n,ij}$ yields conditional independence of mask entries given $(\varphi_n,\psi_n,\tau_n)$,
enabling application of the matrix Bernstein concentration for the random mask around its conditional mean. 
Continuity of the CDF of $|\xi|$ ensures that the link function $z\mapsto \mathbb P(z|\xi|>\tau)$ is continuous, which
is required to pass to the limit in the mean kernel.

\begin{assumption}[Deterministic threshold]
\label{app_assum:threshold}
There exists $\tau\in(0,\infty)$ such that
\[
\tau_n\xrightarrow{\Prb}\tau
\quad\text{and}\quad \text{$\tau_n$ is $\mathcal G_n$-measurable.}
\]
\end{assumption}
\noindent
This assumption allows the threshold to control sparsity. Convergence of $\tau_n$ ensures that the edge-probability link function stabilizes asymptotically. Measurability with respect to $\mathcal G_n$ guarantees that conditioning on
$\mathcal G_n$ indeed implies edges independent via Assumption~\ref{app_assum:noise}.

\subsection{Proof of Theorem~\ref{thm:graphon-convergence}}

\begin{theorem}[Bipartite graphon convergence, Theorem~\ref{thm:graphon-convergence}]
\label{app_thm:graphon-convergence}
Let $d \to \infty$ and $n \to \infty$. Consider the model
\begin{equation}
S_{n,ij} = \varphi_{n,i} \cdot \psi_{n,j} \cdot |\xi_{n,ij}|, \qquad
M_{n,ij} = \mathbf{1}\{S_{n,ij} > \tau_n\}, \qquad i \in [d], \; j \in [n],
\end{equation}
and let $W_n := W_{M_n}$ denote the step kernel associated with $M_n$. Assume assumptions \ref{app_assum:growth}-\ref{app_assum:threshold}.
Define the limiting kernel $\mathcal{W}:[0,1]^2 \to [0,1]$ by
\begin{equation}
\mathcal{W}(u,v) := \mathbb{P}\!\left(\varphi(u) \cdot Q_\psi(v) \cdot |\xi| > \tau\right),
\end{equation}
where $Q_\psi := F_\psi^{-1}(v) := \inf\{y : F_\psi(y) \ge v\}$ is the generalized quantile function. Then
\begin{equation}
\delta_\square^{\mathrm{bip}}(W_n, \mathcal{W}) \xrightarrow{\Prb} 0.
\end{equation}
\end{theorem}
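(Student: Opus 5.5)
The plan is a three-term triangle inequality in the bipartite cut distance, after a single relabelling. Relabel the rows as in Assumption~\ref{app_assum:input-profile}, and sort the columns so that $\psi_{n,(1)}\le\dots\le\psi_{n,(n)}$. Both are permutations of indices, so the step kernel changes only by measure-preserving maps and $\delta_\square^{\mathrm{bip}}$ is unchanged. Define the conditional edge-probability matrix
\[
P_{n,ij}:=\mathbb E[M_{n,ij}\mid\mathcal G_n]=G_{\tau_n}(\varphi_{n,i}\psi_{n,(j)}),\qquad G_t(z):=\mathbb P(z|\xi|>t),
\]
which is valid by Assumptions~\ref{app_assum:noise}--\ref{app_assum:threshold}. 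We then bound
\[
\delta_\square^{\mathrm{bip}}(W_n,\mathcal W)\le \|W_{M_n}-W_{P_n}\|_\square+\|W_{P_n}-\mathcal W\|_1 .
\]

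\textbf{Step 1: concentration.} Conditional on $\mathcal G_n$, the entries $M_{n,ij}-P_{n,ij}$ are independent, centred and bounded by $1$. For step kernels, the cut norm is attained up to a constant factor on index sets $S\subseteq[d]$, $T\subseteq[n]$. This gives
\[
\|W_{M_n}-W_{P_n}\|_\square\le \frac{C}{dn}\max_{S,T}\Big|\sum_{i\in S,j\in T}(M-P)_{ij}\Big| .
\]
For fixed $(S,T)$, Hoeffding's inequality gives tail $2\exp(-2\varepsilon^2 dn)$. A union bound over $2^{d+n}$ pairs then yields failure probability at most $2\exp\big((d+n)\log 2-2\varepsilon^2dn\big)$. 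This tends to $0$ because $dn/(d+n)\ge\tfrac12\min\{d,n\}\to\infty$, which Assumption~\ref{app_assum:growth} gives with room to spare. Alternatively, one can bound $\|M-P\|_{\mathrm{op}}=O_{\Prb}(\sqrt{d+n})$ by matrix Bernstein and use $\|W_A\|_\square\le\|A\|_{\mathrm{op}}/\sqrt{dn}$. Since the bound holds conditionally with a deterministic rate, the unconditional statement follows.

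\textbf{Step 2: mean kernel.} Here $W_{P_n}(u,v)=G_{\tau_n}(\varphi_{n,\lceil ud\rceil}\psi_{n,(\lceil vn\rceil)})$, and we show this converges to $\mathcal W(u,v)=G_\tau(\varphi(u)Q_\psi(v))$ in $L^1([0,1]^2)$.

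\emph{Rows.} Assumption~\ref{app_assum:input-profile} together with uniform continuity of $\varphi$ gives $\sup_u|\varphi_{n,\lceil ud\rceil}-\varphi(u)|\to0$ in probability.

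\emph{Columns.} Uniform convergence of the empirical CDF (Assumption~\ref{app_assum:neuron-iid}) implies that the empirical quantile $\widehat Q_n(v)=\psi_{n,(\lceil vn\rceil)}$ converges to $Q_\psi(v)$ at every continuity point of $Q_\psi$. These are all $v$ except a countable, hence null, set. Tightness of the empirical distributions handles $v$ near $1$.

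\emph{Link function.} Continuity of the CDF of $|\xi|$ makes $(z,t)\mapsto G_t(z)=1-F_{|\xi|}(t/z)$ jointly continuous on $[0,\infty)\times(0,\infty)$, with $G_t(0)=0$. Note that at $z=0$ we have $t/z\to\infty$, so continuity holds there too. Since $\tau_n\to\tau>0$, we get $W_{P_n}\to\mathcal W$ pointwise a.e.\ in probability. Both kernels take values in $[0,1]$, so bounded convergence (applied along subsequences to pass from convergence in probability to almost-sure convergence) gives $\|W_{P_n}-\mathcal W\|_1\to0$ in probability. Finally, $\|\cdot\|_\square\le\|\cdot\|_1$.

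Combining the two steps with the triangle inequality proves the theorem. I expect the main obstacle to be Step 2's column argument. Passing from uniform CDF convergence to a.e.\ quantile convergence requires care: $F_\psi$ may have atoms or flat parts, and $Q_\psi$ may be unbounded near $v=1$. I would handle this with the standard fact that weak convergence implies quantile convergence at continuity points, applied along a.s.-convergent subsequences. Boundedness of $G$ then absorbs the tails. The concentration step is routine by comparison.
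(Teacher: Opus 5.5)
Your proposal is correct and shares the paper's architecture. Both arguments relabel rows, sort columns by $\psi_{n,j}$, and pass through the conditional-mean kernel $W_{P_n}$ (the paper's $\bar W_n^\pi$). Both then show $\|W_{P_n}-\mathcal{W}\|_1\xrightarrow{\Prb}0$ via quantile convergence at continuity points of $Q_\psi$, continuity of the link $z\mapsto\Prb(z|\xi|>\tau)$, and bounded convergence.

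Where you genuinely differ is the concentration step. The paper uses matrix Bernstein together with $\|B\|_{\square,\mathrm{mat}}\le\|B\|_{\mathrm{op}}/\sqrt{dn}$. Bernstein only gives $\|M-P\|_{\mathrm{op}}\lesssim(\sqrt d+\sqrt n)\sqrt{\log(d+n)}$, and absorbing that $\sqrt{\log(d+n)}$ factor is the only place the paper uses A1. Your parenthetical claim of $O_{\Prb}(\sqrt{d+n})$ ``by matrix Bernstein'' therefore overstates what Bernstein delivers; that rate needs a sharper tool such as an $\varepsilon$-net argument. Your primary route, Hoeffding plus a union bound over the $2^{d+n}$ pairs $(S,T)$, works on the cut norm directly. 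It needs only $dn/(d+n)\to\infty$, i.e.\ $\min\{d,n\}\to\infty$, so it is more elementary and shows that A1 is actually superfluous. The operator-norm route yields spectral control that the theorem never uses.

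In Step~2, the paper avoids subsequences. It writes $\E\|\bar W_n^\pi-\mathcal{W}\|_1=\int\E[D_n(u,v)]\,du\,dv$ with $D_n:=|\bar W_n^\pi-\mathcal{W}|$, then applies dominated convergence and Markov. If you keep your subsequence route, extract the a.s.-convergent subsequence for the three scalar statistics $\max_i|\varphi_{n,i}-\varphi(i/d)|$, $\sup_t|\widehat F_{\psi,n}(t)-F_\psi(t)|$ and $\tau_n$. Do not extract it pointwise in $(u,v)$, which would involve uncountably many events. After that extraction, $W_{P_n}(u,v)\to\mathcal{W}(u,v)$ holds deterministically at every $u$ and every continuity point $v$ of $Q_\psi$ simultaneously, and bounded convergence finishes the argument.
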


\begin{remark}
The limiting kernel $\mathcal{W}$ is monotonic in $v$ due to the use of the quantile function $Q_\psi(v)$.
\end{remark}

\paragraph{High-level summary of the proof.}
We first sketch the main ideas of the proof. Detailed technical steps will be given later.

Let $\pi_n$ be a sorting permutation of the columns by $\psi_{n,j}$. We write $W_n^\pi$ for the corresponding column-relabeling
which does not change $\delta^{\mathrm{bip}}_\square$.
Let $\bar W_n^\pi$ be the conditional-mean kernel, whose matrix entries are
$\bar M^\pi_{n,ij}=\mathbb{P}(\varphi_{n,i}\psi_{n,\pi_n(j)}|\xi|>\tau_n\,|\,\varphi_n,\psi_n,\tau_n)$.
Then
\begin{equation}
    \|W_n^\pi-\cW\|_\square \le \|W_n^\pi-\bar W_n^\pi\|_\square+\|\bar W_n^\pi-\cW\|_\square.
\end{equation}

(i) \emph{Step 1: Concentration bound} (proved in \ref{app:main_thm_concen_mean}) Conditioned on $(\varphi_n,\psi_n,\tau_n)$, the centered entries
$M^\pi_{n,ij}-\bar M^\pi_{n,ij}$ are independent, mean-zero, and bounded, so an application of the matrix Bernstein bound in combination with the estimate
$\|\cdot\|_\square \lesssim (dn)^{-1/2}\|\cdot\|_{\mathrm{op}}$ yields
$\|W_n^\pi-\bar W_n^\pi\|_\square\xrightarrow{\Prb}0$ under Assumption~\ref{assum:growth}.

(ii) \emph{Step2: Mean-kernel limit} (proved in \ref{app:main_thm_mean_kernel_convergence}) Assumptions~\ref{assum:input-profile}--\ref{assum:threshold} imply that
$\bar W_n^\pi(u,v)\to \cW(u,v)$. 
Using the boundedness of ($0\le \bar W_n^\pi\le 1$), it follows that $\|\bar W_n^\pi-\cW\|_1\xrightarrow{\Prb}0$, hence $\|\bar W_n^\pi-\cW\|_\square\xrightarrow{\Prb}0$.
Combining the two terms and using the invariance under relabelling of the cut metric gives the claim.

\subsection{Technical Details for the Proof of Theorem~\ref{thm:graphon-convergence}}


\subsubsection{Equivalence of cut norms}

\begin{lemma}[Step kernel cut norm equals matrix cut norm]
\label{lem:cut-norm-equiv}
For $A \in \mathbb{R}^{d \times n}$, define the matrix cut norm
\begin{equation}
\|A\|_{\square,\mathrm{mat}} := \frac{1}{dn} \max_{S \subseteq [d], \, T \subseteq [n]} \left|\sum_{i \in S} \sum_{j \in T} A_{ij}\right|.
\end{equation}
Then $\|W_A\|_\square = \|A\|_{\square,\mathrm{mat}}$.
\end{lemma}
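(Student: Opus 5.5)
We prove the two inequalities $\|W_A\|_\square \ge \|A\|_{\square,\mathrm{mat}}$ and $\|W_A\|_\square \le \|A\|_{\square,\mathrm{mat}}$ separately.

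\emph{Lower bound.} Fix $S\subseteq[d]$ and $T\subseteq[n]$, and take the measurable sets $\tilde S:=\bigcup_{i\in S} I_i^{(d)}$ and $\tilde T:=\bigcup_{j\in T} J_j^{(n)}$. Since $W_A$ is constant, equal to $A_{ij}$, on each cell $I_i^{(d)}\times J_j^{(n)}$, and each cell has area $1/(dn)$, we get
\[
\int_{\tilde S\times\tilde T} W_A = \frac{1}{dn}\sum_{i\in S}\sum_{j\in T}A_{ij}.
\]
Taking the maximum over $(S,T)$ gives the lower bound.

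\emph{Upper bound.} Fix measurable $S',T'\subseteq[0,1]$ and set $s_i:=d\,\lambda(S'\cap I_i^{(d)})\in[0,1]$ and $t_j:=n\,\lambda(T'\cap J_j^{(n)})\in[0,1]$. Integrating cell by cell gives
\[
\int_{S'\times T'}W_A=\frac{1}{dn}\sum_{i,j}A_{ij}\,s_i t_j .
\]
The map $(s,t)\mapsto \sum_{i,j}A_{ij}s_it_j$ is bilinear, hence affine in each variable separately. It is continuous on the compact box $[0,1]^d\times[0,1]^n$, so its absolute value attains its maximum there.

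Now fix $t$. The function $s\mapsto\big|\sum_i s_i c_i\big|$, with $c_i:=\sum_j A_{ij}t_j$, is convex, so its maximum over $[0,1]^d$ is attained at a vertex $s\in\{0,1\}^d$. Applying the same argument in $t$ with $s$ now fixed at a vertex, the maximum over $[0,1]^n$ is attained at some $t\in\{0,1\}^n$.

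Vertices of the boxes are exactly indicator vectors of subsets $S\subseteq[d]$ and $T\subseteq[n]$. Therefore
\[
\Big|\int_{S'\times T'}W_A\Big|\le \|A\|_{\square,\mathrm{mat}}.
\]
Taking the supremum over $S',T'$ gives the upper bound, and the two bounds together prove the lemma. The only point needing care is this vertex-reduction step, which relies on convexity of the absolute value of an affine function.
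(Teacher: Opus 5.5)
Your proposal is correct and follows essentially the same route as the paper's proof. Both use unions of grid cells for the lower bound, and for the upper bound both reduce fractional cell occupancies $(s_i,t_j)$ to a vertex of $[0,1]^{d+n}$ by multilinearity. Your explicit appeal to convexity of $s\mapsto\bigl|\sum_i s_i c_i\bigr|$, applied first in $s$ and then in $t$, states the absolute-value step more carefully than the paper does.
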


\begin{proof}
Any choice of subsets $S \subseteq [d]$ and $T \subseteq [n]$ corresponds to measurable sets $S^* = \bigcup_{i \in S} I_i$ and $T^* = \bigcup_{j \in T} J_j$. 
That yields $\|W_A\|_\square \ge \|A\|_{\square,\mathrm{mat}}$.

Conversely, let $S, T \subset [0, 1]$ be measurable sets. Let $s_i := d \, \mu(S \cap I_i) \in [0, 1]$ and $t_j := n \, \mu(T \cap J_j) \in [0, 1]$, where $\mu$ is the Lebesgue measure. Since $W_A$ is constant on each set $I_i \times J_j$
\begin{equation}
\int_{S \times T} W_A = \frac{1}{dn} \sum_{i=1}^{d} \sum_{j=1}^{n} A_{ij} \, s_i \, t_j.
\end{equation}

\noindent For fixed $t = (t_j)$, the expression is linear in each $s_i \in [0,1]$, so it is maximized at an extreme point $s_i \in \{0, 1\}$. 
The maximum over $[0,1]^{d+n}$ of a multilinear form is achieved at a vertex.
Similarly for $t_j$. Hence the supremum over measurable $S, T$ equals the maximum over $S_0 \subset [d]$, $T_0 \subset [n]$, i.e.\ $\|W_A\|_{\square} \leq \|A\|_{\square,\text{mat}}$. $\square$

\end{proof}
By Lemma~\ref{lem:cut-norm-equiv}, we can work directly with matrix cut norms.

\subsubsection{Column sorting and  invariance}

Let $\pi_n$ be a permutation such that
$
\psi_{n,\pi_n(1)} \le \psi_{n,\pi_n(2)} \le \cdots \le \psi_{n,\pi_n(n)}.
$
We define the column-permuted mask matrix and its step kernel as
\begin{equation}
M^{\pi}_{n,ij} := M_{n,i\pi_n(j)}, \quad W_n^\pi := W_{M_n^{\pi}}.
\end{equation}

\begin{lemma}[Invariance under row and column permutation]
\label{lem:column-invariance}
We have
\begin{equation}
\delta_\square^{\mathrm{bip}}(W_n, \mathcal{W}) = \delta_\square^{\mathrm{bip}}(W_n^{\sigma,\pi}, \mathcal{W}) \le \|W_n^{\sigma,\pi} - \mathcal{W}\|_\square.
\end{equation}
\end{lemma}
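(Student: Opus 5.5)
The plan is to realise the row and column permutations as measure-preserving maps of $[0,1]$ and then use the fact that the bipartite cut distance optimises over all such maps independently in each coordinate. Let $\sigma_n$ be the relabelling of input coordinates from Assumption~\ref{assum:input-profile}, and $\pi_n$ the column sort. Write $W_n^{\sigma,\pi}$ for the step kernel of $M^{\sigma,\pi}_{n,ij}:=M_{n,\sigma_n(i)\pi_n(j)}$.

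\textbf{Step 1 (permutations as measure-preserving maps).} Define $\phi_\sigma:[0,1]\to[0,1]$ by mapping the interval $I_i^{(d)}$ onto $I_{\sigma_n(i)}^{(d)}$ through the affine translation $u\mapsto u+(\sigma_n(i)-i)/d$. Define $\psi_\pi$ analogously on the intervals $J_j^{(n)}$. Each map is a bijection that is piecewise a translation, so it is measure-preserving, and its inverse is of the same type. Directly from the definition of $W_A$,
\[
W_n^{\sigma,\pi}(u,v)=W_n(\phi_\sigma(u),\psi_\pi(v)),
\]
that is, $W_n^{\sigma,\pi}=W_n^{\phi_\sigma,\psi_\pi}$.

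\textbf{Step 2 (invariance of $\delta_\square^{\mathrm{bip}}$).} For any measure-preserving bijections $\alpha,\beta$ with measure-preserving inverses and any kernels $U,V$, we have
\[
\|U^{\alpha,\beta}-V^{\alpha,\beta}\|_\square=\|U-V\|_\square.
\]
This holds because the change of variables $(u,v)\mapsto(\alpha(u),\beta(v))$ sends measurable rectangles $S\times T$ to measurable rectangles $\alpha(S)\times\beta(T)$ of the same measure, and preserves integrals.

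The paper's definition places the relabelling on the second argument, $\inf_{\phi,\psi}\|\cW-\cU^{\phi,\psi}\|_\square$, so we must check that the infimum is symmetric. Taking $\cW=W_n$ and $\cU=\mathcal W$, compose with $(\phi_\sigma^{-1},\psi_\pi^{-1})$ to get
\[
\|W_n-\mathcal W^{\phi,\psi}\|_\square=\|W_n^{\sigma,\pi}-\mathcal W^{\phi\circ\phi_\sigma,\,\psi\circ\psi_\pi}\|_\square .
\]
As $(\phi,\psi)$ ranges over all measure-preserving pairs, so does $(\phi\circ\phi_\sigma,\psi\circ\psi_\pi)$, and the reverse direction uses the inverses. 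This gives
\[
\delta_\square^{\mathrm{bip}}(W_n,\mathcal W)=\delta_\square^{\mathrm{bip}}(W_n^{\sigma,\pi},\mathcal W).
\]

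\textbf{Step 3 (inequality).} Taking $\phi=\psi=\mathrm{id}$ in the infimum gives
\[
\delta_\square^{\mathrm{bip}}(W_n^{\sigma,\pi},\mathcal W)\le\|W_n^{\sigma,\pi}-\mathcal W\|_\square .
\]

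\textbf{Main obstacle.} The only subtle point is Step 2. The general infimum in \eqref{def:bip-cut-distance} ranges over measure-preserving maps that need not be invertible, so absorbing $\phi_\sigma$ into $\phi$ requires composition to stay inside the class; it does, since compositions of measure-preserving maps are measure-preserving. Going backwards requires the invertibility of $\phi_\sigma$ and $\psi_\pi$, which holds for the interval permutations used here. If one insisted on applying the relabelling to the first argument without invertibility, one would instead appeal to the standard fact that the cut distance is a pseudometric invariant under such maps; our explicit bijections avoid this issue.
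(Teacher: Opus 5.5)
Your proof is correct and follows the same route as the paper. You realise the row and column permutations as interval-permuting measure-preserving maps, use that $\delta_\square^{\mathrm{bip}}$ is invariant under independent relabellings, and take the identity pair in the infimum to get the inequality. Your Step 2 makes explicit the composition and invertibility bookkeeping that the paper's one-line argument leaves implicit.
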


\begin{proof}
A row and column permutations correspond to a measure-preserving relabelling of the $u$- and $v$-axes that maps each interval $I_i$ to $I_{\sigma_n(i)}$ and $J_j$ to $J_{\pi_n(j)}$, respectively. Since the bipartite cut distance permits independent relabelings of rows and columns, it is invariant under this transformation. The final inequality follows by taking the trivial identity transformation in the infimum defining $\delta_\square^{\mathrm{bip}}$.
\end{proof}

Lemma~\ref{lem:column-invariance} reduces the problem to showing $\|W_n^\pi - \mathcal{W}\|_\square \to 0$.
To control this distance, we introduce an intermediate proxy, which is the conditional=mean kernel, denoted by $\bar{W}_n^\pi$. From the assumption \ref{app_assum:noise}, conditioned on $\mathcal{G}_n$, the entries $M^{\pi}_{n,ij}$ are conditionally independent Bernoulli random variables with conditional means
\begin{equation}
\bar{M}^{\pi}_{n,ij} := \mathbb{E}[M^{\pi}_{n,ij} \mid \mathcal{G}_n] = \mathbb{P}\!\left(\varphi_{n,i} \cdot \psi_{n,\pi_n(j)} \cdot |\xi| > \tau_n\right).
\end{equation}
Let $\bar{W}_n^\pi := W_{\bar{M}^{\pi}_n}$ be the step kernel induced by $\bar{M}_n^{\pi}$. By the triangle inequality, we decompose the error into a stochastic concentration term and a deterministic approximation term: 
\begin{equation}
    \|W_n^\pi - \mathcal{W} \|_\square \leq \|W_n^\pi - \bar{W}_n^\pi \|_\square + \|\bar{W}_n^\pi - \mathcal{W} \|_\square
\end{equation}

We analyse these two terms separately in Appendices \ref{app:main_thm_concen_mean}, \ref{app:main_thm_mean_kernel_convergence}, respectively.

\subsubsection{Concentration around conditional mean}
\label{app:main_thm_concen_mean}

The term $\|W_n^\pi - \bar{W}_n^\pi \|_\square$ captures the deviation of the binary edges from their conditional probabilities due to the noise $\xi$. We define the difference matrix
\begin{equation}
X_n := M^{\pi}_n - \bar{M}_n^{\pi} \in \mathbb{R}^{d \times n}.
\end{equation}

Conditioned on $\mathcal{G}_n$, we have
\begin{itemize}
\item The entries $\{X_{n,ij}\}_{i \le d, j \le n}$ are independent, since they depend on i.i.d.\ $\{\xi_{n,ij}\}$, and $\pi_n$ is a deterministic function of $\psi_n$,

\item $\mathbb{E}[X_{n,ij} \mid \mathcal{G}_n] = 0$,

\item $|X_{n,ij}| \le 1$.
\end{itemize}
Our goal is to show that $\|W_n^\pi - \bar{W}_n^\pi\|_\square = \|X_n\|_{\square,\mathrm{mat}} \xrightarrow{\Prb} 0$.

\begin{lemma}[Cut norm controlled by operator norm]
\label{lem:cut-op-bound}
For any matrix $B \in \mathbb{R}^{d \times n}$,
\begin{equation}
\|B\|_{\square,\mathrm{mat}} \le \frac{1}{\sqrt{dn}} \|B\|_{\mathrm{op}}.
\end{equation}
\end{lemma}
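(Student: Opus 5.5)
The plan is to write any rectangle sum as a bilinear form with indicator vectors and then apply Cauchy--Schwarz together with the definition of the operator norm. Fix subsets $S\subseteq[d]$ and $T\subseteq[n]$, and let $\mathbf 1_S\in\{0,1\}^d$ and $\mathbf 1_T\in\{0,1\}^n$ be their indicator vectors. Then
\[
\sum_{i\in S}\sum_{j\in T}B_{ij}=\mathbf 1_S^\top B\,\mathbf 1_T .
\]

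The key estimate is
\[
\bigl|\mathbf 1_S^\top B\,\mathbf 1_T\bigr|\le \|\mathbf 1_S\|_2\,\|B\mathbf 1_T\|_2\le \|\mathbf 1_S\|_2\,\|B\|_{\mathrm{op}}\,\|\mathbf 1_T\|_2 .
\]
The first inequality is Cauchy--Schwarz and the second is the definition of the operator norm. Since $\|\mathbf 1_S\|_2=\sqrt{|S|}\le\sqrt d$ and $\|\mathbf 1_T\|_2=\sqrt{|T|}\le\sqrt n$, every rectangle sum satisfies
\[
\Bigl|\sum_{i\in S}\sum_{j\in T}B_{ij}\Bigr|\le\sqrt{dn}\,\|B\|_{\mathrm{op}} .
\]

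Taking the maximum over the finitely many pairs $(S,T)$ and dividing by $dn$ gives
\[
\|B\|_{\square,\mathrm{mat}}\le\frac{\sqrt{dn}}{dn}\,\|B\|_{\mathrm{op}}=\frac{1}{\sqrt{dn}}\,\|B\|_{\mathrm{op}},
\]
which is the claimed bound.

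There is no real obstacle. The only point to check is the normalisation $1/(dn)$ in the matrix cut norm of Lemma~\ref{lem:cut-norm-equiv}: it is this factor, combined with the $\sqrt{dn}$ growth of the indicator norms, that produces the $(dn)^{-1/2}$ factor later used to turn matrix Bernstein's bound $\|X_n\|_{\mathrm{op}}=O_{\Prb}\bigl(\sqrt{\max(d,n)}+\sqrt{\log(d+n)}\bigr)$ into a vanishing cut norm.
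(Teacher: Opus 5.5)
Your proof is correct and is the same argument the paper gives: write the rectangle sum as $\mathbf 1_S^\top B\,\mathbf 1_T$, bound it by $\|B\|_{\mathrm{op}}\sqrt{|S|\,|T|}\le\sqrt{dn}\,\|B\|_{\mathrm{op}}$ via Cauchy--Schwarz and the operator norm, then divide by $dn$ and maximise over $(S,T)$. One small point about your closing aside, which does not affect the lemma: matrix Bernstein by itself only gives $\|X_n\|_{\mathrm{op}}=O\bigl(\sqrt{\max(d,n)\log(d+n)}+\log(d+n)\bigr)$, with a multiplicative logarithm, which is why the paper takes $t\asymp(\sqrt d+\sqrt n)\sqrt{\log(d+n)}$; the log-free rate you quote needs sharper tools (e.g.\ Bandeira--van Handel), although either rate suffices under the growth assumption.
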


\begin{proof}
By definition,
\begin{equation}
\|B\|_{\square,\mathrm{mat}} = \frac{1}{dn} \max_{S \subseteq [d], \, T \subseteq [n]} \left|\sum_{i \in S} \sum_{j \in T} B_{ij}\right|.
\end{equation}
For any $S, T$, let $s \in \{0,1\}^d$ and $t \in \{0,1\}^n$ be their indicator vectors. Then
\begin{equation}
\sum_{i \in S} \sum_{j \in T} B_{ij} = s^\top B t.
\end{equation}
By the Cauchy-Schwarz inequality and the definition of operator norm,
\begin{equation}
|s^\top B t| \le \|B\|_{\mathrm{op}} \|s\|_2 \|t\|_2.
\end{equation}
Since $\|s\|_2 = \sqrt{|S|} \le \sqrt{d}$ and $\|t\|_2 = \sqrt{|T|} \le \sqrt{n}$,
\begin{equation}
|s^\top B t| \le \|B\|_{\mathrm{op}} \sqrt{dn}.
\end{equation}
Dividing by $dn$ and maximising over $S, T$ yields the result. $\Box$
\end{proof}

By Lemma~\ref{lem:cut-op-bound} and Lemma~\ref{lem:cut-norm-equiv}, it suffices to prove $\|X_n\|_{\mathrm{op}} = o(\sqrt{d n})$.

\begin{lemma}[Matrix Bernstein implies cut-norm concentration]
\label{lem:concentration}
Under the assumption \ref{app_assum:growth}, we have
\begin{equation}
\|W_n^\pi-\bar W_n^\pi\|_\square \xrightarrow{\Prb} 0.
\end{equation}
\end{lemma}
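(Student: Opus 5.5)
The plan is to condition on $\mathcal G_n$ and apply the matrix Bernstein inequality to $X_n=M_n^\pi-\bar M_n^\pi$. We then transfer the resulting operator-norm bound to the cut norm through Lemma~\ref{lem:cut-op-bound} and Lemma~\ref{lem:cut-norm-equiv}. Conditionally on $\mathcal G_n$, we have $\pi_n$ fixed and the entries $X_{n,ij}$ independent, centred and bounded by $1$. So we write $X_n=\sum_{i,j}Z_{ij}$ with $Z_{ij}:=X_{n,ij}\,e_i e_j^\top$, a sum of independent, centred $d\times n$ random matrices with $\|Z_{ij}\|_{\mathrm{op}}\le 1$. This is the rectangular form of matrix Bernstein.

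First we compute the variance proxy. $\sum_{i,j}\mathbb E[Z_{ij}Z_{ij}^\top\mid\mathcal G_n]=\mathrm{diag}_i\big(\sum_j \mathrm{Var}(X_{n,ij}\mid\mathcal G_n)\big)$ has operator norm at most $n/4$. Likewise $\sum_{i,j}\mathbb E[Z_{ij}^\top Z_{ij}\mid\mathcal G_n]$ has operator norm at most $d/4$. Hence $v\le \max\{d,n\}/4$ and $L=1$, and matrix Bernstein gives, for all $t>0$,
\begin{equation*}
\mathbb P\big(\|X_n\|_{\mathrm{op}}\ge t\,\big|\,\mathcal G_n\big)\le (d+n)\exp\!\Big(-\frac{t^2/2}{\max\{d,n\}/4+t/3}\Big).
\end{equation*}
Because the right-hand side is deterministic, taking expectations removes the conditioning.

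Next we fix $\varepsilon>0$ and choose $t=\varepsilon\sqrt{dn}$. By Lemma~\ref{lem:cut-op-bound} and Lemma~\ref{lem:cut-norm-equiv}, the event $\{\|W_n^\pi-\bar W_n^\pi\|_\square>\varepsilon\}$ is contained in $\{\|X_n\|_{\mathrm{op}}>\varepsilon\sqrt{dn}\}$. The exponent becomes $\varepsilon^2 dn/(2(\max\{d,n\}/4+\varepsilon\sqrt{dn}/3))$. Since $dn/\max\{d,n\}=\min\{d,n\}$ and $dn/\sqrt{dn}=\sqrt{dn}\ge\min\{d,n\}$, the exponent is at least $c_\varepsilon\min\{d,n\}$ for some $c_\varepsilon>0$. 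The tail is therefore bounded by $\exp\big(\log(d+n)-c_\varepsilon\min\{d,n\}\big)$. This tends to $0$ by Assumption~\ref{app_assum:growth}, which concludes the argument.

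There is no serious obstacle. The only delicate point is making sure the conditional independence genuinely holds after the column sort. This follows because $\pi_n$ is $\mathcal G_n$-measurable, and by Assumption~\ref{app_assum:noise} the $\xi_{n,ij}$ are independent of $\mathcal G_n$, so permuting columns preserves the independence of the entries. The other point to check is that the Bernstein dimension factor $(d+n)$ is beaten by the exponent, and that is exactly what the growth condition supplies.
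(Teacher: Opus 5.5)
Your proposal is correct and follows the paper's proof: you condition on $\mathcal G_n$, apply rectangular matrix Bernstein with $L=1$ and variance proxy $\max\{d,n\}/4$, and pass to the cut norm via $\|\cdot\|_{\square,\mathrm{mat}}\le (dn)^{-1/2}\|\cdot\|_{\mathrm{op}}$ and Assumption~\ref{app_assum:growth}. The only difference is the threshold: you take $t=\varepsilon\sqrt{dn}$ directly, whereas the paper takes $t=C(\sqrt d+\sqrt n)\sqrt{\log(d+n)}$, which additionally gives the explicit rate $O\big(\sqrt{\log(d+n)/\min\{d,n\}}\big)$.
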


\begin{proof}
By Lemma~\ref{lem:cut-norm-equiv}, we have 
$
\|W_n^\pi-\bar W_n^\pi\|_\square=\|X_n\|_{\square,\mathrm{mat}}.
$
By Lemma~\ref{lem:cut-op-bound}, we have 
$
\|X_n\|_{\square,\mathrm{mat}}
\le \frac{1}{\sqrt{d n}}\,\|X_n\|_{\mathrm{op}}.
$
We apply the rectangular matrix Bernstein inequality \cite{tropp2015introduction} to the sum of independent mean-zero random matrices representing the entries of $X_n \in \mathbb{R}^{d \times n}$, conditioned on $\psi_n$: for all $t \ge 0$,
\begin{equation}
\mathbb{P}\!\left(\|X\|_{\mathrm{op}} \ge t \mid \mathcal{G}_n\right) = \mathbb{P}\!\left(\left\|\sum_{i,j} A_{ij}\right\|_{\mathrm{op}} \ge t \;\middle|\; \mathcal{G}_n\right) \le (d + n) \exp\!\left(-\frac{t^2}{2(V + Lt/3)}\right).
\end{equation}
where $A_{ij}$ is a zero matrix except the entry $(i,j)$ where it's value is $X_{ij}$. Therefore, each matrix $A_{ij}$ is zero mean and $|A_{ij}| \leq 1$. The bounding term is $L = \|A_{ij} \|_{\mathrm{op}} = 1$. 
In addition, $V = \mathrm{max} \{ \| \sum_{ij} \mathbb{E}[A_{ij} A_{ij}^\top | \mathcal{G}_n ] \|_{\mathrm{op}} \, , \| \sum_{ij} \mathbb{E}[A_{ij}^\top A_{ij} | \mathcal{G}_n ] \|_{\mathrm{op}} \}$ is the norm of the matrix variance of the sum. Here, each entry of $X_n$ is a centered Bernoulli, so $\mathbb{E}[X_{ij}^2 | \mathcal{G}_n] \leq 1/4$. Thus, $V \le \frac{1}{4}\max(d, n)$, and we obtain the unconditional bound
\begin{equation}
\mathbb{P}\!\left(\|X_n\|_{\mathrm{op}} \ge t\right) \le (d + n) \exp\!\left(-\frac{t^2}{2\left(\frac{1}{4}\max(d, n) + t/3\right)}\right).
\label{eq:bernstein-bound}
\end{equation}

We now choose$
t := C\left(\sqrt{d} + \sqrt{n}\right) \sqrt{\log(d + n)}$
with $C > 0$ sufficiently large. Note that $t^2$ is of order $(d + n) \log(d + n)$, which dominates $\max(d, n)$ for large $n$. More precisely, for large $n$,
\begin{equation}
\frac{t^2}{2\left(\frac{1}{4}\max(d, n) + t/3\right)} \ge c \log(d + n)
\end{equation}
for some constant $c > 0$ that can be made arbitrarily large by increasing $C$. Choosing $C$ so that $c > 2$, we obtain $\Prb(\|X_n\|_{\mathrm{op}}\ge t)\to 0$.


Therefore, by Lemma~\ref{lem:cut-op-bound},
\begin{equation}
\|X_n\|_{\square,\mathrm{mat}} \le \frac{1}{\sqrt{d n}} \|X_n\|_{\mathrm{op}} \le \frac{t}{\sqrt{d n}} = C\left(\sqrt{\frac{\log(d + n)}{n}} + \sqrt{\frac{\log(d + n)}{d}}\right)
\end{equation}
which converges to $0$ by Assumption \ref{app_assum:growth}.
By Lemma~\ref{lem:cut-norm-equiv}, we have
$
\|W_n^\pi - \bar{W}_n^\pi\|_\square = \|X_n\|_{\square,\mathrm{mat}} \xrightarrow{\Prb} 0.
$ 
\end{proof}

\subsubsection{Mean kernel convergence}
\label{app:main_thm_mean_kernel_convergence}

The term $\|\bar{W}_n^\pi - \mathcal{W} \|_\square$ measures the discretisation error between the finite grid of probabilities and the continuous limit graphon. In this step, we will bound the cut norm by the stronger $L^1$ norm: $\|\bar{W}_n^\pi - \mathcal{W} \|_\square \leq \|\bar{W}_n^\pi - \mathcal{W} \|_{1}$.

Define the functions
\begin{equation}
g(\tau, z) := \mathbb{P}(z|\xi| > \tau), \quad z \ge 0, \tau > 0,
\end{equation}
with $g(\tau, 0) = 0$. Then
\begin{equation}
\bar{M}^{\pi}_{n,ij} = g\!\left(\tau_n, \varphi_{n,i}\, \psi_{n, \pi_n(j)}\right), \qquad
\mathcal{W}(u,v) = g\!\left(\tau, \varphi(u) \, Q_\psi(v)\right).
\end{equation}
Here, we call this functions $g()$ a link function that capture the fact that edge probability depends on the product of forward and backward factors.

\begin{lemma}[Order statistics from empirical CDF convergence]
\label{lem:order-stats}
Assume Assumption~\ref{app_assum:neuron-iid}. Let $v\in(0,1)$ be a continuity point of $Q_\psi$ and
$j_n:=\lceil nv\rceil$. Then
\begin{equation}
\psi_{n,(j_n)}\xrightarrow{\Prb} Q_\psi(v).
\end{equation}
\end{lemma}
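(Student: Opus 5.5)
The argument is the classical sandwich for empirical quantiles. It translates uniform convergence of $\widehat F_{\psi,n}$ into two-sided bounds on the order statistic $\psi_{n,(j_n)}$, using only the defining property of the generalised quantile. Recall that $\psi_{n,(j)}\le t$ holds if and only if at least $j$ of the $\psi_{n,k}$ are $\le t$, that is, iff $\widehat F_{\psi,n}(t)\ge j/n$. Fix $\varepsilon>0$; the task is to show $\Prb(|\psi_{n,(j_n)}-Q_\psi(v)|>\varepsilon)\to0$.

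\emph{Step 1: continuity gives strict gaps in $F_\psi$.} Since $v$ is a continuity point of $Q_\psi$, we have $F_\psi(y)>v$ for every $y>Q_\psi(v)$. Otherwise $F_\psi$ would be flat at level $v$ on an interval to the right of $Q_\psi(v)$, and $Q_\psi$ would jump at $v$. Similarly, $F_\psi(y)<v$ for every $y<Q_\psi(v)$, by the definition of $Q_\psi$ as an infimum. Choose a point $t_+\in(Q_\psi(v),Q_\psi(v)+\varepsilon)$ and a point $t_-\in(Q_\psi(v)-\varepsilon,Q_\psi(v))$, taking $t_-$ only if $Q_\psi(v)-\varepsilon\ge 0$; otherwise the lower deviation is impossible because $\psi\ge0$. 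Define
\[
\eta:=\min\{F_\psi(t_+)-v,\; v-F_\psi(t_-)\}>0 .
\]

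\emph{Step 2: upper deviation.} Suppose $\psi_{n,(j_n)}>t_+$. Then $\widehat F_{\psi,n}(t_+)<j_n/n\le v+1/n$. Hence
\[
\sup_t|\widehat F_{\psi,n}(t)-F_\psi(t)|\ge F_\psi(t_+)-v-1/n\ge \eta-1/n,
\]
and the probability of this event tends to $0$ by Assumption~\ref{app_assum:neuron-iid}.

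\emph{Step 3: lower deviation.} Suppose $\psi_{n,(j_n)}\le t_-$. Then $\widehat F_{\psi,n}(t_-)\ge j_n/n\ge v$. Hence the uniform deviation is at least $v-F_\psi(t_-)\ge\eta$, and again the probability tends to $0$.

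Combining the two bounds yields $|\psi_{n,(j_n)}-Q_\psi(v)|\le\varepsilon$ with probability tending to one. The main subtlety is Step 1, extracting strict inequalities $F_\psi(t_\pm)\gtrless v$ from continuity of $Q_\psi$ at $v$, which requires care at atoms and flat pieces of $F_\psi$. Flat pieces at level exactly $v$ are precisely the discontinuities of $Q_\psi$, and these are excluded by hypothesis. Atoms are harmless because only one-sided strict inequalities at nearby points are used.
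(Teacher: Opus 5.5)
Your proof is correct and follows essentially the same route as the paper. Both arguments use continuity of $Q_\psi$ at $v$ to obtain strict gaps $F_\psi(t_-)<v<F_\psi(t_+)$ on either side of $Q_\psi(v)$, and then turn uniform convergence of $\widehat F_{\psi,n}$ into a two-sided sandwich for $\psi_{n,(j_n)}$ through the equivalence $\psi_{n,(j)}\le t \iff \widehat F_{\psi,n}(t)\ge j/n$; the paper phrases this as $\widehat F_{\psi,n}^{-1}(v)=\psi_{n,(\lceil nv\rceil)}$. Your version is slightly more careful than the paper's. You justify the strict gaps explicitly and bound each deviation event from the right side, whereas the paper's event $\{\sup_t|\widehat F_{\psi,n}(t)-F_\psi(t)|\le\Delta_\eta\}$, as written, only gives non-strict inequalities.
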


\begin{proof}
Let $q:=Q_\psi(v)$ and fix $\eta>0$. Continuity of $Q_\psi$ at $v$ implies
$F_\psi(q-\eta)<v<F_\psi(q+\eta)$. Set
\begin{equation}
\Delta_\eta:=\min\{v-F_\psi(q-\eta),\,F_\psi(q+\eta)-v\}>0.
\end{equation}
On the event $\{\sup_t|\widehat F_{\psi,n}(t)-F_\psi(t)|\le \Delta_\eta\}$, we have
$\widehat F_{\psi,n}(q-\eta)<v<\widehat F_{\psi,n}(q+\eta)$. 
By the generalized inverse identity
$\widehat F_{\psi,n}^{-1}(v)=\psi_{n,\lceil nv\rceil}$, it holds that
$
q-\eta\le \psi_{n,(j_n)}\le q+\eta.
$
Therefore
\begin{equation}
\mathbb P\big(|\psi_{n,(j_n)}-q|>\eta\big)
\le
\mathbb P\!\left(\sup_t|\widehat F_{\psi,n}(t)-F_\psi(t)|>\Delta_\eta\right)\to 0,
\end{equation}
which shows $\psi_{n,j_n}\xrightarrow{\Prb}q$.
\end{proof}

\begin{lemma}[$L^1$ convergence of the mean kernel]
\label{lem:L1-mean}
It holds that
\begin{equation}
\|\bar W_n^\pi-\mathcal W\|_1\xrightarrow{\Prb}0,
\quad\text{ and } \quad
\|\bar W_n^\pi-\mathcal W\|_\square\xrightarrow{\Prb}0.
\end{equation}
\end{lemma}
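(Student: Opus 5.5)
Since $\|\bar W_n^\pi-\mathcal W\|_\square\le\|\bar W_n^\pi-\mathcal W\|_1$, the second claim follows from the first, so the plan is to prove the $L^1$ convergence. Both kernels take values in $[0,1]$, so by bounded convergence it suffices to show that for Lebesgue-a.e.\ $(u,v)\in[0,1]^2$ we have $\bar W_n^\pi(u,v)\to\mathcal W(u,v)$ in probability. Indeed, $|\bar W_n^\pi-\mathcal W|\le 1$, so $\mathbb E\|\bar W_n^\pi-\mathcal W\|_1=\int\mathbb E|\bar W_n^\pi(u,v)-\mathcal W(u,v)|\,du\,dv\to0$ by Fubini and dominated convergence, and Markov's inequality then gives convergence in probability.

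Fix $(u,v)$ with $v\in(0,1)$ a continuity point of $Q_\psi$. Since $Q_\psi$ is monotone, this excludes only a countable set, hence a null set. For $u\in I_i^{(d)}$ and $v\in J_j^{(n)}$ we have $\bar W_n^\pi(u,v)=g(\tau_n,\varphi_{n,i}\psi_{n,(j)})$ with $i=\lceil du\rceil$ and $j=\lceil nv\rceil$, where $\psi_{n,(j)}=\psi_{n,\pi_n(j)}$ is the $j$-th order statistic.
\begin{itemize}
\item For the row factor, write $|\varphi_{n,i}-\varphi(u)|\le\max_{i'}|\varphi_{n,i'}-\varphi(i'/d)|+|\varphi(i/d)-\varphi(u)|$. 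The first term vanishes in probability by Assumption~\ref{app_assum:input-profile}. The second vanishes because $|i/d-u|\le 1/d$ and $\varphi$ is continuous on the compact $[0,1]$, hence uniformly continuous.
\item For the column factor, Lemma~\ref{lem:order-stats} gives $\psi_{n,(j)}\to Q_\psi(v)$ in probability.
\end{itemize}
Since $\varphi$ is bounded and $Q_\psi(v)$ is finite, the product satisfies $z_n:=\varphi_{n,i}\psi_{n,(j)}\to z:=\varphi(u)Q_\psi(v)$ in probability. Together with $\tau_n\to\tau$ (Assumption~\ref{app_assum:threshold}), the continuous mapping theorem would finish this step, provided $g$ is jointly continuous at $(\tau,z)$.

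The main obstacle is this joint continuity of the link function $g(t,z)=\mathbb P(z|\xi|>t)$ on $(0,\infty)\times[0,\infty)$.
\begin{itemize}
\item For $z>0$, $g(t,z)=1-F_{|\xi|}(t/z)$. Since $(t,z)\mapsto t/z$ is continuous there and $F_{|\xi|}$ is continuous by Assumption~\ref{app_assum:noise}, $g$ is continuous at such points.
\item At $z=0$ with $t$ near $\tau>0$, $g(t,0)=0$, and $g(t,z')\le \mathbb P(|\xi|>t/z')\le\mathbb P(|\xi|>\tau/(2z'))\to0$ as $z'\downarrow0$ and $t\to\tau$, since $|\xi|$ is a.s.\ finite. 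So $g$ is continuous there as well.
\end{itemize}
Hence $g(\tau_n,z_n)\to g(\tau,z)=\mathcal W(u,v)$ in probability for a.e.\ $(u,v)$.

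One subtlety to handle is that the row relabelling in Assumption~\ref{app_assum:input-profile} must be incorporated into $\bar W_n^\pi$ as a row permutation. By Lemma~\ref{lem:column-invariance} this is harmless, and I would simply state that $\bar M^\pi$ is taken after both the row and the column relabelling. Combining the pointwise convergence with the dominated convergence argument of the first paragraph yields both claims.
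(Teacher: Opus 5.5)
Your proposal is correct and follows the paper's proof essentially step for step. Both establish pointwise convergence in probability of $\bar W_n^\pi(u,v)$ whenever $v$ is a continuity point of $Q_\psi$, using Assumption~\ref{app_assum:input-profile}, Lemma~\ref{lem:order-stats}, Assumption~\ref{app_assum:threshold} and the continuous mapping theorem, and then conclude via boundedness, dominated convergence, Markov's inequality and $\|\cdot\|_\square\le\|\cdot\|_1$. Your explicit check that $g$ is jointly continuous at $(\tau,z)$, including $z=0$, and your note that the row relabelling must be built into $\bar W_n^\pi$ supply details the paper only asserts.
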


\begin{proof}
Let $D_n(u,v):=|\bar W_n^\pi(u,v)-\mathcal W(u,v)|\in[0,1]$.
Fix $(u,v)$ such that $v$ is a continuity point of $Q_\psi$.
Let $i_n:=\lceil du\rceil$ and $j_n:=\lceil nv\rceil$.
By Assumption~\ref{app_assum:input-profile} and continuity of $\varphi$, we have
$\varphi_{n,i_n}\xrightarrow{\Prb}\varphi(u)$.
By Lemma~\ref{lem:order-stats}, $\psi_{n,(j_n)}\xrightarrow{\Prb}Q_\psi(v)$.
By Assumption~\ref{app_assum:threshold}, $\tau_n\xrightarrow{\Prb}\tau$.
Since $g$ is continuous and the limit is deterministic, the continuous mapping theorem yields
\begin{equation}
\bar W_n^\pi(u,v)
=
g\!\big(\tau_n,\varphi_{n,i_n}\psi_{n,(j_n)}\big)
\xrightarrow{\Prb}
g\!\big(\tau,\varphi(u)Q_\psi(v)\big)
=
\mathcal W(u,v),
\end{equation}
so that $D_n(u,v)\xrightarrow{\Prb}0$ for all $u$ and all continuity points $v$ of $Q_\psi$.
As $Q_\psi$ has at most countably many discontinuities, this holds for Lebesgue-a.e.\ $(u,v)$.
Because $0\le D_n(u,v)\le 1$, for any $\varepsilon > 0$, $\mathbb E[D_n(u,v)] \le \varepsilon + \Prb(D_n(u,v) > \varepsilon)$, this
boundedness implies $\mathbb E[D_n(u,v)]\to 0$ for Lebesgue-a.e.\ $(u,v)$ when $D_n(u,v)\xrightarrow{\Prb}0$.
By dominated convergence theorem,
\begin{equation}
\mathbb E\|\bar W_n^\pi-\mathcal W\|_1
=
\int_{[0,1]^2}\mathbb E[D_n(u,v)]\,du\,dv
\to 0.
\end{equation}
The Markov inequality yields $\|\bar W_n^\pi-\mathcal W\|_1\xrightarrow{\Prb}0$.
Finally, $\|\cdot\|_\square\le \|\cdot\|_1$ gives the cut-norm claim.
\end{proof}

\subsubsection{Conclusion}

Combining the above results and the triangle inequality, we have 
\begin{equation}
\|W_n^\pi - \mathcal{W}\|_\square \le \|W_n^\pi - \bar{W}_n^\pi\|_\square + \|\bar{W}_n^\pi - \mathcal{W}\|_\square \xrightarrow{\Prb} 0
\end{equation}
By Lemma~\ref{lem:column-invariance}, it holds that
\begin{equation}
\delta_\square^{\mathrm{bip}}(W_n, \mathcal{W}) = \delta_\square^{\mathrm{bip}}(W_n^\pi, \mathcal{W}) \le \|W_n^\pi - \mathcal{W}\|_\square \xrightarrow{\Prb} 0,
\end{equation}
which completes the proof of Theorem~\ref{thm:graphon-convergence}.

%% file: snip_case.tex
\section{Realisation of SNIP in the Factorised Saliency Model}
\label{app:snip-provable}

In this appendix, we show that SNIP fits into the Factorised Saliency Model $S_{ij} \propto \phi_i \psi_j |\xi_{ij}|$ for a single hidden layer network.

\subsection{Model and SNIP score at initialisation}
\label{app:snip-provable:model}

\paragraph{Bounded input setting.}
Let the input be a single random sample
\begin{equation}
x=(x_1,\dots,x_d)\in\mathbb R^d,\qquad x_i \stackrel{\mathrm{i.i.d.}}{\sim} X,
\qquad |X|\le B\ \text{a.s.}
\end{equation}
for some bound $B>0 $. Let $F_{|X|}$ denote the CDF of $|X|$ on $[0,B]$.
We assume $F_{|X|}$ is continuous on $[0,B]$.

\paragraph{Initialisation.}
Let $(a_j)_{j=1}^n$ and $(\theta_{ij})_{1\le i\le d,\,1\le j\le n}$ be independent families with
$
a_j \stackrel{\mathrm{i.i.d.}}{\sim} \mathcal N(0,1)$,
$
\theta_{ij} \stackrel{\mathrm{i.i.d.}}{\sim} \mathcal N(0,1)$.

\paragraph{One hidden layer network.}
Consider the random network
\begin{equation}
    f_n(x) \;=\; \frac{1}{\sqrt n}\sum_{j=1}^n a_j\,\sigma(h_j),
\qquad
h_j \;:=\; \frac{1}{\sqrt d}\sum_{i=1}^d \theta_{ij}\,x_i,
\end{equation}
and squared loss on a fixed label $y\in\mathbb R$,
\begin{equation}
L \;=\; \frac12\big(f_n(x)-y\big)^2,
\qquad \delta := f_n(x)-y.
\end{equation}

\paragraph{SNIP pruning score.}
We introduce a binary mask $m_{ij}\in\{0,1\}$ on first-layer weights via $\theta_{ij}\mapsto m_{ij}\theta_{ij}$
and evaluate SNIP at $m\equiv 1$. By the chain rule 
\begin{equation}
    \frac{\partial L}{\partial m_{ij}}
\;=\;
\delta\cdot \frac{\partial f_n(x)}{\partial m_{ij}}
\;=\;
\delta\cdot \frac{1}{\sqrt n}\,a_j\,\sigma'(h_j)\cdot \frac{1}{\sqrt d}\,\theta_{ij}\,x_i.
\end{equation}

Hence, the SNIP pruning score magnitude factorises as
\begin{equation}
\label{eq:snip-saliency}
S^{(n)}_{ij}
:=\Big|\frac{\partial L}{\partial m_{ij}}\Big|
=
\frac{|\delta|}{\sqrt{nd}}\,
\underbrace{|x_i|}_{\varphi_{n,i}}\,
\underbrace{\big(|a_j|\,|\sigma'(h_j)|\big)}_{\psi_{n,j}}\,
\underbrace{|\theta_{ij}|}_{|\xi_{n,ij}|}.
\end{equation}
Since top-$\rho$ ranking is invariant to multiplying all scores by the same positive constant,
we may equivalently work with the scale-free scores
\begin{equation}
\label{eq:snip-scale-free}
S^{e(n)}_{ij}
:=\frac{\sqrt{nd}}{|\delta|}\,S^{(n)}_{ij}
=
\varphi_{n,i}\,\psi_{n,j}\,|\xi_{n,ij}|
=
|x_i|\,|a_j|\,|\sigma'(h_j)|\,|\theta_{ij}|.
\end{equation}
All masks below are defined using $S^{e(n)}_{ij}$.

\subsection{Verification of Assumptions A1--A5 for the model}
\label{app:snip-provable:verify}

We verify the regularity assumptions in Section \ref{sec:assumptions} for the SNIP factorisation
$S^{e(n)}_{ij}=\varphi_{n,i}\psi_{n,j}|\xi_{n,ij}|$ with
\begin{equation}
    \varphi_{n,i}=|x_i|,
\qquad
\psi_{n,j}=|a_j|\,|\sigma'(h_j)|,
\qquad
\xi_{n,ij}=\theta_{ij}.
\end{equation}

\paragraph{(A.\ref{assum:growth}) Growth rate.}
Assumption A1 is a regime condition on $(d,n)$ which we assume in advance.

\paragraph{(A.\ref{assum:input-profile}) Deterministic input features via measure-preserving relabelling.}
Since $(|x_i|)_{i=1}^d$ are i.i.d.\ supported on $[0,B]$, we may relabel rows by sorting
$\{|x_i|\}_{i=1}^d$ increasingly.
Let $|x|_{(1)}\le\cdots\le |x|_{(d)}$ be the order statistics and define
$
\varphi_{n,i}:=|x|_{(i)}.
$
We define the deterministic limit profile
$
\varphi(u) := F^{-1}_{|X|}(u), u\in[0,1],
$
which is bounded by $B$ and continuous on $[0,1]$ under continuity of $F_{|X|}$.
Then standard uniform quantile consistency yields
\begin{equation}
    \max_{1\le i\le d}\Big|\varphi_{n,i}-\varphi(i/d)\Big|\ \xrightarrow{\Prb} \ 0
\end{equation}
This is exactly Assumption \ref{assum:input-profile}.

\paragraph{(A.\ref{assum:neuron-iid}) Neuron features empirical CDF convergence}
Fix $x$ and consider $(h_j)_{j=1}^n$ with
$
h_j:=\frac{1}{\sqrt d}\sum_{i=1}^d \theta_{ij}x_i.
$
Because the columns $(\theta_{\cdot j})_{j=1}^n$ are i.i.d.\ and independent of $x$, we have:

\begin{itemize}
\item Conditional on $x$, the random variables $(h_j)_{j=1}^n$ are i.i.d.\ across $j$ as $d \to \infty$.
Moreover, since $(\theta_{ij})_i$ are Gaussian,
\begin{equation}
\label{eq:hj-cond-law}
h_j\mid x\ \sim\ \mathcal N\!\left(0,\,s_d^2(x)\right),
\qquad
s_d^2(x):=\frac{\|x\|_2^2}{d}.
\end{equation}

\item $(a_j)_{j=1}^n$ are i.i.d.\ and independent of $(h_j)_{j=1}^n$.
Therefore
$
\psi_{n,j}:=|a_j|\,|\sigma'(h_j)|
$
are i.i.d.\ across $j$ conditional on $x$.
\end{itemize}

To match Assumption \ref{assum:neuron-iid} as stated (a non-random CDF $F_\psi$), we identify a deterministic
limit law by showing $s_d^2(x)\to \nu^2$ with $\nu^2:=\mathbb E[X^2]$.
Since $|X|\le B$ a.s., Hoeffding's inequality implies for all $t>0$,
\begin{equation}
\mathbb P\!\left(\left|\frac{1}{d}\sum_{i=1}^d x_i^2-\nu^2\right|>t\right)
\le 2\exp\!\left(-\frac{2dt^2}{B^4}\right),
\end{equation}
hence $s_d^2(x)=\|x\|_2^2/d\to \nu^2$ in probability. Combining with Eqn. \eqref{eq:hj-cond-law}, the conditional law of $h_j\mid x$ converges to $\mathcal N(0,\nu^2)$,
and thus $\psi_{n,j}$ converges in distribution to the deterministic limit
\begin{equation}
\label{eq:psi-limit}
\psi \;\stackrel{d}{=}\; |a|\,|\sigma'(\nu G)|,
\qquad
a\sim\mathcal N(0,1),\qquad G\sim\mathcal N(0,1).
\end{equation}
Let $F_\psi$ denote the CDF of $\psi$ on $[0,\infty)$ and quantile $Q_\psi$. Then the empirical CDF of $\{ \psi_{n,j} \}_{j=1}^n$,
$
\hat{F}_{\psi,n}(t) := \frac{1}{n} \sum_{j=1}^n\mathbf{1}\{ \psi_{n,j} \le t \},
$
satisfies
$
\hat{F}_{\psi,n}(t) \xrightarrow{\Prb} F_{\psi}(t),
$ at continuity points $t$ of $F_\psi$,


\paragraph{(A\ref{assum:noise}) Edge noise and asymptotic decoupling.}
Let $\xi_{n,ij}:=\theta_{ij}\sim\mathcal N(0,1)$, so $|\xi|$ has a continuous CDF on $[0,\infty)$.
The subtlety is that $\psi_{n,j}=|a_j||\sigma'(h_j)|$ depends on the entire column
$(\theta_{1j},\dots,\theta_{dj})$ through $h_j$, so exact independence between $\xi_{n,ij}$ and $\psi_{n,j}$
does not hold at finite $d$.
Nevertheless, in the present bounded-input regime, the dependence of $h_j$ on a \emph{single}
entry $\theta_{ij}$ is vanishing. Conditional on $x$, the pair $(\theta_{ij},h_j)$ is jointly Gaussian and
\begin{equation}
\mathrm{Corr}(\theta_{ij},h_j\mid x)=\frac{x_i}{\|x\|_2}.
\end{equation}
Since $|x_i|\le B$ a.s.\ and $\|x\|_2^2/d\to \nu^2>0$, we have
\begin{equation}
    \max_{1\le i\le d}\frac{|x_i|}{\|x\|_2}
\le \frac{B}{\|x\|_2}
\;=\;
O(d^{-1/2})
\;\longrightarrow\;0.
\end{equation}
Because $(\theta_{ij},h_j)\mid x$ is bivariate normal, correlation $\rho\to 0$ implies asymptotic
independence.
Consequently, for each fixed $(i,j)$,
$(\theta_{ij},\psi_{n,j})$ becomes asymptotically independent, conditioned on $x$. 
This decoupling is uniform in $i$ under the bound above. This justifies treating $\xi_{n,ij}$
as asymptotically independent of $\psi_{n,j}$ in our model above for SNIP.

\paragraph{(A\ref{assum:threshold}) Threshold stability.}

The main theorem assumes the pruning thresholds $\tau_n$ converge to a deterministic limit $\tau\in(0,\infty)$. In practical PaI methods such as SNIP, sparsity is enforced by
top-$\rho$ selection, so the threshold is the empirical $(1-\rho)$-quantile $\hat\tau_n$ of the scores. Let $\hat F_S$ denote the empirical CDF of the score array $\{S_{ij}\}_{i\le d,\;j\le n}$, and let $F_{S}$ be the deterministic limiting CDF.
As $(d,n)\to\infty$, $\hat F_S$ concentrates uniformly around $F_{S}$, i.e.,
$
\sup_{t\in\mathbb R}\big|\hat F_S(t)-F_{S}(t)\big|\xrightarrow{\Prb}0,
$
which in our factorised setting follows from the conditional i.i.d.\ structure of $\{S_{ij}\}$ given the row and column features (\ref{app_assum:noise}) together with a standard application of the Law of Large Number.

Define the theoretical threshold $\tau:=F_{S}^{-1}(1-\rho)$ and assume $F_{S}$ is continuous at $\tau$. Then $\hat\tau_n\to\tau$ in probability, and replacing $\hat\tau_n$ by the constant $\tau$ changes only a vanishing fraction of mask entries:
\begin{equation}
\frac{1}{dn}\sum_{i\le d}\sum_{j\le n}
\Big|\mathbf 1\{S_{ij}>\hat\tau_n\}-\mathbf 1\{S_{ij}>\tau\}\Big|
\xrightarrow{\Prb}0.
\end{equation}
Consequently, any graphon limit established for the deterministic-threshold mask $\mathbf 1\{S_{ij}>\tau\}$ transfers directly to the practical top-$\rho$ mask
$\mathbf 1\{S_{ij}>\hat\tau_n\}$.

\subsection{Empirical Verification}


We empirically illustrate Theorem~\ref{thm:graphon-convergence} by comparing finite-width SNIP masks to the predicted limit graphon.
For visualisation, we generate square masks $M_n\in\{0,1\}^{n\times n}$ at two fixed densities $\rho=\{0.1, 0.2\}$ for $n\in\{200,500,1000,2000,4000\}$. Following \citet{pham2025the}, we sort rows/columns by their latent factors ($\varphi_i$ for rows and $\psi_j$ for columns) and average the sorted masks over 100 random seeds to obtain an empirical edge-probability matrix.

Besides, we compute the limit theoretical graphon, $\cW(u, v)$, as a continuous function on the unit square $[0, 1]^2$.
To make the limit object computable, we consider i.i.d. standard normal inputs and weights initialisation.
Since inputs $x \sim \mathcal{N}(0, 1)$, the magnitude $|x|$ follows a standard Half-Normal (HN) distribution. We define $\varphi(u) = F_{\text{HN}}^{-1}(u) = \sqrt{2} \text{erf}^{-1}(u)$. 
The column factor is defined as $\psi = |a| |\sigma'(h)|$, where $a \sim \mathcal{N}(0, 1)$, and different activation functions $\sigma$ (ReLU, Tanh, Sigmoid).
We determine the quantile function $Q_\psi(v)$ for this product distribution via Monte Carlo simulation ($N=10^6$ samples).
The edge noise $|\xi_{ij}|$ is Half-Normal distributed with scale $\sigma_\theta = 1$ where $\theta \sim \mathcal{N}(0,1)$. Given a global threshold $\tau$ (solved numerically to match density $\rho$), the theoretical edge probability is 
$\mathcal{W}_{(u, v)} 
$ 
$= \mathbb{P}\left( \varphi(u) \psi(v) |\xi| > \tau \right) 
$ 
$= 1 - \text{erf}( \frac{\tau}{\varphi(u) Q_\psi(v) \sqrt{2}} ).
$

Figure~\ref{app_fig:snip_0.1} and \ref{app_fig:snip_0.2} shows that the averaged empirical masks become progressively smoother and rapidly approach the theoretical $\cW$ as $n$ increases; by $n\ge 2000$ the empirical and theoretical patterns are visually indistinguishable. This supports the interpretation of PaI masks as finite-sample
discretisations of a deterministic graphon limit.

\begin{figure*}[h]
    \centering
    \includegraphics[width=0.8\linewidth]{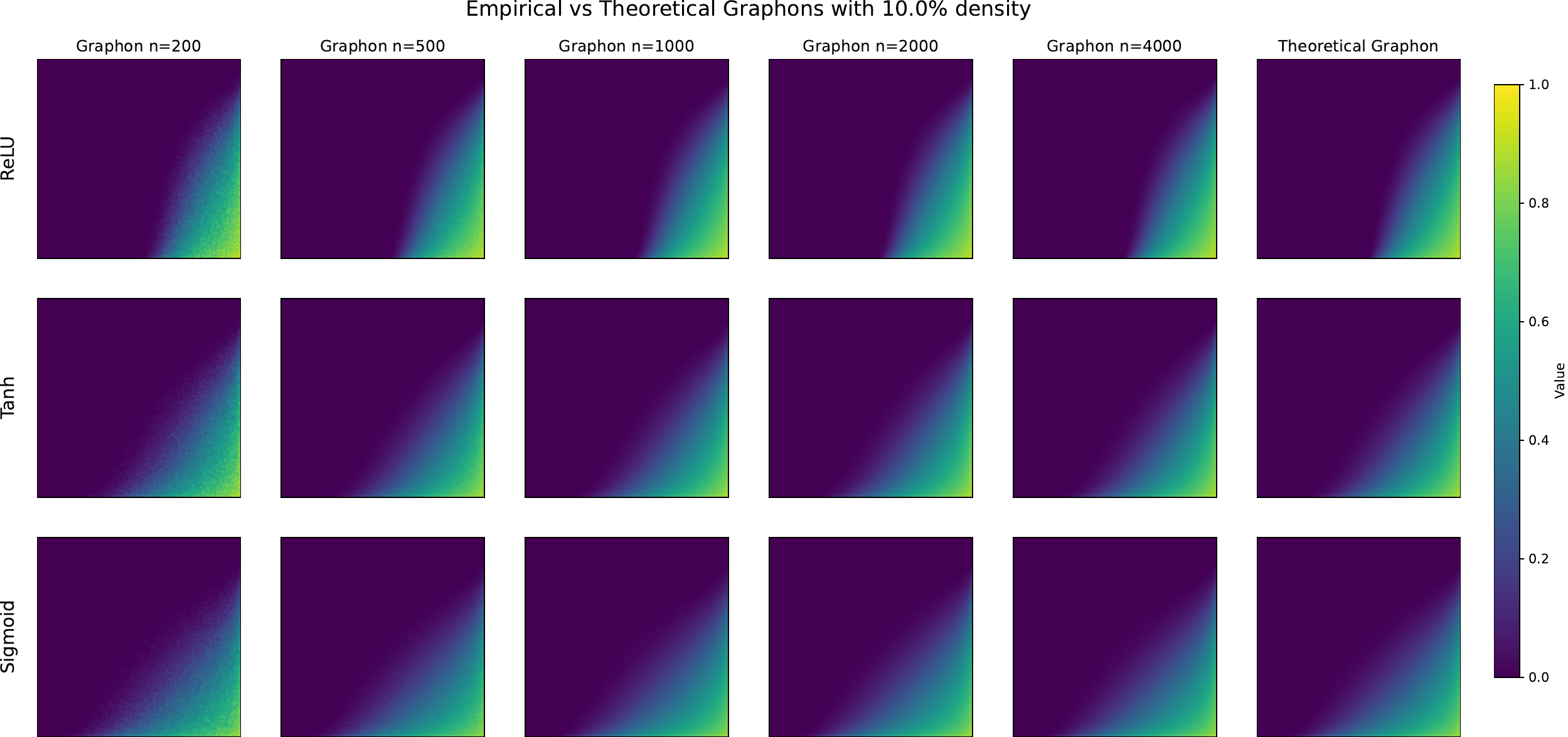}
    \caption{\textbf{Visual Convergence to the Graphon Limit.} We compare empirical masks with different activation functions at increasing widths ($n=200, 500, 1000, 2000, 4000$) against the analytically computed Theoretical Graphon. The density is fixed at $\rho=0.1$.}
    \label{app_fig:snip_0.1}
    \vspace{-0.4cm}
\end{figure*}

\begin{figure*}[h]
    \centering
    \includegraphics[width=0.8\linewidth]{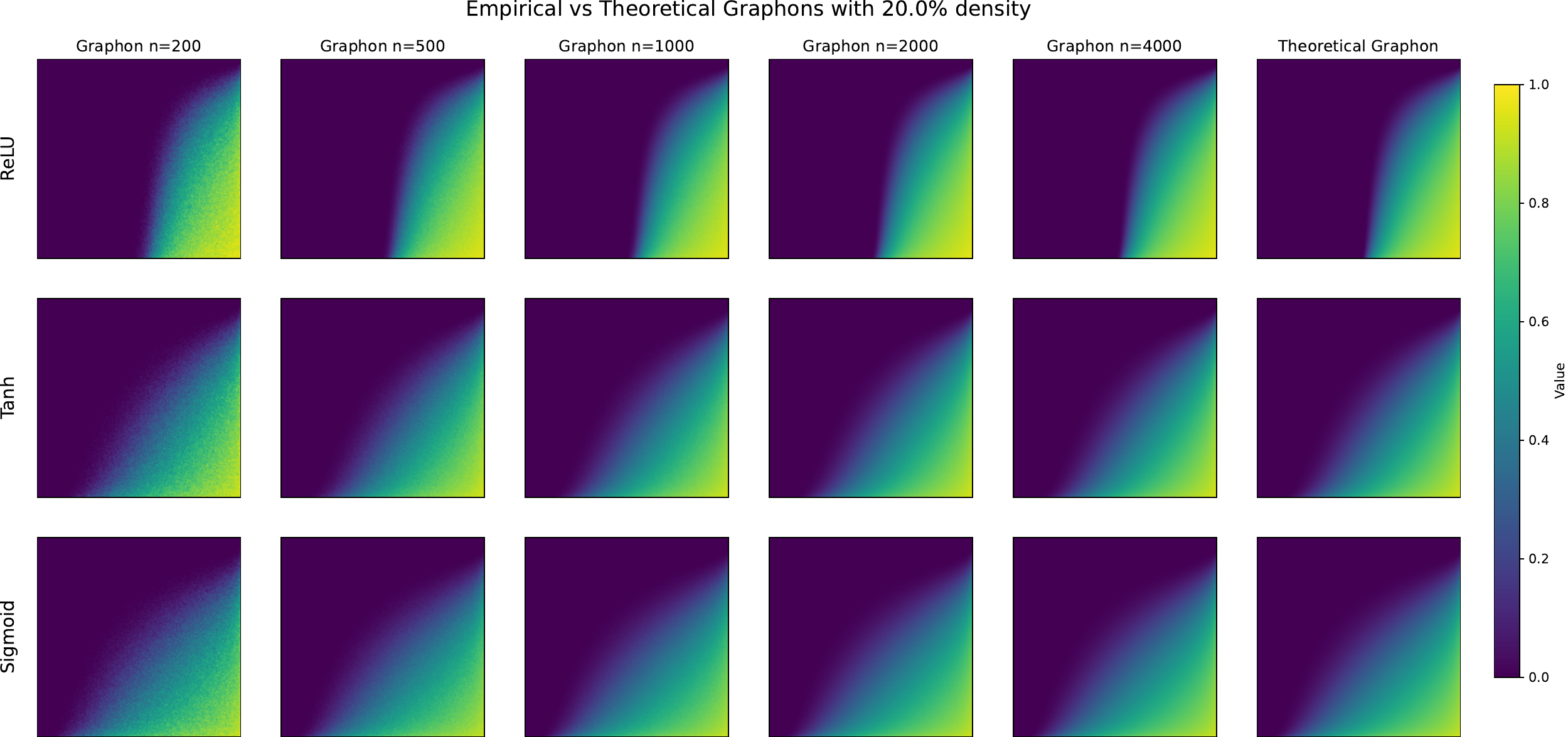}
    \caption{\textbf{Visual Convergence to the Graphon Limit.} We compare empirical masks with different activation functions at increasing widths ($n=200, 500, 1000, 2000, 4000$) against the analytically computed Theoretical Graphon. The density is fixed at $\rho=0.2$.}
    \label{app_fig:snip_0.2}
    \vspace{-0.4cm}
\end{figure*}
\vspace{-0.5cm}

%% file: grasp_case.tex
\section{Realisation of GraSP in the Factorised Saliency Model}
\label{app:grasp-provable}

In this appendix, we show that GraSP can be realised into the Factorised Saliency Model by an explicit decomposition of the Hessian--gradient product in the same setting as in Appendix~\ref{app:snip-provable}. We treat both (i) the original signed GraSP pruning score and
(ii) its magnitude variant, and show that, up to an $(i,j)$-independent scalar, they are asymptotically rank-equivalent to the corresponding signed or magnitude SNIP scores. 
The discrepancy is either identically zero for piecewise-linear activations, or vanishes uniformly as $n\to\infty$.

Recall that GraSP assigns each parameter a signed score based on a Hessian--gradient product. Let $\theta\in\mathbb R^{dn}$ be the vector of first-layer weights, and define the gradient and Hessian of the loss w.r.t.\ $\theta$ as
\begin{equation}
s(\theta):=\nabla_\theta L(\theta)\in\mathbb R^{dn},
\qquad
H(\theta):=\nabla_\theta^2 L(\theta)\in\mathbb R^{dn\times dn}.
\end{equation}
GraSP uses the elementwise product $Hs$ to score coordinates. In matrix indexing, define the signed GraSP score as
\begin{equation}
\label{eq:grasp-raw-score}
Z^{(n)}_{ij} := -\,\theta_{ij}\,(Hs)_{ij},\qquad 1\le i\le d,\ 1\le j\le n.
\end{equation}

\paragraph{Two pruning conventions.}
We consider:

\begin{enumerate}
\item \textbf{(Signed GraSP).}
Define the signed retention saliency
\begin{equation}
    S^{\mathrm{sgn}}_{ij}:= -Z^{(n)}_{ij} = \theta_{ij}(Hs)_{ij}.
\end{equation}
Then the signed GraSP mask is
\begin{equation}
\label{eq:grasp-signed-mask}
M^{\mathrm{sgn}}_{ij} := \mathbf 1\{ S^{\mathrm{sgn}}_{ij}>\tau^{\mathrm{sgn}}_n\}.
\end{equation}
Note that  $\tau^{\mathrm{sgn}}_n$ may be negative. This variant retains
weights based on the signed influence captured by $Hs$.

\item \textbf{(Magnitude GraSP).}
Define the magnitude saliency
\begin{equation}
    S^{\mathrm{mag}}_{ij}:=|Z^{(n)}_{ij}|.
\end{equation}
Then the magnitude GraSP mask is
\begin{equation}
\label{eq:grasp-mag-mask}
M^{\mathrm{mag}}_{ij} := \mathbf 1\{ S^{\mathrm{mag}}_{ij}>\tau^{\mathrm{mag}}_n\},
\qquad \tau^{\mathrm{mag}}_n\ge 0.
\end{equation}
This variant retains weights with large influence regardless of sign and fits the nonnegative Factorised Saliency Model most directly.
\end{enumerate}

\begin{remark}
If masks are produced by top-$\rho$ selection, then Eqn. \eqref{eq:grasp-signed-mask}
and Eqn. \eqref{eq:grasp-mag-mask} correspond to taking $\tau^n$ to be the empirical
$(1-\rho)$-quantile of $\{S_{ij}\}$. All results below apply to either form, via
the same threshold-replacement argument as in Appendix~B.4.
\end{remark}

\subsection{Hessian factorisation and the global scalar $c_n$}
\label{app:grasp-provable:hessian}
Let us write the Jacobian of the scalar output w.r.t.\ $\theta$ as
$J:=\nabla_\theta f_n(x)\in\mathbb R^{dn}$. By the chain rule,
$
g=\nabla_\theta L=\delta\,\nabla_\theta f_n=\delta\,J.
$
For squared loss, we have 
$
H=\nabla_\theta^2 L= J J^\top + \delta\,\nabla_\theta^2 f_n.
$
Therefore the Hessian--gradient product decomposes as
\begin{equation}
\label{eq:Hg-decomp}
Hg = (J J^\top)(\delta J) + \delta(\nabla_\theta^2 f_n)(\delta J)
= \underbrace{\|J\|_2^2}_{=:c_n}\,g \;+\; \underbrace{\delta^2(\nabla_\theta^2 f_n)J}_{=:R}.
\end{equation}
The key point is that
$
c_n=\|J\|_2^2
$
is a single scalar, which is independent of $(i,j)$.

\paragraph{Explicit formula for $c_n$.}
In our model,
\begin{equation}
    J_{ij}=\frac{\partial f_n}{\partial \theta_{ij}}
=\frac1{\sqrt n}a_j\sigma'(h_j)\cdot\frac1{\sqrt d}x_i,
\end{equation}
so that
\begin{equation}
\label{eq:cn-explicit}
c_n=\sum_{i=1}^d\sum_{j=1}^n J_{ij}^2
=\Big(\frac{\|x\|_2^2}{d}\Big)\cdot
\Big(\frac1n\sum_{j=1}^n a_j^2\sigma'(h_j)^2\Big).
\end{equation}

\paragraph{Explicit formula for the remainder $R$.}
A direct differentiation shows that the only nonzero second derivatives couple
weights within the same neuron:
\begin{equation}
\frac{\partial^2 f_n}{\partial \theta_{ij}\,\partial w_{i'j'}}
=\mathbf 1\{j=j'\}\cdot
\frac1{\sqrt n}\,a_j\sigma''(h_j)\cdot\frac1d\,x_i x_{i'}.
\end{equation}
Using $J_{i'j}=\frac1{\sqrt n}a_j\sigma'(h_j)\frac1{\sqrt d}x_{i'}$ yields
\begin{equation}
\label{eq:R-explicit}
R_{ij}
=\delta^2\sum_{i'=1}^d \frac{\partial^2 f_n}{\partial \theta_{ij}\partial w_{i'j}}\,J_{i'j}
=\delta^2\cdot \frac{a_j^2}{n}\,\sigma''(h_j)\sigma'(h_j)\cdot
\Big(\frac{\|x\|_2^2}{d}\Big)\cdot\frac{x_i}{\sqrt d}.
\end{equation}

Combining Eqn. \eqref{eq:grasp-raw-score} and Eqn. \eqref{eq:Hg-decomp} gives the entrywise score decomposition
\begin{equation}
\label{eq:grasp-score-decomp}
Z^{(n)}_{ij}
=-c_n\underbrace{(\theta_{ij}g_{ij})}_{\text{SNIP signed score}}
-\underbrace{\theta_{ij}R_{ij}}_{=:E_{ij}}.
\end{equation}
Recall from Appendix~\ref{app:snip-provable} that the SNIP score magnitude is
$|\theta_{ij}g_{ij}|=\frac{|\delta|}{\sqrt{nd}}|x_i|\,|a_j|\,|\sigma'(h_j)|\,|\theta_{ij}|$.

\subsection{Lower-order remainder bounds }
\label{app:grasp-provable:remainder}

We consider two regimes.

\paragraph{Case 1: Piecewise-linear activations.}
If $\sigma$ is piecewise linear (e.g.\ ReLU or leaky-ReLU), then
$\sigma''(t)=0$ for all $t\ne 0$. Since each $h_j$ has a continuous
distribution at random initialization, $\mathbb P(h_j=0)=0$, hence
$\sigma''(h_j)=0$  for every $j$. By Eqn. \eqref{eq:R-explicit},
\begin{equation}
R\equiv 0\quad\text{a.s.}\qquad\Rightarrow\qquad Hg=c_n g.
\end{equation}
Thus $Z^{(n)}_{ij}=-c_n(\theta_{ij}g_{ij})$ exactly, i.e.\ GraSP is an
entrywise independent rescaling of SNIP.

\paragraph{Case 2: Smooth activations.}
Assume $\sigma$ is twice differentiable with bounded second derivative
$
\|\sigma''\|_\infty:=\sup_{t\in\mathbb R}|\sigma''(t)|<\infty,
$
and assume the nondegeneracy condition
\begin{equation}
\label{eq:nondeg-c0}
\mu_1:=\mathbb E\!\left[a^2\sigma'(\nu G)^2\right] > 0,
\qquad a,G\stackrel{\text{i.i.d.}}{\sim}\mathcal N(0,1),
\qquad \nu^2:=\mathbb E[X^2].
\end{equation}
Here $\nu^2$ is the same deterministic variance limit used in the SNIP verification; see Appendix~\ref{app:snip-provable:verify}.

Define $\bar c_n:=\frac1n\sum_{j=1}^n a_j^2\sigma'(h_j)^2$, so
$c_n=(\|x\|_2^2/d)\bar c_n$ by Eqn. \eqref{eq:cn-explicit}. Then $\bar c_n\to\mu_1$
in probability by the same argument used for the SNIP
column features in Appendix~\ref{app:snip-provable:verify}. In particular, for any $\eta\in(0,\mu_1)$,
\begin{equation}
\label{eq:cn-lower}
\mathbb P(\bar c_n\ge \mu_1-\eta)\to 1.
\end{equation}

Next, we compare the remainder to the leading term. Using Eqn. \eqref{eq:R-explicit}
and $g_{ij}=\delta J_{ij}=\delta\cdot \frac1{\sqrt n}a_j\sigma'(h_j)\frac1{\sqrt d}x_i$,
we obtain
\begin{equation}
\label{eq:R-to-g-ratio}
\frac{|R_{ij}|}{|c_n g_{ij}|}
=
\frac{|\delta|}{\sqrt n}\cdot
\frac{|a_j|\,|\sigma''(h_j)|}{\bar c_n}.
\end{equation}
Thus, on the event $\{\bar c_n\ge \mu_1/2\}$ and using $|\sigma''(h_j)|\le\|\sigma''\|_\infty$,
\begin{equation}
\label{eq:R-to-g-ratio-bound}
\max_{i\le d,\,j\le n}\frac{|R_{ij}|}{|c_n g_{ij}|}
\;\le\;
\frac{2\|\sigma''\|_\infty}{\mu_1}\cdot
\frac{|\delta|}{\sqrt n}\cdot \max_{j\le n}|a_j|.
\end{equation}

Finally, $\max_{j\le n}|a_j|=O(\sqrt{\log n})$ by Gaussian maxima,
and $\delta=f_n(x)-y=O(1)$ at initialization under the $1/\sqrt n$
output scaling and bounded-input regime, as $f_n(x)$ is sub-Gaussian
conditioned on $x$. Therefore the right-hand side of Eqn.
\eqref{eq:R-to-g-ratio-bound} is $O(\sqrt{\log n/n})\to 0$.
Combining with Eqn. \eqref{eq:cn-lower}, we conclude the uniform vanishing:
\begin{equation}
\label{eq:uniform-vanish}
\max_{i\le d,\,j\le n}\frac{|R_{ij}|}{|c_n g_{ij}|}\xrightarrow{\mathbb P}0
\qquad\text{as }n\to\infty.
\end{equation}

\paragraph{Consequence: Asymptotic rank-equivalence to SNIP.}
By Eqn. \eqref{eq:grasp-score-decomp} and Eqn. \eqref{eq:uniform-vanish},
\[
Z^{(n)}_{ij} = -c_n(\theta_{ij}g_{ij})\cdot\big(1+o(1)\big)
\quad\text{uniformly over the entries}.
\]
Therefore the signed score $S^{\mathrm{sgn}}_{ij}=-Z^{(n)}_{ij}$ is uniformly rank-equivalent to the signed SNIP score $\theta_{ij}s_{ij}$, 
while the magnitude saliency $S^{\mathrm{mag}}_{ij}=|Z^{(n)}_{ij}|$ is uniformly rank-equivalent to the magnitude SNIP score $|\theta_{ij}s_{ij}|$.

\subsection{Realising GraSP in the factorised saliency model and assumption verification}
\label{app:grasp-provable:assumptions}

The leading order GraSP score depends on $(i,j)$ only through the factor $\theta_{ij}g_{ij}$ up to the global scalar $c_n$. In particular, the leading term factorises exactly as in the SNIP appendix:
\begin{equation}
    \theta_{ij}g_{ij}
=\frac{\delta}{\sqrt{nd}}\,
x_i\,
\big(a_j\,\sigma'(h_j)\big)\,
\theta_{ij},
\end{equation}
Thus, the signed GraSP fits a \emph{signed} factorised model
\begin{equation}
S^{\mathrm{sgn}}_{n,ij}=\varphi_{n,i}\psi_{n,j}\xi_{n,ij},
\qquad
M^{\mathrm{sgn}}_{n,ij}=\mathbf 1\{S^{\mathrm{sgn}}_{n,ij}>\tau^{\mathrm{sgn}}_n\},
\end{equation}
with $\varphi_{n,i}=x_i$, $\psi_{n,j}=a_j\sigma'(h_j)$, $\xi_{n,ij}=\theta_{ij}$.

Meanwhile, with magnitude GraSP, we take absolute values yields
\begin{equation}
    |\theta_{ij}s_{ij}| \ \propto\
|x_i|\,
\big(|a_j|\,|\sigma'(h_j)|\big)\,
|\theta_{ij}|,
\end{equation}
which is exactly of the non-negative factorised saliency model form:
\begin{equation}
S^{\mathrm{mag}}_{n,ij}=\varphi_{n,i}\psi_{n,j}|\xi_{n,ij}|,
\qquad
M^{\mathrm{mag}}_{n,ij}=\mathbf 1\{S^{\mathrm{mag}}_{n,ij}>\tau^{\mathrm{mag}}_n\},
\quad \tau^{\mathrm{mag}}_n\ge 0.
\end{equation}

Therefore, the verification of Assumptions~4.1--4.5 for GraSP
follows the same template as SNIP (Appendix~B.2): row-profile quantile
consistency for $\varphi_{n,i}$, empirical CDF convergence for $\psi_{n,j}$,
asymptotic decoupling of $\xi_{n,ij}$ from $\psi_{n,j}$ under bounded inputs,
and threshold stability/threshold replacement for top-$\rho$ pruning (Appendix B.2--B.4).
The only additional ingredient for GraSP is the remainder control
Eqn. \eqref{eq:uniform-vanish}, ensuring that the Hessian correction does not alter
the limiting ranking and mask.

\subsection{Empirical verification and theoretical graphon details}
\label{app:grasp-provable:empirical}

\begin{figure}[t]
    \centering
    \includegraphics[width=\textwidth]{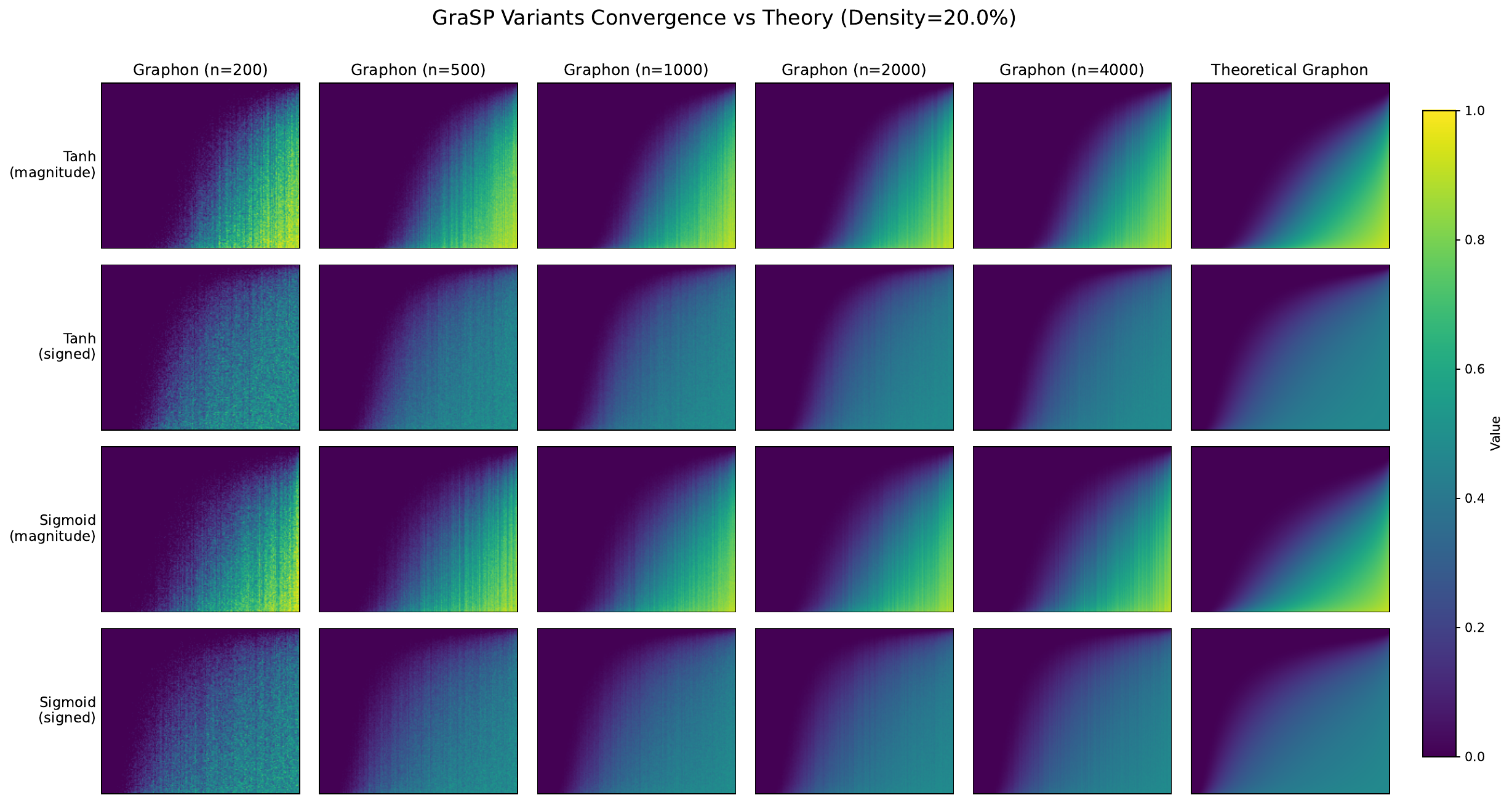}
    \caption{\textbf{Graphon Convergence of GraSP Variants.} 
    Empirical pruning masks for GraSP at increasing widths ($n \in \{200, \dots, 4000\}$) with density $\rho=20\%$. We compare \textbf{Magnitude GraSP} (rows 1, 3) and \textbf{Signed GraSP} (rows 2, 4) for Tanh and Sigmoid activations. 
    }
    \label{fig:grasp_convergence}
    \vspace{-0.5cm}
\end{figure}

We empirically validate the convergence of GraSP to its theoretical limit using the setup described in Appendix~\ref{app:snip-provable}. We vary network width $n \in \{200, \dots, 4000\}$ and fix density $\rho=0.2$. We exclude ReLU activations here, as the second derivative vanishes ($\sigma'' \equiv 0$) almost everywhere, rendering the Hessian term zero in standard implementations; instead, we focus on Tanh and Sigmoid to verify the non-trivial asymptotic equivalence in the smooth regime.
Figure~\ref{fig:grasp_convergence} presents the results. To visualise the latent structure, rows and columns are sorted by the theoretical latent factors $|\varphi(u)|$ and $\psi(v)$ respectively.

\paragraph{Convergence to the limit.}
For both activations, the empirical masks smooth out as $n$ increases, becoming visually indistinguishable from the theoretical prediction at $n=4000$. This confirms that despite the complex Hessian interactions, the GraSP mask converges to a deterministic graphon.

\paragraph{Theoretical Graphon Details:}
While our derivation in Appendix~\ref{app:grasp-provable:remainder} proves that Signed and Magnitude GraSP each is rank-equivalent to its corresponding SNIP variant; the two induce different link functions, 
Figure~\ref{fig:grasp_convergence} reveals a distinct visual difference: the \textit{Signed} variant (rows 2 and 4) exhibits a much wider transition boundary than the \textit{Magnitude} variant.
This difference is explained by the specific link functions induced by their respective selection rules. Let $\varphi(u)\psi(v)$ be the latent signal strength and $\xi \sim \mathcal{N}(0, 1)$ be the edge noise.

\begin{itemize}
    \item \textbf{Magnitude GraSP.} 
    This variant retains edges where $|\varphi(u)\psi(v)\cdot \xi| > \tau^{\text{mag}}$. The limiting edge probability is:
    \begin{equation}
    \label{eq:theory-mag}
    W^{\text{mag}}(u,v) = \mathbb{P}\left(|\xi| > \frac{\tau^{\text{mag}}}{\varphi(u)\psi(v)}\right) 
    = 1 - \operatorname{erf}\left( \frac{\tau^{\text{mag}}}{\varphi(u)\psi(v)\sqrt{2}} \right).
    \end{equation}
    The error function decays exponentially fast. Furthermore, to target a sparsity of $\rho=0.2$, the threshold $\tau^{\text{mag}}$ must be relatively large (far into the tails), resulting in a fast transition from $W \approx 1$ to $W \approx 0$ as $\varphi(u)\psi(v)$ decreases.

    \item \textbf{Signed GraSP.}
    This variant retains edges where $\varphi(u)\psi(v)\cdot \xi > \tau^{\text{sgn}}$. The limiting edge probability is:
    \begin{equation}
    \label{eq:theory-sgn}
    W^{\text{sgn}}(u,v) = \mathbb{P}\left(\xi > \frac{\tau^{\text{sgn}}}{\varphi(u)\psi(v)}\right) 
    = 1 - \Phi\left( \frac{\tau^{\text{sgn}}}{\varphi(u)\psi(v)} \right),
    \end{equation}
    where $\Phi$ is the Standard Normal CDF. To achieve the \emph{same} density (i.e., $\rho=0.2$) using only the positive tail, the signed threshold $\tau^{\text{sgn}}$ must be significantly lower (closer to the mean) than $\tau^{\text{mag}}$. Operating closer to the probability mass center means that small changes in the signal $\varphi(u)\psi(v)$ result in more gradual changes in edge probability.
\end{itemize}

The rightmost column of Figure~\ref{fig:grasp_convergence} plots these exact functions. The agreement between the $n=4000$ empirical masks and their respective theoretical counterparts confirms that the wider spread in Signed GraSP is not an artifact of slow convergence, but the mathematically correct limiting behavior of a one-sided selection rule.

%% file: uat_proof.tex
\section{Proofs of Results in Section~\ref{sec:uat}}
\label{app:uat-proof}

In this appendix, we prove Lemma~\ref{lem:uat:dense-core}
and Theorem~\ref{thm:uat:active}.
Throughout, the mask is defined by
$
M_{ij}=\mathbf 1\{\varphi_{i}\,\psi_{j}\,|\xi_{ij}|>\tau\},
$
and conditioned on $(\varphi,\psi)$ the noises $(\xi_{ij})_{i\le d,j\le n}$ are independent with common law $P_\xi$.
Recall the survival link $g(\tau,z):=\mathbb P(z|\xi|>\tau)$ so that
\begin{equation}
\mathbb P\bigl(M_{ij}=1 \,\big|\, \varphi,\psi\bigr)=g(\tau,\varphi_{i}\psi_{j}).
\end{equation}
We work in the sorted coordinates $u_i=i/d$ and $v_j=j/n$ where
$\max_{i\le d}|\varphi_{i}-\varphi(u_i)|\to 0$ and $\max_{j\le n}|\psi_{j}-Q_\psi(v_j)|\to 0$,
and the limiting edge-probability kernel is
\begin{equation}
\cW(u,v)=g(\tau,\varphi(u)Q_\psi(v)).
\end{equation}

\subsection{A counting argument for the active column set}
\label{app:uat-proof:counting}

Let $V_\star\subset(0,1)$ be as in Assumption~\ref{ass:uat:active-rect} with $\lambda(V_\star)=\alpha>0$.
Define the set of \emph{active columns}
\begin{equation}
\label{eq:uat:Jstar}
J_n:=\{j\in[n]: v_j=j/n\in V_\star\}.
\end{equation}
Since $v_j=j/n$ is deterministic, we have $|J_n|/n\to \alpha$ provided $V_\star$ is Riemann measurable. that is $\partial V_\star$ has Lebesgue measure $0$, or $V_\star$ is a finite union of intervals.
In particular, for all sufficiently large $n$, it holds that
\begin{equation}
\label{eq:uat:Jstar-lb}
|J_n|\ \ge\ \frac{\alpha}{2}\,n.
\end{equation}

\subsection{Proof of Lemma~\ref{lem:uat:dense-core}}
\label{app:uat-proof:dense-core}



\begin{proof}[Proof of Lemma~\ref{lem:uat:dense-core}]
Fix $\tilde{n}\in\mathbb N$ and let $k:=|I|$.
Let $J_n$ be the deterministic active column set defined in Eqn. \eqref{eq:uat:Jstar}.
First, we show a uniform edge-probability lower bound on the active rectangle.
By Assumption~\ref{ass:uat:active-rect}, there exists $p_\star>0$ such that
$\cW(u,v)\ge p_\star$ for a.e.\ $(u,v)\in U_\star\times V_\star$.
Since $g(\tau,\cdot)$ is nondecreasing and continuous on bounded
intervals, and since $\varphi_{i}\to \varphi(u_i)$ and $\psi_{j}\to Q_\psi(v_j)$ uniformly, we can choose $n$ large
enough so that on an event $E_n$ with $\mathbb P(E_n)\to 1$,
\begin{equation}
\label{eq:uat:finite-edge-floor}
\min_{i\in I}\ \min_{j\in J_n}\ g(\tau,\varphi_{i}\psi_{j})
\ \ge\ \frac{p_\star}{2}.
\end{equation}
Secondly, we define good columns and compute their success probabilities.
For each $j\in J_n$, define the indicator that column $j$ is fully connected to the $k$ active inputs:
\begin{equation}
\label{eq:uat:Gj}
G_j:=\mathbf 1\Bigl\{ M_{ij}=1\ \text{for all}\ i\in I\Bigr\}.
\end{equation}
Let $\mathcal G:=\sigma(\varphi,\psi)$ be the sigma-field generated by the row or column features.
Conditional on $\mathcal G$, the edge noises are independent across $(i,j)$, hence the events
$\{M_{ij}=1\}_{i\in I}$ are independent for each fixed $j$. Therefore,
\begin{equation}
    \mathbb P(G_j=1\mid \mathcal G)
=\prod\nolimits_{i\in I}\mathbb P\bigl(M_{ij}=1\mid \mathcal G\bigr)
=\prod\nolimits_{i\in I} g(\tau,\varphi_{i}\psi_{j}).
\end{equation}
On the event $E_n$ from Eqn. \eqref{eq:uat:finite-edge-floor}, we obtain the lower bound
\begin{equation}
\label{eq:uat:good-prob}
\mathbb P(G_j=1\mid \mathcal G)\ \ge\ \Bigl(\frac{p_\star}{2}\Bigr)^k
\qquad \text{for all } j\in J_n.
\end{equation}

Moreover, conditional on $\mathcal G$, the random variables $(G_j)_{j\in J_n}$ are independent:
indeed, $G_j$ depends only on the noises $\{\xi_{ij}: i\in I\}$, which are disjoint across distinct columns $j$.

Next, we use Chernoff bound for the number of good columns. Particularly, we define the number of good columns
\begin{equation}
    N_{\mathrm{good}}:=\sum\nolimits_{j\in J_n} G_j.
\end{equation}
Conditionws on $\mathcal G$ and on $E_n$, the $(G_j)_{j\in J_n}$ are independent Bernoulli variables with
success probabilities bounded below as in Eqn. \eqref{eq:uat:good-prob}. Hence the conditional mean satisfies
\begin{equation}
\label{eq:uat:mu}
\mu_n:=\mathbb E[N_{\mathrm{good}}\mid \mathcal G]
=\sum\nolimits_{j\in J_n} \mathbb P(G_j=1\mid \mathcal G)
\ \ge\ |J_n|\Bigl(\frac{p_\star}{2}\Bigr)^k.
\end{equation}
Using the deterministic bound Eqn. \eqref{eq:uat:Jstar-lb}, we obtain on $E_n$:
\begin{equation}
\label{eq:uat:mu-lb}
\mu_n\ \ge\ \frac{\alpha}{2}n \Bigl(\frac{p_\star}{2}\Bigr)^k.
\end{equation}
We now apply a standard multiplicative Chernoff bound for sums of independent Bernoulli variables: for any $\delta\in(0,1)$,
\begin{equation}
\label{eq:uat:chernoff}
\mathbb P\Bigl(N_{\mathrm{good}}\le (1-\delta)\mu_n \,\Big|\, \mathcal G\Bigr)\ \le\ \exp\!\Bigl(-\frac{\delta^2}{2}\mu_n\Bigr).
\end{equation}
Take $\delta=1/2$. Then on $E_n$,
\begin{equation}
\mathbb P\Bigl(N_{\mathrm{good}}\le \tfrac12\mu_n \,\Big|\, \mathcal G\Bigr)\ \le\ \exp(-\mu_n/8).
\end{equation}
Then, we ensure at least $\tilde{n}$ good columns exist.
By Assumption~\ref{ass:uat:growth}, $(\alpha n)p_\star^k\to\infty$, the lower bound Eqn. \eqref{eq:uat:mu-lb} implies
$\mu_n\to\infty$, and therefore $\exp(-\mu_n/8)\to 0$. In particular, for all large $n$ we have $\mu_n/2\ge \tilde{n}$ and
\begin{equation}
\mathbb P\Bigl(N_{\mathrm{good}}<\tilde{n} \,\Big|\, \mathcal G\Bigr)
\ \le\
\mathbb P\Bigl(N_{\mathrm{good}}\le \tfrac12\mu_n \,\Big|\, \mathcal G\Bigr)
\ \le\ \exp(-\mu_n/8).
\end{equation}
Taking expectations and using $\mathbb P(E_n)\to 1$ yields
\begin{equation}
\mathbb P\bigl(N_{\mathrm{good}}\ge \tilde{n}\bigr)\ \to\ 1.
\end{equation}
On this event, choose any $J\subset J_n$ of size $\tilde{n}$ consisting of good columns; then by definition of $G_j$ we have
$M_{ij}=1$ for all $i\in I$ and $j\in J$, which proves the lemma.
\end{proof}

\subsection{Proof of Theorem~\ref{thm:uat:active}}
\label{app:uat-proof:thm}

\begin{proof}[Proof of Theorem~\ref{thm:uat:active}]
Fix $\varepsilon>0$. By the classical universal approximation theorem on $\mathbb R^k$ \cite{cybenko1989approximation}, recalled as Theorem \ref{thm:dense-uat} in our paper, there exists an integer
$\tilde{n}_\varepsilon<\infty$ and parameters $\{(a_r,\theta_r,b_r)\}_{r=1}^{\tilde{n}_\varepsilon}$ such that
\begin{equation}
\label{eq:uat:classical}
\sup_{u\in K}\Bigl|f(u)-\sum_{r=1}^{\tilde{n}_\varepsilon} a_r\,\sigma(\theta_r^\top u+b_r)\Bigr|<\varepsilon.
\end{equation}

Let $I$ be the active input set from Assumption~\ref{ass:uat:k-inputs} and define
$\tilde f(x)=f(x_{I})$ and $\tilde K=\{x\in\mathbb R^d:\ x_{I}\in K,\ \|x_{I^c}\|_\infty\le B\}$ as in the theorem.
By Lemma~\ref{lem:uat:dense-core}, with probability $1-o(1)$ there exists a set $J=\{j_1,\dots,j_{\tilde{n}_\varepsilon}\}\subset[n]$
such that $M_{ij_r}=1$ for all $i\in I$ and all $r\le \tilde{n}_\varepsilon$.
On this event, we construct $F_n\in\mathcal F_n(M)$ by choosing parameters $(a,b,\theta)$ in Eqn. \eqref{eq:uat:class} as follows:
\begin{itemize}
\item For each $r\le \tilde{n}_\varepsilon$, set $a_{j_r}:=a_r$ and $b_{j_r}:=b_r$.
\item For each $r\le \tilde{n}_\varepsilon$ and each $i\in I$, set $(\theta_{i j_r})_{i \in I}:=\theta_r$,
so that $\sum_{i\in I}\theta_{i j_r}x_i = \theta_r^\top x_{I}$.
\item Set all remaining parameters $a_j,b_j,\theta_{ij}$ (for $j\notin J$ or $i\notin I$) equal to $0$.
\end{itemize}
Because $M_{ij_r}=1$ for all $i\in I$, these weights are not masked out, and thus for all $x\in\mathbb R^d$, we have
\begin{equation}
F_n(x)
=\sum_{r=1}^{\tilde{n}_\varepsilon} a_r\,\sigma\!\Bigl(b_r+\sum_{i\in I}\theta_{i j_r}x_i\Bigr)
=\sum_{r=1}^{\tilde{n}_\varepsilon}a_r\,\sigma\!\bigl(b_r+\theta_r^\top x_{I}\bigr).
\end{equation}
Hence, for any $x\in \tilde K$ we have $x_{I}\in K$ and therefore by Eqn. \eqref{eq:uat:classical},
\begin{equation}
|F_n(x)-\tilde f(x)|
=\Bigl|\sum_{r=1}^{\tilde{n}_\varepsilon} a_r\sigma(\theta_r^\top x_{I}+b_r)-f(x_{I})\Bigr|
\le \sup_{u\in K}\Bigl|\sum_{r=1}^{\tilde{n}_\varepsilon} a_r\sigma(\theta_r^\top u+b_r)-f(u)\Bigr|
<\varepsilon.
\end{equation}
Taking the supremum over $x\in \tilde K$ gives $\sup_{x\in\tilde K}|F_n(x)-\tilde f(x)|<\varepsilon$ on the same high-probability
event. This proves
\begin{equation}
\mathbb P\Bigl(\exists\,F_n\in\mathcal F_n(M)\ \text{s.t.}\ \sup_{x\in\tilde K}|F_n(x)-\tilde f(x)|<\varepsilon\Bigr)\to 1.
\end{equation}
\end{proof}

%% file: gen_error_bound_proof.tex
\section{Proof of Theorem~\ref{thm:graphon-cor310}}
\label{app:gen_err_bound}
In this appendix, we prove Theorem~\ref{thm:graphon-cor310}. The argument works as follows:
in the infinite-width Graphon-NTK regime, training the graphon-masked network induces the same output predictor as kernel (S)GD with the Graphon-NTK Gram matrix $K_{\cW}$ on the fixed sample $S$; we can then
invoke \citet[Cor.~3.10]{cao2019generalization} with $K_{\cW}$ in place of the finite-width NTK Gram matrix.

\subsection{Preliminaries: predictions, Jacobian, and the empirical NTK Gram matrix}
\label{app:proof_thm61:prelim}

Let $S=\{(x_i,y_i)\}_{i=1}^m$ be i.i.d.\ samples from $\cD$.
Let $f_\theta:\cX\to\R$ denote the masked network predictor.
Define the prediction vector on the sample by
\begin{equation}
f(\theta) := \big(f_\theta(x_1),\dots,f_\theta(x_m)\big)^\top \in \R^m.
\end{equation}
Let the empirical risk be
\begin{equation}
\cL_S(\theta) := \frac1m\sum_{i=1}^m \ell\big(f_\theta(x_i),y_i\big),
\end{equation}
where $\ell$ is a differentiable surrogate loss like logistic or cross-entropy.
Let $J(\theta)\in\R^{m\times p}$ denote the Jacobian of $f(\theta)$ with respect to parameters $\theta\in\R^p$:
$J(\theta)_{i,:} = \nabla_\theta f_\theta(x_i)^\top$.
Define the empirical NTK Gram matrix as
\begin{equation}
K(\theta) := J(\theta)J(\theta)^\top \in \R^{m\times m}.
\end{equation}

\paragraph{Training dynamics.}
Under gradient flow $\dot\theta(t)=-\nabla_\theta \cL_S(\theta(t))$, the chain rule gives the exact prediction dynamics
\begin{equation}
\label{eq:exact_pred_dyn_app}
\dot f(t)
= -K(\theta(t))\,\nabla_f \cL_S\big(f(t)\big),
\end{equation}
where $\nabla_f\cL_S(f)\in\R^m$ denotes the gradient of $\cL_S$ w.r.t.\ the prediction vector.
For full-batch gradient descent $\theta_{t+1}=\theta_t-\eta\nabla_\theta\cL_S(\theta_t)$,
one obtains the standard first-order approximation in prediction space
\begin{equation}
\label{eq:gd_pred_dyn_app}
f_{t+1}\approx f_t - \eta\,K(\theta_t)\,\nabla_f\cL_S(f_t),
\end{equation}
up to a second-order term controlled by smoothness of $f(\theta)$ and the stepsize $\eta$.

\subsection{Graphon-NTK and lazy training regime}
\label{app:proof_thm61:regime}

Let $M_n$ be the PaI mask at width $n$, and suppose the associated step kernel converges to a limiting graphon
$W_{M_n}\to \cW$ (Section~\ref{sec:graphon-convergence}).
Let $\Theta_{\cW}$ denote the Graphon-NTK kernel defined in \citet{pham2025the}, and define the deterministic
Graphon-NTK Gram matrix on $S$ by
\begin{equation}
(K_{\cW})_{ij} := \Theta_{\cW}(x_i,x_j).
\end{equation}
We work under the following standard NTK lazy training assumption specialised to the graphon-masked setting.

\begin{assumption}[Infinite-width Graphon-NTK lazy training regime]
\label{assump:graphon_ntk_lazy_app}
Fix a training horizon $T\in\mathbb{N}$ for GD, or $T>0$ for gradient flow and a stepsize schedule.
As width $n\to\infty$, we assume:
\begin{enumerate}
\item \textbf{Deterministic kernel limit at initialization:}\quad
$K_n(\theta_{n,0}) \xrightarrow{\mathbb P} K_{\cW}$ on the fixed sample $S$.
\item \textbf{Kernel stability along training (lazy training):}\quad
$\sup_{t\le T}\|K_n(\theta_{n,t})-K_n(\theta_{n,0})\|_{\mathrm{op}}\xrightarrow{\mathbb P}0$.
\item \textbf{Kernelised output predictor:}
The network output predictor used in
\citet[Cor.~3.10]{cao2019generalization} converges in law to the corresponding output predictor obtained by running
kernel gradient descent using Gram matrix $K_{\cW}$ on $S$ with the same stepsize schedule.
\end{enumerate}
\end{assumption}

Assumption~\ref{assump:graphon_ntk_lazy_app} places our analysis in the Graphon-NTK lazy training regime, in which the
masked-network training dynamics on a fixed sample are governed by the limiting kernel $K_{\cW}$.
The assumption is supported by
the same mechanism as in classical NTK analyses: under NTK scaling, each individual parameter moves by a vanishing
amount as widths grow, which in turn keeps the network Jacobian, and hence the empirical NTK, nearly constant
throughout training \citep{lee2019wide, arora2019exact, chizat2019lazy}.

To see the intuition, let $f(\theta)\in\R^m$ be the vector of predictions on the training set and let
$J(\theta)\in\R^{m\times p}$ be the Jacobian with entries $J_{i,:}=\nabla_\theta f_\theta(x_i)^\top$.
The empirical NTK Gram matrix is $K(\theta)=J(\theta)J(\theta)^\top$.
Gradient flow satisfies the exact identity
\begin{equation}
\dot f(t) \;=\; -K(\theta(t))\,\nabla_f \cL_S\big(f(t)\big),
\end{equation}
so kernel-like dynamics follow once $K(\theta(t))$ remains close to $K(\theta(0))$.

In sparse networks, weights are still independent at initialisation but have variances modulated by the layer
graphons. Under bounded graphons and standard Lindeberg/LLN conditions, as used to establish the deterministic
Graphon-NTK limit \citep{pham2025the}, forward activations remain $O(1)$ and backpropagated sensitivities obey the
usual NTK scaling. Consequently, the gradient of the loss with respect to any single weight is of order
$O\!\big((n_{\ell-1}n_\ell)^{-1/2}\big)$ in layer $\ell$, so each parameter update is $o(1)$ as width grows.
Aggregating across finitely many training steps, this implies that weights remain close to their initial values in a
row-wise sense, pre-activations shift by $o(1)$, and therefore $\sigma'$ remain stable.
Since the Jacobian $J(\theta)$ is composed of products of these stable forward/backward factors, it follows that
\begin{equation}
\sup_{t\le T}\big\|J(\theta(t)) - J(\theta(0))\big\|_{\mathrm{op}} \;\to\; 0
\qquad\text{and}\qquad
\sup_{t\le T}\big\|K(\theta(t)) - K(\theta(0))\big\|_{\mathrm{op}} \;\to\; 0,
\end{equation}
for any fixed horizon $T$, yielding lazy training.
A complete proof in the graphon-modulated setting would follow the standard NTK blueprint, with two additional
technical steps: (i) concentration/CLT bounds for non-i.i.d.\ sums induced by the graphon variance profile, and
(ii) uniform control over neurons and layers of forward and backward moments to ensure no small subset of nodes
dominates the dynamics. Both are compatible with the bounded-graphon assumptions used to define the Graphon-NTK
limit \citep{pham2025the}.

\subsection{Proof of Theorem~\ref{thm:graphon-cor310}}
\label{app:proof_thm61:main}

We first recall from Section 3 of \cite{cao2019generalization} the definition of the associated output predictor $\hat{f}$ in Theorem~\ref{thm:graphon-cor310}. For a neural network with input $x \in \mathbb{R}^d$ and output $f_\theta(x)$
The associated output predictor $\hat{f}$ is defined as
\begin{equation}
    \hat{f}(x)
    := 
    {\bf1}\big\{y \cdot f_\theta(x) <0 \big\}.
\end{equation}

\begin{proof}[Proof of Theorem~\ref{thm:graphon-cor310}]
Fix $\delta\in(0,e^{-1})$ and assume $\lambda_{\min}(K_{\cW})\ge\lambda_0>0$.
By Assumption~\ref{assump:graphon_ntk_lazy_app}, in the infinite-width Graphon-NTK (lazy) regime, the output
predictor $\hat f$ produced by training the graphon-masked network coincides with the output predictor
produced by kernel gradient descent using Gram matrix $K_{\cW}$ on the same sample $S$, with the same
output convention as in \citet[Cor.~3.10]{cao2019generalization}.

Therefore, we apply \citet[Cor.~3.10]{cao2019generalization} with the Gram matrix $\Theta^{(L)}$ in their bound
replaced by $K_{\cW}$. This yields that, with probability at least $1-\delta$ over $S$,
\[
\mathbb E\!\left[\cL^{\cD}_{0\text{-}1}(\hat f)\right]
\;\le\;
C\,L\cdot \sqrt{\frac{y^\top K_{\cW}^{-1}y}{m}}
\;+\;
C\sqrt{\frac{\log(1/\delta)}{m}},
\]
where $y=(y_1,\dots,y_m)^\top$, $C>0$ is the constant from \citet{cao2019generalization}, and $L$ is the depth factor
appearing there. This is exactly the bound in Eqn. \eqref{eq:graphon-cor310}.
\end{proof}